\documentclass[11pt]{article}  

\usepackage{etex}

\usepackage{fullpage}

\usepackage{algorithm}
\usepackage{algorithmicx,algpseudocode}
\usepackage{amsmath}
\usepackage{amssymb}
\usepackage{amsthm}
\usepackage{array}
\usepackage{bbm}
\usepackage{cancel}
\usepackage{cmap}
\usepackage{enumerate}
\usepackage{enumitem}
\usepackage{fancyhdr}
\usepackage{mathdots}
\usepackage{mathtools}
\usepackage{mathrsfs}
\usepackage{hyperref}
\usepackage{stackrel}
\usepackage{stmaryrd}
\usepackage{tabularx}
\usepackage{ctable}
\usepackage{titlesec}
\usepackage{titletoc}
\usepackage{url}
\usepackage{verbatim}
\usepackage{wasysym}
\usepackage{wrapfig}
\usepackage{yhmath}
\usepackage[all,cmtip]{xy}

\usepackage{hyperref}
\usepackage[%
	firstinits=true,
	doi=false,
	url=false,
	isbn=false,
	backend=biber,
	style=alphabetic,hyperref]{biblatex}





\usepackage{algorithm}
\usepackage{algpseudocode}
\algnewcommand\algorithmicinput{\textbf{Input: }}
\algnewcommand\INPUT{\State\algorithmicinput}
\algnewcommand\algorithmicinitialize{\textbf{Initialize: }}
\algnewcommand\INIT{\State\algorithmicinitialize}
\algnewcommand\algorithmicrun{\textbf{Run: }}
\algnewcommand\RUN{\State\algorithmicrun}
\algnewcommand\algorithmicupdate{\textbf{Update: }}
\algnewcommand\UPDATE{\State\algorithmicupdate}
\algnewcommand\algorithmicset{\textbf{Set: }}
\algnewcommand\SET{\State\algorithmicset}
\algnewcommand\algorithmicquery{\textbf{Query: }}
\algnewcommand\QUERY{\State\algorithmicquery}
\algnewcommand\algorithmicoutput{\textbf{Output: }}
\algnewcommand\OUTPUT{\State\algorithmicoutput}

\newcommand{\ug}[1]{\textsf{\color{green} UG: #1}}

\def\norm#1{\mathopen\| #1 \mathclose\|}

\newcommand{\poly}{\mathop{\mbox{\rm poly}}}

\newcommand{\ignore}[1]{}

\def\bold0{\mathbf{0}}












\def\eps{\varepsilon}
\def\epsilon{\varepsilon}


\newcommand{\braces}[1]{\left\{#1\right\}}

\newcommand{\bra}[1]{\left[#1\right]}
\newcommand{\abs}[1]{\left|#1\right|}


\DeclareMathOperator*{\argmin}{argmin}



\newtheorem{theorem}{theorem}[section]
\newtheorem{thm}[theorem]{Theorem}

\newtheorem{asm}[theorem]{Assumption}
\newtheorem{cor}[theorem]{Corollary}

\newtheorem{df}[theorem]{Definition}
\newtheorem{definition}[theorem]{Definition}
\newtheorem{lem}[theorem]{Lemma}
\newtheorem{lemma}[theorem]{Lemma}
\newtheorem{proposition}[theorem]{Proposition}

\newtheorem*{rem*}{Remark}
\newtheorem{pr}[theorem]{Proposition}
\newtheorem*{pr*}{Proposition}

\theoremstyle{definition}

\DeclareMathOperator*\E{\mathbb{E}}
\newcommand\R{\mathbb{R}}

\newcommand\C{\mathbb{C}}

\newcommand\wh{\widehat}
\newcommand\wt{\widetilde}


\def\shownotes{1}  \ifnum\shownotes=1
\newcommand{\authnote}[2]{$\ll$\textsf{\footnotesize #1 notes: #2}$\gg$}
\else
\newcommand{\authnote}[2]{}
\fi








\newcommand{\cF}[0]{\mathcal{F}}




\newcommand{\N}[0]{\mathbb{N}}

\newcommand{\Pj}[0]{\mathbb{P}}









\newcommand{\al}[0]{\alpha}
\newcommand{\be}[0]{\beta}
\newcommand{\ga}[0]{\gamma}

\newcommand{\de}[0]{\delta}
\newcommand{\De}[0]{\Delta}
\newcommand{\ep}[0]{\varepsilon}

\newcommand{\la}[0]{\lambda}

\newcommand{\rh}[0]{\rho}

\newcommand{\Te}[0]{\Theta}

\newcommand{\Om}[0]{\Omega}
\newcommand{\si}[0]{\sigma}
\newcommand{\Si}[0]{\Sigma}
\newcommand{\ze}[0]{\zeta}


\newcommand{\nin}[0]{\not\in}
\newcommand{\opl}[0]{\oplus}

\newcommand{\ot}[0]{\otimes}

\newcommand{\sub}[0]{\subset}

\newcommand{\subeq}[0]{\subseteq}

\newcommand{\bs}[0]{\backslash}
\newcommand{\iy}[0]{\infty}



\newcommand{\rc}[1]{\frac{1}{#1}}
\newcommand{\prc}[1]{\pa{\rc{#1}}}
\newcommand{\ff}[2]{\left\lfloor\frac{#1}{#2}\right\rfloor}

\newcommand{\fc}[2]{\frac{#1}{#2}}

\newcommand{\pf}[2]{\pa{\frac{#1}{#2}}}




\newcommand{\ab}[1]{\left| {#1} \right|}
\newcommand{\an}[1]{\left\langle {#1}\right\rangle}
\newcommand{\ba}[1]{\left[ {#1} \right]}
\newcommand{\bc}[1]{\left\{ {#1} \right\}}

\newcommand{\ce}[1]{\left\lceil {#1}\right\rceil}
\newcommand{\fl}[1]{\left\lfloor {#1}\right\rfloor}

\newcommand{\pa}[1]{\left( {#1} \right)}

\newcommand{\ve}[1]{\left\Vert {#1}\right\Vert}

\newcommand{\set}[2]{\left\{{#1}:{#2}\right\}}


\newcommand{\ub}[2]{\underbrace{#1}_{#2}}







\newcommand{\amin}{\operatorname{argmin}}
















\newcommand{\Tr}[0]{\operatorname{Tr}}

\newcommand{\Vol}[0]{\operatorname{Vol}}
\providecommand{\cal}[1]{\mathcal{#1}}
\renewcommand{\cal}[1]{\mathcal{#1}}





\newcommand{\pull}[9]{
#1\ar@/_/[ddr]_{#2} \ar@{.>}[rd]^{#3} \ar@/^/[rrd]^{#4} & &\\
& #5\ar[r]^{#6}\ar[d]^{#8} &#7\ar[d]^{#9} \\}

\newcommand{\cmp}[9]{
\xymatrix{
#1 \ar[r]^{#4}{#5} \ar@/_2pc/[rr]^{#8}_{#9} & #2 \ar[r]^{#6}_{#7} & #3
}
}

\newcommand{\ha}[1]{\ar@{^(->}[#1]}
\newcommand{\ls}[1]{\ar@{-}[#1]}
\newcommand{\sj}[1]{\ar@{->>}[#1]}
\newcommand{\aq}[1]{\ar@{=}[#1]}
\newcommand{\acir}[1]{\ar@{}[#1]|-{\textstyle{\circlearrowright}}}
\newcommand{\acil}[1]{\ar@{}[#1]|-{\textstyle{\circlearrowleft}}}
\newcommand{\ard}[1]{\ar@{.>}[#1]}
\newcommand{\mt}[1]{\ar@{|->}[#1]}
\newcommand{\inm}[1]{\ar@{}[#1]|-{\in}}
\newcommand{\inr}{\ar@{}[d]|-{\rotatebox[origin=c]{-90}{$\in$}}}
\newcommand{\inl}{\ar@{}[u]|-{\rotatebox[origin=c]{90}{$\in$}}}


\newcommand{\maxr}[2]{\max_{\scriptsize \begin{array}{c}{#1}\\{#2}\end{array}}}

\newcommand{\sumo}[2]{\sum_{#1=1}^{#2}}
\newcommand{\sumz}[2]{\sum_{#1=0}^{#2}}
\newcommand{\prodo}[2]{\prod_{#1=1}^{#2}}



\newcommand{\scoltwo}[2]{
\left(\begin{smallmatrix}
{#1}\\
{#2}
\end{smallmatrix}\right)}
\newcommand{\coltwo}[2]{
\begin{pmatrix}
{#1}\\
{#2}
\end{pmatrix}}

\newcommand{\smatt}[4]{
\left(\begin{smallmatrix} 
{#1}&{#2}\\
{#3}&{#4}
\end{smallmatrix}\right)
}

\newcommand{\beq}[1]{\begin{equation}\llabel{#1}}
\newcommand{\eeq}[0]{\end{equation}}
\newcommand{\bal}[0]{\begin{align*}}
\newcommand{\eal}[0]{\end{align*}}
\newcommand{\ban}[0]{\begin{align}}
\newcommand{\ean}[0]{\end{align}}














\newcommand{\llabel}[1]{\label{#1}\text{\fixme{\tiny#1}}}


\newcommand{\arxiv}[1]{\url{http://www.arxiv.org/abs/#1}}



\newcommand{\kp}[0]{\fl{d\log_2(M)}}
\newcommand{\Lad}[0]{\fc{2}{\ln 2}d}
\newcommand{\Ladm}[0]{\fc{2}{\ln 2}(d+m)}
\newcommand{\Laldm}[0]{\fc{2}{\ln 2}\ell(d+m)}
\newcommand{\Bd}[0]{B_{\Lad}^d}
\newcommand{\KuB}[0]{K_0\cup B_{\Lad}^d}
\newcommand{\Lvg}[0]{\max_{0\le k\le k'} f_{A,\KuB}(k) +  \pa{\sumz k{k'-1}f_{A,K}(k)} \pa{\Lad+1}}
\newcommand{\maxdc}[0]{\max\bc{\fc{2}{\ln 2}d,C_0}}
\newcommand{\Lp}[0]{\max_{0\le k\le p(k'+1)-1} f_A(k)\maxdc
+ \pa{\sumz k{p(k'+1)-2} f_{A,K}(k)}\pa{\Lad+2}}
\newcommand{\maxprederrorJ}[0]{\max\bc{
\fc{C_\xi c^2t}{\mu}+ac, \fc{C_\xi}{c_2^2}, \rc{c_1^{\fc 1{\al+2}}c_2^{\fc 2{\al+2}}}\pa{a\mu^{\fc{1}{\al+2}} + \fc{C_\xi t^{\rc 2}}{\mu^{\rc2-\rc{\al+2}}}}
}}
\newcommand{\maxprederrorJA}[0]{\max\bc{
\fc{C_\xi c^2t}{\mu}+\ve{A}_2c, \fc{C_\xi}{c_2^2}, \rc{c_1^{\fc 1{\al+2}}c_2^{\fc 2{\al+2}}}\pa{\ve{A}_2\mu^{\fc{1}{\al+2}} + \fc{C_\xi t^{\rc 2}}{\mu^{\rc2-\rc{\al+2}}}}
}}
\newcommand{\maxprederrorJAT}[0]{\max\bc{
\fc{C_\xi c^2T}{\mu}+\ve{(A,B)}_2c, \fc{C_\xi}{c_2^2}, \rc{c_1^{\fc 1{\al+2}}c_2^{\fc 2{\al+2}}}\pa{\ve{(A,B)}_2\mu^{\fc{1}{\al+2}} + \fc{C_\xi T^{\rc 2}}{\mu^{\rc2-\rc{\al+2}}}}
}}
\newcommand{\kdm}[0]{\fl{(d+m)\log_2(M)}}
\newcommand{\kldm}[0]{\fl{\ell(d+m)\log_2(M)}}
\newcommand{\clbd}[0]{2^{r+1}C(k'+1)^r \pa{\Ladm(C+1)+C_0+2}+4C_u\pa{\Ladm+2}}
\newcommand{\Mbd}[0]{(T+1)^{r-1} C_AC_0+T^r C_A(C_BC_u+C_\xi)}
\newcommand{\maxdmc}[0]{\max\bc{\fc{2}{\ln 2}(d+m),C_0}}

\newcommand{\ctzz}[2]{\pa{
\begin{array}{c}
#1\\
\mathbf 0\\
\hline
#2\\
\mathbf 0
\end{array}
}}

\allowdisplaybreaks[2]

\DeclareFontFamily{U}{wncy}{}
    \DeclareFontShape{U}{wncy}{m}{n}{<->wncyr10}{}
    \DeclareSymbolFont{mcy}{U}{wncy}{m}{n}
    \DeclareMathSymbol{\Sh}{\mathord}{mcy}{"58} 
\newcommand{\citep}[1]{\cite{#1}} 
\usepackage{authblk}

\usepackage{etoolbox}

\newcommand{\citealt}[2][]{\cite[#1]{#2}}
\renewcommand{\llabel}[1]{\label{#1}}

\addbibresource{bib.bib}

\begin{document}

\title{No-Regret Prediction in Marginally Stable Systems}

\author[1]{Udaya Ghai}
\author[2]{Holden Lee}
\author[1]{Karan Singh}
\author[1]{Cyril Zhang}
\author[1]{Yi Zhang}

\affil[1]{Princeton University, Computer Science Department \authorcr
  \{\tt ughai,karans,cyril.zhang,y.zhang\}@cs.princeton.edu}
\affil[2]{Duke University, Mathematics Department \authorcr
  \tt holee@math.duke.edu}

\date{\today\\$\;$ \\(Version 3)\footnote{V1 appeared on February 6, 2020. V2 made formatting changes. V3 improved exposition, added Appendices \ref{a:ols-regret} and \ref{s:php}, improved/corrected Theorem \ref{t:orr} and resulting polynomial factors, and added Assumption \ref{asm:obs}. V3 appears in COLT 2020.}}
\maketitle

\begin{abstract}
We consider the problem of online prediction in a marginally stable linear dynamical system subject to bounded adversarial or (non-isotropic) stochastic perturbations. This poses two challenges. Firstly, the system is in general unidentifiable, so recent and classical results on parameter recovery do not apply. Secondly, because we allow the system to be marginally stable, the state can grow polynomially with time; this causes standard regret bounds in online convex optimization to be vacuous. In spite of these challenges, we show that the online least-squares algorithm achieves sublinear regret (improvable to polylogarithmic in the stochastic setting), with polynomial dependence on the system's parameters. This requires a refined regret analysis, including a structural lemma showing the current state of the system to be a small linear combination of past states, even if the state grows polynomially. By applying our techniques to learning an autoregressive filter, we also achieve logarithmic regret in the partially observed setting under Gaussian noise, with polynomial dependence on the memory of the associated Kalman filter.
\end{abstract}

\newcommand{\KF}[0]{\textup{KF}}

\section{Introduction}
\llabel{sec:intro}
We consider the problem of sequential state prediction in a linear time-invariant dynamical system, subject to perturbations:
$$x_t = Ax_{t-1} + Bu_{t-1} + \xi_t.$$
This is a central object of study in control theory and time-series analysis, dating back to the foundational work of Kalman \citep{kalman1960new}, and has recently received considerable attention from the machine learning community. In a typical learning setting, the system parameters $A, B$ are unknown, and only the past states $x_t$ and exogenous inputs $u_t$ are observed. Sometimes, another layer of difficulty is imposed: the \emph{latent state} $x_t$ can only be observed noisily or through a low-rank transformation. These models serve as an abstraction for learning from correlated data in stateful environments, and have helped to understand empirical successes in reinforcement learning and of recurrent neural networks.

Many recent works \citep{simchowitz2018learning,sarkar2019finite} are concerned with finite-sample \emph{system identification}, in which the matrices $A, B$ can be recovered due to structure in the perturbations or inputs. These results rely on matrix concentration inequalities and careful error propagation applied to classic primitives in linear system identification, and have settled some important statistical questions about these methods. However, as in the classical control theory literature, these theorems require unrealistic assumptions of i.i.d. isotropic random perturbations and recoverability of the system, and often require the user to select the ``exploration'' inputs $u_t$ \citep{simchowitz2019learning, dean2017sample}. Furthermore, under model misspecification, the guarantees of parameter identification pipelines break down.

Another line of work seeks to obtain more flexible guarantees via the online learning framework \citep{hazan2016introduction}. Here, the goal of parameter recovery is replaced with regret minimization, the excess prediction loss compared to the best-fit system parameters in hindsight \citep{Kozdoba2019OnLineLO, hazan17learning, hazan2018spectral}. This approach gives rise to algorithms which adapt to adversarially perturbed data and model misspecification, and can be extended beyond prediction to obtain new methods for robust control \citep{arora2018towards}. However, 
these algorithms can diverge significantly from the classical parameter identification pipeline. In particular, they can be \emph{improper}, in that they may use an intentionally misspecified (e.g. overparameterized) model. Thus, these algorithms can be incompatible with parameter recovery and downstream methods.

In this work, we show that the same algorithm used for parameter identification (online least squares) has a no-regret guarantee, even in the challenging setting of prediction under marginal stability and adversarial perturbations, where recovery is impossible. More precisely, in this setting, where the state is allowed to grow polynomially with time, we show that the regret of this algorithm is sublinear, with a polynomial dependence on the system's parameters. This does not follow from the usual analysis of online least squares: the magnitude of loss functions (and associated gradient bounds) can scale polynomially with time, causing standard regret bounds to become vacuous. Instead, we conduct a refined regret analysis, including a structural \emph{volume doubling} lemma showing $x_t$ to be a small linear combination of past states.

By replacing the worst-case structural lemma with a stronger martingale analysis, we also show a polylogarithmic regret bound for least squares in the stochastic setting. Again, this analysis does not go through parameter convergence, and thus applies in the setting of unidentifiable systems and non-isotropic noise. The same techniques allow us to prove a logarithmic regret bound in the partially observed setting under Gaussian noise, with polynomial dependence on the memory of the associated Kalman filter.

\paragraph{Paper structure.}
In Section~\ref{sec:prelims}, we formally introduce the problem and the natural \emph{online least squares} algorithm. In Section~\ref{s:related}, we give an overview of related work. In Section~\ref{s:results}, we state our main results. In Section~\ref{s:sketch}, we sketch the proofs. In Section~\ref{s:ols}, we show that a structural condition on a time series gives a regret bound for online least squares. We then prove our main theorems for fully observed LDS in the adversarial setting (Section \ref{s:proof}), fully observed LDS in the stochastic setting (Section~\ref{s:proof-stoch}), and for partially observed LDS in the stochastic setting (Section~\ref{s:proof-hidden}), through establishing this structural condition. Section~\ref{s:php} provides an alternative approach.

\section{Problem setting and algorithm}
\llabel{sec:prelims}

The problem of online state prediction for a LDS falls within the framework of online least squares. We first introduce the general problem of online least squares (Section~\ref{subsec:oco}), and then specialize to the prediction problem for a fully observed (Section~\ref{subsec:prelims-lds}) or partially observed (Section~\ref{s:prelims-hidden}) LDS.
Because the observations come from a sequential process, they have extra structure that we will leverage to obtain better guarantees than for 
black-box online least squares. 
In Section~\ref{subsec:marginal}, we describe the challenges associated with marginally stable systems.
\subsection{The online least squares problem}
\llabel{subsec:oco}


In the problem of online least squares, at each time $t$ we are given $x_t\in \R^{m}$, and asked to predict $y_t\in \R^n$. We choose a matrix $A_t\in \R^{n\times m}$ and predict $\wh y_t = A_tx_t\in \R^n $. The desired output $y_t$ is then revealed and we suffer the squared loss $\ve{\hat y_t-y_t}^2$. 
A natural goal in this setting is to predict as well as if we had known the best matrix in hindsight; hence, 
the performance metric is given by the \emph{regret} with respect to $A$, defined by 
\begin{align*}
R_T(A) &=
\sumo t{T} \ve{A_tx_t-y_t}_2^2 - 
\sumo t{T} \ve{Ax_t-y_t}_2^2.
\end{align*}
Define the regret with respect to a given set $\cal K\subeq \R^{n\times m}$ by $R_T = \sup_{A\in \cal K} R_T(A)$.
In general, we would like to achieve $R_T$ that is sublinear in $T$, or equivalently, average regret $\fc{R_T}{T}$ that converges to 0.
In some cases, we can do better, and achieve regret $R_T$ that is polylogarithmic in $T$.

A natural algorithm for online least squares is to choose $A_t$ that minimizes the total squared prediction error for all the pairs $(x_s,y_s)$, $1 \le s \le t-1$ seen so far, plus a regularization term:


\begin{algorithm}[H]
\begin{algorithmic}
\INPUT Regularization parameter $\mu$. 
\For{$t=1$ to $T$}
	\State Let $A_t = \amin_{A\in \R^{n\times m}} \pa{\mu \ve{A}_F^2 + \sumo s{t-1} \ve{Ax_s - y_s}^2}$.
	\State Predict $\wh y_t := A_tx_t$ and observe cost $\ve{\wh y_t-y_t}^2$.
\EndFor
\end{algorithmic}
 \caption{Online Least Squares Regression}
 \llabel{a:orr}
\end{algorithm}

We state the standard regret bound for Algorithm~\ref{a:orr}. Note that \cite{cesa2006prediction} show the $n=1$ case, but the proof goes through in the multi-dimensional case (Appendix~\ref{a:ols-regret}).
\begin{thm}[OLS regret bound; Thm. 11.7, \citealt{cesa2006prediction}]\llabel{t:orr}
In the online least squares setting, suppose that $\ve{x_t}\le M$ for all $t$. 
Then, Algorithm~\ref{a:orr} incurs regret
\begin{align*}
R_T (A)&\le 
\mu \ve{A}_F^2 + \max_{1\le t\le T}\ve{y_t-A_tx_t}_2^2m
\ln \pa{1+\fc{TM^2}m}.
\end{align*}
\end{thm}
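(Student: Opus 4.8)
\emph{Proof proposal.} The plan is to carry out the classical ``elliptic potential'' analysis of online ridge regression (the $n=1$ case is Thm.~11.7 of \cite{cesa2006prediction}), being careful with the multi-output case so that the loss factor comes out as $\max_t\ve{y_t-A_tx_t}_2^2$ rather than a sum of per-coordinate maxima. First I would note that $\mu\ve{A}_F^2$ and every loss $\ve{Ax_s-y_s}_2^2$ split across the $n$ rows of $A$, so Algorithm~\ref{a:orr} is just $n$ independent scalar online ridge regressions: the $i$-th instance predicts with the row $V_{t-1}^{-1}b_{t-1}^{(i)}$, where $V_t := \mu I_m + \sumo s t x_sx_s^\top$ and $b_t^{(i)} := \sumo s t (y_s)_i\,x_s$. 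The crucial point is that $V_t$ does not depend on $i$.

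Next I would set $F_t(A) := \mu\ve{A}_F^2 + \sumo s t\ve{Ax_s-y_s}_2^2$ and $L_t := \min_A F_t(A)$, so that $L_0 = 0$, $A_{t+1} = \amin_A F_t(A)$, and $L_T = \sumo t T (L_t - L_{t-1})$ telescopes. Using the closed form $L_t = \sum_i\bigl(\sumo s t (y_s)_i^2 - (b_t^{(i)})^\top V_t^{-1}b_t^{(i)}\bigr)$ and the Sherman--Morrison formula for $(V_{t-1}+x_tx_t^\top)^{-1}$, a direct per-coordinate computation of the increment, summed over $i$ (this is where coordinate-independence of $V_t$ is used), should give the exact identity
\[
\ve{A_tx_t-y_t}_2^2 - (L_t - L_{t-1}) \;=\; \frac{s_t}{1+s_t}\,\ve{A_tx_t-y_t}_2^2,\qquad s_t := x_t^\top V_{t-1}^{-1}x_t\ \ge 0 .
\]

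Summing this over $t$ and extracting the maximum residual, then bounding the potential sum by a $\log\det$ telescope — since $\det V_t = \det V_{t-1}(1+s_t)$, one has $\tfrac{s_t}{1+s_t} = 1 - \tfrac{\det V_{t-1}}{\det V_t}\le\log\tfrac{\det V_t}{\det V_{t-1}}$, hence $\sumo t T\tfrac{s_t}{1+s_t}\le\log\tfrac{\det V_T}{\det(\mu I_m)}\le m\log(1+TM^2/\mu)$, where the last step uses $\ve{x_t}\le M$ to bound every eigenvalue of $V_T$ by $\mu + TM^2$ — I obtain $\sumo t T\ve{A_tx_t-y_t}_2^2 - L_T \le m\log(1+TM^2/\mu)\,\max_{1\le t\le T}\ve{A_tx_t-y_t}_2^2$. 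Finally, for the comparator $A$ we have $L_T\le F_T(A) = \mu\ve{A}_F^2 + \sumo t T\ve{Ax_t-y_t}_2^2$; adding this to the previous inequality gives exactly $R_T(A)\le \mu\ve{A}_F^2 + m\log(1+TM^2/\mu)\,\max_{1\le t\le T}\ve{y_t-A_tx_t}_2^2$.

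The hard part will be the Sherman--Morrison bookkeeping behind the per-step identity: one must verify the coefficient is \emph{exactly} $\tfrac{s_t}{1+s_t}$ — a cruder ``be-the-leader'' comparison of $A_t$ against $A_{t+1}$ only yields $\tfrac{2s_t}{1+s_t}$, which is lossy by a constant factor — and one must keep the $n$ output coordinates coupled through the identity so that the final loss factor is the full $\ve{y_t-A_tx_t}_2^2$ rather than $\sum_i$ of residuals maximized separately over $t$. Everything else (the telescoping, the $\log(1-u)$ inequality, the eigenvalue bound) is routine.
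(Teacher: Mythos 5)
Your proposal is correct and follows essentially the same route as the paper's proof in Appendix~\ref{a:ols-regret}: decompose the regularized objective across the $n$ output rows, exploit the fact that the covariance $V_t$ is shared across rows so the per-step increment couples into the full residual norm $\ve{A_tx_t-y_t}_2^2$ (your exact coefficient $\tfrac{s_t}{1+s_t}=x_t^\top V_t^{-1}x_t$ is precisely the Bregman-divergence factor the paper computes before deferring to Thm.~11.7 of \cite{cesa2006prediction}), and finish with the log-determinant telescope. The only cosmetic difference is the last potential bound, where the stated $m\ln(1+TM^2/m)$ comes from an AM--GM bound on the trace of $V_T$ (with $\mu=1$) rather than your per-eigenvalue bound $m\ln(1+TM^2/\mu)$.
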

Thus, if there is a uniform bound on the prediction errors $\ve{y_t-A_tx_t}$, online least squares achieves logarithmic regret. This follows immediately in the usual OLS setting, where $x_t, y_t,$ are bounded and $A_t$ is restricted to a bounded set. However, in the case where they can grow with time, as in marginally stable systems, a more sophisticated analysis will be necessary to get sublinear regret.

\subsection{Prediction in fully-observed linear dynamical systems}
\llabel{subsec:prelims-lds}

A special case of online least squares is state prediction in a time-invariant linear dynamical system (LDS), defined as follows. 
Given an initial state $x_0 \in \R^d$, matrices $A \in \R^{d \times d}$ and $B\in \R^{d\times m}$, 
inputs $u_0,\ldots, u_{T-1}\in \R^m$ and 
a sequence of perturbations $\xi_1, \ldots, \xi_T \in \R^d$, the LDS produces a time series of states $x_1, \ldots, x_T \in \R^d$ according to the following dynamics:
\begin{align}\llabel{e:lds}
 x_t = A x_{t-1} + B u_{t-1} + \xi_t, \qquad 1\le t\le T.
\end{align}
This setting generalizes the \emph{linear Gaussian model} from control theory and time-series analysis, in which each $\xi_t$ is drawn i.i.d. from a Gaussian distribution. Aside from modeling disturbances, $\xi_t$ can also represent model uncertainty or misspecification.

In the prediction problem for LDS, we are asked to predict $x_{t+1}$ as a linear function of the current $x_t$ and the input $u_t$. 
We can treat this as an online least squares problem, by casting $(x_t, u_t)$ as the input at time $t$, and $x_{t+1}$ as the desired output. At each step, the learner produces $A_t$ and $B_t$ and predicts $\wh x_{t+1} = A_t x_t + B_t u_t$. Thus, we can adapt Algorithm~\ref{a:orr} to this setting with the substitution $x_t\mapsfrom \scoltwo{x_t}{u_t}$, $y_t\mapsfrom x_{t+1}$, $A\mapsfrom (A,\;B)$, $(n,m) \mapsfrom (d,d+m)$, and obtain Algorithm~\ref{a:ols-lds}. Note that we index from 0. 
Translated to this setting, the goal of regret minimization becomes that of predicting as accurately as if one had known the system's underlying matrices $A$ and $B$. 

\begin{algorithm}
\caption{Online Least Squares Regression (LDS setting)}
\begin{algorithmic}[1]
\INPUT Regularization parameter $\mu$.
\For{$t = 0, \ldots, T-1$}
  \State Estimate dynamics as 
  \[(A_t, B_t) := \argmin_{(A,B)} \pa{\mu 
  \ve{(A,B)}_F^2
  +\sum_{s = 0}^{t-1} \ve{ Ax_{s} + Bu_s - x_{s+1} }^2}.\]
  \State Predict state: $ \hat x_{t+1} := A_t x_t + B_tu_t $ and suffer loss $\ve{\hat x_{t+1} - x_{t+1}}^2$.
\EndFor
\end{algorithmic}
\llabel{a:ols-lds}
\end{algorithm}

Note that in the stochastic setting, when the covariance of the noise is lower-bounded in each direction, OLS gives a consistent estimator for $A$ and $B$; convergence rates for recovery are analyzed in \cite{sarkar2019optimal,simchowitz2018learning}. 
However, in the adversarial setting, recovery of $A$ is an ill-posed problem. 
The perturbations can be biased or rank-deficient, causing the recovery problem to be underdetermined in general, and the optimal $A$ may change as time.




\subsection{Prediction in partially-observed linear dynamical systems}
\llabel{s:prelims-hidden}

A partially-observed linear dynamical system is defined by 
\begin{align}\llabel{e:lds1}
x_t&= A x_{t-1} + B u_{t-1}+ \xi_t \\
y_t & = C h_t + \eta_t ,
\llabel{e:lds2}
\end{align}
where $u_t\in \R^m$ are inputs, $x_t\in \R^d$ are hidden states, $y_t\in \R^n$ are observations, 
$A\in \R^{d\times d}$, $B\in \R^{d\times m}$ and $C\in \R^{n\times d}$ are matrices, and $\xi_t\in \R^d$ and $\eta_t\in \R^n$ are perturbations. 
We consider the stochastic setting, so that $\xi_t$ and $\eta_t$ are independent zero-mean noise terms. 
Crucially, only the $y_t$, and not the $x_t$, are observed.

For prediction in this setting, we use Algorithm~\ref{a:ols-ar}, regressing with the previous $\ell$ observations and inputs, so  we slightly modify the definition of the regret in~\eqref{e:RTABC} to start accruing from $t=\ell+1$:
\begin{align*}
    R_T(A,B,C) &=\sum_{t=\ell+1}^{T} \ve{\wh y_t-y_t}^2 
    - \sum_{t=\ell+1}^{T} \ve{\wh y_{\KF,t}-y_t}^2,
\end{align*}
where $\wh y_{\KF,t}$ is the prediction of the \emph{steady-state Kalman filter} for the system $(A,B,C)$; see Appendix~\ref{s:proof-hidden} for a review and formal definitions. Note that we will learn the system in an \emph{improper} manner: that is, we will predict $\wh y_t$ using a general autoregressive filter, rather than the Kalman filter of some system.

\begin{algorithm}
\caption{Online Least Squares Autoregression for LDS}
\begin{algorithmic}[1]
\INPUT Regularization parameter $\mu$, rollout length $\ell$.
\For{$t = 0$ to $T-1$}
  \State Estimate the autoregressive filter: 
  $$(F_t, G_t) := \argmin_{F\in \R^{n\times \ell m}, G\in \R^{n\times \ell n}} \pa{\mu \ve{(F,G)}_F^2+\sum_{s = \ell-1}^{t-1} \ve{Fu_{s:s-\ell+1} + Gy_{s:s-\ell+1} - y_{s+1} }^2}. $$
  \State Predict state: $ \hat y_{t+1} :=F_t u_{t:t-\ell+1} + G_t y_{t:t-\ell+1}$ and suffer cost $\ve{\hat y_{t+1} - y_{t+1}}^2$.
\EndFor
\end{algorithmic}
\llabel{a:ols-ar}
\end{algorithm}

\subsection{Marginally stable systems}
\label{subsec:marginal}

In this work, we are interested in prediction in marginally stable systems. In both the fully-observed and partially-observed cases, the \emph{spectral radius} $\rho(A)$ of the system is defined to be the magnitude of the largest eigenvalue of the transition matrix $A$, as in Equations \ref{e:lds} and \ref{e:lds1}. An LDS is \emph{marginally stable} if $\rho(A) = 1$.

As opposed to \emph{strictly} stable systems (ones for which $\rho(A) < 1$), these systems model phenomena where the state does not reset itself over time, often representing physical systems which experience little or no dissipation. As discussed in Section~\ref{s:related}, their capacity to represent long-term dependences presents algorithmic and statistical challenges. An \emph{inverse spectral gap} factor $\frac{1}{1-\rho(A)}$ appears in the computational and statistical guarantees for many learning algorithms in these settings (see, e.g. \cite{hardt2016gradient}), as a finite-impulse truncation length or mixing time, rendering those results inapplicable.

Among marginally stable systems, the hardest cases are those with large Jordan blocks corresponding to large eigenvalues. Defining the \emph{Jordan matrix}
\begin{align*}
    J_{\lambda,r} := {\begin{pmatrix}\lambda &1&0&\cdots &0\\0&\lambda &1&\cdots &0\\\vdots &\vdots &\vdots &\ddots &\vdots \\0&0&0&\lambda &1\\0&0&0&0&\lambda \end{pmatrix}} \in \R^{r \times r},
\end{align*}
we see that for marginally stable systems, $\norm{ J_{\lambda,r}^t }_2 = \Omega(\lambda^{t-r+1} t^{r-1})$ can grow polynomially in $t$. These occur naturally in physical systems as discrete-time integrators of $r$-th degree ordinary differential equations. The primary challenge we overcome in this work is to show the sublinear regret of least squares, even when the state grows polynomially. As is also the case in our work, recent advances in parameter recovery of marginally stable systems \citep{simchowitz2018learning, sarkar2019optimal} exhibit exponential dependences on the largest Jordan block order $r$.

\section{Related work}
\llabel{s:related}

Linear dynamical systems have been studied in a number of disciplines, including control theory \citep{ghahramani1996parameter,kalman1963mathematical}, astronomy \citep{chiappori1992sunspot}, econometrics \citep{hendry1994modelling}, biology \citep{saunders1994evolution}, and chemical kinetics. They capture many popular models in statistics and machine learning \citep{ghahramani1996parameter}.
We first describe the results on parameter estimation and prediction in fully observed systems, and then describe results more broadly applicable to partially observed systems. Unless noted otherwise, the results hold under the assumption that the noise is i.i.d.; some results also require that it be Gaussian.

\paragraph{Fully-observed LDS.} 
\cite{dean2017sample} show that when given independent rollouts of a LDS, the least-squares estimator of the parameters is sample-efficient. Using this, they obtain sub-optimality bounds for control.
\cite{simchowitz2018learning} consider the more challenging case when only a single trajectory is given, and show that the least-squares estimator is still efficient, despite correlations across timesteps. Their results hold for marginally stable systems.
Improving over \cite{simchowitz2018learning} and \cite{faradonbeh2018finite}, \cite{sarkar2019optimal} offer bounds applicable even to explosive systems, with the restriction that explosive eigenvalues have unit geometric multiplicity. 
In order to obtain results for parameter recovery, all these results assume that the covariance of the noise is lower bounded. Because we are concerned with prediction, this requirement will not be necessary for our results.

\paragraph{Partially-observed LDS.} For a system with known parameters, the celebrated Kalman filter \citep{kalman1960new} provides an
analytic solution for the posterior distribution of the latent states and future observations given a series of observations.
When the underlying parameters are unknown, the Expectation Maximization (EM) algorithm can be used to  learn them \citep{ghahramani1996parameter}. However, due to nonconvexity of the problem, EM is only guaranteed to converge to local optima.
In the absence of process noise $\xi_t$, \cite{hardt2016gradient} show that 
gradient descent on the maximum likelihood objective converges to the global minimum; however, they require the roots of the associated characteristic polynomial to be well-separated and the system to be strictly stable.

Subspace identification methods circumvent the nonconvexity of maximum likelihood estimation.
For strictly stable systems, \cite{oymak2018non} demonstrate that the Ho-Kalman algorithm learns the Markov parameters of the underlying system at an optimal rate (in $T$), and identifies the parameters approximately up to an equivalent realization (at a 
rate of $T^{-1/4}$) under further assumptions of observability\footnote{This notion of ``observable" is not to be confused with the system being partially observable.} and controllability. \cite{simchowitz2019learning} showed that a \emph{prefiltered} variant of least-squares offers stronger guarantees that apply even to marginally stable systems and systems with adversarial noise. For strictly stable stochastic systems, \cite{sarkar2019finite} improve upon previous works to give an optimal rate for parameter identification.
We note that these works require the control inputs, and often the noise, to be Gaussian. This may not hold when the control inputs are exogenous (not under user control). In contrast, our results can handle arbitrary (bounded) control inputs.

In a notable departure from this trend,  \cite{Tsiamis2019FiniteSA} demonstrate 
optimal recovery of system parameters 
in the absence of control inputs for 
marginally stable systems. Under similar conditions, \cite{tsiamis2019sample} prove the first result that integrates 
former system identification results with a perturbation analysis for the Kalman filter to obtain error bounds on prediction. These results only apply to stochastic systems subject to persistent excitation, which our stochastic-case result does not require.

\paragraph{Prediction via improper learning.} 
For strictly stable partially observed systems without process noise, it is sufficient to learn a finite impulse response (FIR) filter on the inputs, as observed by e.g., \cite{hardt2016gradient}. \cite{tu2017non} give near-optimal sample complexity bounds for learning a FIR filter under design inputs.
\cite{hazan17learning,hazan2018spectral, arora2018towards} instead use \emph{spectral filtering} on the inputs
to achieve regret bounds that apply in the presence of adversarial dynamics and marginally stable systems, much like the present work. However, in the presence of process noise, the regret compared to the optimal filter can grow linearly. This is an inherent limitation of any FIR-based approach; see \cite{lee2019robust} for a discussion.

If the LDS is observable, then the associated Kalman filter is strictly stable~\citep{anderson2012optimal}. Using this, \cite{Kozdoba2019OnLineLO} 
show that in this case, the Kalman filter can be arbitrarily well-approximated by an autoregressive (AR) model. \cite{AnavaHMS13} give algorithms for prediction in ARMA models with adversarial noise. However, their results hold under conditions more stringent than even strict stability. \cite{Kozdoba2019OnLineLO} show that online gradient descent on the AR model gives regret bounds scaling with the size of the observations. As discussed in Section~\ref{subsec:marginal}, in the marginally stable case, this could be polynomial in the time $T$. \cite{lee2019robust} give guarantees for learning an autoregressive filter in a stricter notion of $\mathcal{H}_\infty$ norm, but require the system to be strictly stable. In concurrent work, \cite{tsiamis2020online} establish polylogarithmic regret in the marginally stable case, but their guarantee depends on the coefficients of the characteristic polynomial, which may be exponentially large (see Section~\ref{ss:exp} and~\ref{s:ch}).
 

\paragraph{Online learning.} We use tools from online learning (see \cite{hazan2016introduction,shalev2012online, cesa2006prediction} for a survey). The standard regret bounds for online least-squares scale with an upper bound on the maximum instantaneous loss, through the gradient norm or the exp-concavity factor \cite{zinkevich2003online,hazan2007logarithmic}. Our core argument shows that this quantity is sublinear in $T$ in the LDS setting. This cannot be true for online least-squares for arbitrary polynomially growing $x_t$, so black-box results cannot apply; see Appendix~\ref{s:counterexample}. We note the similarity of our approach to \cite{rakhlin2012online,rakhlin2013optimization} where the authors show that approximate knowledge of cost functions or gradients revealed one step in advance can give ``beyond worst-case'' regret bounds. 

\section{Our results}

\llabel{s:results}
\subsection{Fully-observed LDS}

For prediction in a fully-observed LDS, we show that we can achieve sublinear regret in the adversarial setting and polylogarithmic regret in the stochastic setting.


We will make the following assumptions for both theorems:
\begin{asm}\llabel{asm:1}
The linear dynamical system
$$x_t=Ax_{t-1}+Bu_{t-1}+\xi_t$$
with $u_t\in \R^m$, $x_t,\xi_t\in \R^d$, $A\in \R^{d\times d}$, $B\in \R^{d\times m}$
satisfies the following:
\begin{itemize}
\item
The initial state is bounded: $\ve{x_0}\le C_0$.
\item
The inputs are bounded: $\ve{u_t}\le C_u$.
\item
The perturbations are bounded: $\ve{\xi_t}\le C_\xi$ for $1\le t\le T$. 
\item
$\ve{(A,B)}_F\le R$, $\rho(A) \le 1$, and $A$ can be 
written in Jordan form as $A=SJS^{-1}$ where $J$ has Jordan blocks of size $\le r$ and $\ve{S}_2\ve{S^{-1}}_2\le C_A$. 
\item
$B$ satisfies $\ve{B}_2\le C_B$.
\end{itemize}
We will let $C_{\mathrm{sys}}=\max\{C_0, C_A, C_B, C_\xi\}$.
\end{asm}

We note that the bound on the perturbations $C_\xi$ is necessary. This prevents, for example, the pathological case when the system switches between two very different linear dynamical systems $x_{t} = x_{t-1}+u_{t-1}$ and $x_t = -(x_{t-1}+u_{t-1})$ and linear regret is unavoidable. We also note that $\ve{S}_2\ve{S^{-1}}_2$ (the condition number of $S$) is a standard quantity that often appears in learning guarantees.

Our main theorem in the adversarial setting is the following; see the appendix for more precise dependences on individual constants.
\begin{thm}[Sublinear regret in the adversarial setting]\llabel{t:main}
Suppose Assumption~\ref{asm:1} holds. 
Then, there is an explicit choice of regularizer $\mu$ such that Algorithm~\ref{a:orr} achieves regret
\begin{align}\llabel{e:main}
R_T(A,B) &\le T^{\fc{2r+1}{2r+2}} 
\poly(C_{\mathrm{sys}},R,(d+m), (\ln T), (\ln C_{\mathrm{sys}})) \\
&\quad +  
T^{\rc{2r+2}} \poly(C_{\mathrm{sys}},R,(d+m)^r, (\ln T)^r, (\ln C_{\mathrm{sys}})^r).
\nonumber
\end{align}
If $C_{\mathrm{sys}}=O(1)$ and $\ve{(A,B)}_F=O(d+m)$, 
$$R_T(A,B)=
O\pa{
(d+m)^{4}d^{2}  \, r^2 \cdot T^{\fc{2r+1}{2r+2}} \ln^3 T
}
$$
as $T\to \infty$. 
The dependence of $\mu$ on $T$ is $T^{\fc{2r+1}{2r+2}}$.
\end{thm}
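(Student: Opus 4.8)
The plan is to reduce the regret bound to the OLS regret bound of Theorem~\ref{t:orr} by controlling the two quantities that appear there: the norm bound $M$ on the inputs $\scoltwo{x_t}{u_t}$, and the maximum instantaneous prediction error $\max_{1\le t\le T}\ve{x_{t+1}-A_tx_t-B_tu_t}_2$. The first is routine: iterating the dynamics \eqref{e:lds} and using $\ve{J^t}_2 = O(t^{r-1})$ for $\rho(A)\le 1$ together with the boundedness assumptions in Assumption~\ref{asm:1}, one gets $\ve{x_t} \le \poly(C_{\mathrm{sys}}, R)\cdot t^{r-1}\cdot(\text{stuff})$, so $M = \poly(C_{\mathrm{sys}}, R)\cdot T^{r-1+O(1)}$, which only enters logarithmically in Theorem~\ref{t:orr} and is therefore harmless. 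The real content is bounding the instantaneous prediction error, and this is where the \emph{volume-doubling / structural lemma} advertised in the introduction comes in.

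The key idea is that even though $x_t$ can grow polynomially, the prediction error of $A_t$ (the least-squares estimate on the past) can only be large at a time $t$ if $(x_t,u_t)$ is not well-approximated by a linear combination of the previously seen inputs $(x_s,u_s)$, $s<t$ — but the regularized least-squares estimator is precisely designed so that large error is charged against the regularizer and the sum of past errors. Concretely, I would first prove (this is the structural lemma) that for a time series satisfying the LDS recursion, at every time $t$ one can write $\scoltwo{x_t}{u_t}$ as a linear combination $\sum_{s<t} c_s \scoltwo{x_s}{u_s}$ of \emph{boundedly many} recent past states with \emph{small} coefficients — the bound on $\sum |c_s|$ (or on the relevant quadratic form) being polynomial in $(d+m), r, \ln T$ rather than polynomial in $T$. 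The mechanism is that in each Jordan block the shift structure lets the current coordinate be expressed via a bounded-order finite difference of past coordinates, and one uses a pigeonhole/volume argument over dyadic scales to show such a representation exists among the last $\poly$ many steps; this is the step I expect to be the main obstacle, since it requires carefully tracking the Jordan structure through the change of basis $S$ (hence the $C_A$ and the exponential-in-$r$ factors) and quantifying the ``doubling'' of the relevant volume.

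Given the structural lemma, I would then feed it into the OLS analysis as follows. At a time $t$ where the instantaneous error $e_t := \ve{x_{t+1}-A_tx_t-B_tu_t}$ is to be bounded, expand $A_tx_t+B_tu_t$ using the representation of $(x_t,u_t)$ in terms of past $(x_s,u_s)$; since $A_t$ was fit to make $\ve{A_tx_s+B_tu_s - x_{s+1}}$ small \emph{on average} over $s<t$ (up to the regularizer $\mu$), and the coefficients $c_s$ are small, $e_t$ is controlled by a small multiple of $\sqrt{\mu^{-1}\cdot(\text{cumulative past error})}$ plus $C_\xi$ times $\sum|c_s|$. Cumulative past error is itself $O(R_T + \text{(optimal loss)})$, giving a self-bounding inequality: $\max_t e_t^2 \lesssim (\sum|c_s|)^2\big(C_\xi^2 + \mu^{-1}(R_T + \text{opt})\big)$. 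Plugging this into Theorem~\ref{t:orr} yields $R_T \lesssim \mu R^2 + \max_t e_t^2\cdot (d+m)\ln(1+TM^2/(d+m))$, i.e. an inequality of the shape $R_T \lesssim \mu R^2 + P\cdot\ln T\cdot(C_\xi^2 + \mu^{-1}R_T)$ where $P = \poly((d+m)^r, r, \ln^r T)\cdot C_{\mathrm{sys}}$-type factors from the structural lemma. Solving for $R_T$ and optimizing over $\mu$: the two terms $\mu R^2$ and $P\ln T\cdot C_\xi^2$ balance against the $P\ln T/\mu$ coefficient on $R_T$, and choosing $\mu \asymp T^{(2r+1)/(2r+2)}$ (up to $\poly$ factors) makes the $\mu^{-1}R_T$ feedback term sublinear while keeping $\mu R^2$ of the same order — this produces exactly the stated $T^{(2r+1)/(2r+2)}$ rate, with the first bracket of \eqref{e:main} collecting the ``linear-in-$r$'' polynomial factors (those not coming from the structural lemma's pigeonhole) and the second, $T^{1/(2r+2)}$, bracket collecting the exponential-in-$r$ factors $(d+m)^r, \ln^r T$ from the structural lemma. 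The final $O(\cdot)$ claim under $C_{\mathrm{sys}}=O(1)$, $\ve{(A,B)}_F = O(d+m)$ then follows by substituting the constant bounds and reading off the dominant powers of $(d+m)$, $d$, $r$, and $\ln T$.
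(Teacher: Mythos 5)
Your reduction to Theorem~\ref{t:orr}, the bound on $M$ (which indeed enters only logarithmically), the volume-doubling structural lemma (a decomposition $\scoltwo{x_t}{u_t}=\sum_{s<t}c_s\scoltwo{x_s}{u_s}+v$ with $\sum_s|c_s|$ small, which is Lemma~\ref{l:xt-l1-comb} in the paper), and the final choice $\mu\asymp T^{(2r+1)/(2r+2)}$ all match the paper. The gap is in how you use the structural lemma. Your ``self-bounding'' inequality $\max_t e_t^2\lesssim(\sum_s|c_s|)^2\bigl(C_\xi^2+\mu^{-1}(R_T+\mathrm{opt})\bigr)$ does not hold. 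First, the cumulative error of the time-$t$ estimate $(A_t,B_t)$ on the past examples $s<t$ is not $R_T+\mathrm{opt}$: the online loss at step $s$ is incurred by $(A_s,B_s)$, not $(A_t,B_t)$, and the in-sample error of $(A_t,B_t)$ is bounded only by the optimal regularized objective $\mu R^2+\sum_{s<t}\ve{\xi_{s+1}}^2\le\mu R^2+TC_\xi^2$, which carries a factor of $T$ regardless of the regret. Second, and more fundamentally, running the estimate honestly through the closed form of Lemma~\ref{l:ls-calc}, $A_tv_t-x_{t+1}=\sum_{s<t}\xi_{s+1}v_s^\top\Si_t^{-1}v_t-\mu(A,B)\Si_t^{-1}v_t-\xi_{t+1}$, a single decomposition with $\sum_s|c_s|=O(d)$ only yields the deterministic bound $\bigl|\sum_s\xi_{s+1}v_s^\top\Si_t^{-1}v_t\bigr|\le C_\xi\sqrt{t}\,(v_t^\top\Si_t^{-1}v_t)^{1/2}=O(C_\xi d\sqrt{Td})$ against an adversary, i.e.\ $e_t^2=O(T)$ and a vacuous linear regret bound. (That $\sqrt{T}$ loss is exactly what the paper's stochastic theorem avoids by a martingale and $\ep$-net argument, which is unavailable for adversarial $\xi_t$.)

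What the adversarial proof actually needs is the quantitatively stronger anomaly-free property (Definition~\ref{d:q_anom_free}): for every unit direction $w$ with $|w^\top v_t|=M$ large, there must be $\Omega(M^{1/r})$ \emph{distinct} past indices $s$ with $|w^\top v_s|\ge\Omega(M/d)$, so that after diagonalizing the sample covariance (Lemma~\ref{l:pred-error-d}) the denominator $\mu+\sum_s(w^\top v_s)^2$ grows like $M^{2+1/r}$ and dominates the numerator (Lemma~\ref{l:pred-error-1}). A single application of your structural lemma certifies only \emph{one} such index (Lemma~\ref{c:exists-large}). The missing idea is the amplification step: apply the decomposition to each of the subsampled systems $x_t\approx A^px_{t-p}$ for all primes $p$ up to roughly $(M/(d+m))^{1/r}$, each yielding an index congruent to $T$ modulo $p$ with large projection, and then invoke a primorial lower bound (Lemma~\ref{l:prime-prod}) to show these indices cannot collide too often, producing $\Omega\bigl(Q(M)/\ln T\bigr)$ distinct large indices (Lemma~\ref{l:many}). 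This is where the exponent $\al=1/r$, and hence the rate $T^{(2r+1)/(2r+2)}$, actually comes from; without it the argument does not close.
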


\begin{rem*}
This is a pessimistic bound. The worst case is when the eigenvalues of the large Jordan blocks are close to 1. If $\ve{A^k}\le C'k^{r'}$, then we can replace the dependence on $r$ with $r'$, and instead suffer a $\poly(C')$ dependence.
\end{rem*}

In the case where $A$ is diagonalizable, Theorem~\ref{t:main} implies $\tilde{O}(T^{3/4})$ regret:
\begin{cor}\label{c:main}
Suppose Assumption~\ref{asm:1} holds, and further suppose $A$ is diagonalizable. 
There is an explicit choice of regularizer $\mu$ such that Algorithm~\ref{a:ols-lds} achieves regret
\begin{align}
R_T(A,B) &\le T^{3/4} 
\poly(C_{\mathrm{sys}},R,d,m, \ln T).
\end{align}
When $C_{\mathrm{sys}} = O(1)$ and $R=O(d+m)$, we have 
$R_T(A,B) = O((d+m)^{3.5} d \, T^{3/4} \ln^3 T)$ as $T\to \infty$. The dependence of $\mu$ on $T$ is $T^{3/4}$.
\end{cor}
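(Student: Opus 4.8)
The plan is to obtain Corollary~\ref{c:main} from Theorem~\ref{t:main} by specializing to $r=1$ and then sharpening the polynomial factors. The first observation is that a diagonalizable $A$ has Jordan form $A=SJS^{-1}$ with $J$ diagonal, i.e. all Jordan blocks of size $1$; hence Assumption~\ref{asm:1} holds with $r=1$, and $\|S\|_2\|S^{-1}\|_2\le C_A$ is just the condition number of an eigenbasis. Moreover, Algorithm~\ref{a:ols-lds} is exactly Algorithm~\ref{a:orr} under the substitution described in Section~\ref{subsec:prelims-lds} (casting $(x_t,u_t)$ as the input, $x_{t+1}$ as the output, $(A,B)$ as the comparator, $(n,m)\mapsto(d,d+m)$), so Theorem~\ref{t:main} applies to it verbatim.

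Plugging $r=1$ into \eqref{e:main} makes the first term's exponent $\frac{2r+1}{2r+2}=\frac34$ and the second term's exponent $\frac{1}{2r+2}=\frac14$. Since $T^{1/4}\le T^{3/4}$ for all $T\ge 1$ and the second term's polynomial factor collapses to the same $\poly(C_{\mathrm{sys}},R,(d+m),\ln T,\ln C_{\mathrm{sys}})$ shape once $r=1$, the second term is absorbed into the first. This already yields the first displayed bound $R_T(A,B)\le T^{3/4}\poly(C_{\mathrm{sys}},R,d,m,\ln T)$, with the regularizer $\mu\sim T^{3/4}$ inherited from Theorem~\ref{t:main} at $r=1$.

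For the sharper claim that $R_T(A,B)=O((d+m)^{3.5}d\,T^{3/4}\ln^3 T)$ when $C_{\mathrm{sys}}=O(1)$ and $R=O(d+m)$ — which is strictly better than naively setting $r=1$ in the simplified bound $O((d+m)^4 d^2 r^2 T^{(2r+1)/(2r+2)}\ln^3 T)$ — I would reopen the proof of Theorem~\ref{t:main} and retrace the constant bookkeeping with $r=1$ imposed from the start. The surplus powers of $d$ and $(d+m)$ in the general bound arise from accommodating polynomial state growth: the worst-case estimate $\|A^k\|_2=\Omega(\lambda^{k-r+1}k^{r-1})$, the number of ``volume-doubling'' rounds in the structural lemma, and the inflation of $\mu$ all degrade with $r$. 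When $r=1$ one instead has $\|A^k\|_2\le C_A$ uniformly, so $\|x_t\|$ grows only linearly in $t$; the structural lemma and the gradient/loss bounds fed into Theorem~\ref{t:orr} simplify accordingly, and carrying the arithmetic through in this case removes the extra factors, leaving $(d+m)^{3.5}d$. The exponent part of the corollary is therefore immediate; the main (and only real) obstacle is this quantitative part, which is not a black-box consequence of the stated form of Theorem~\ref{t:main} and requires re-running its proof with $r=1$ while tracking each constant — routine but tedious.
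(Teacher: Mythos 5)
Your approach is the same as the paper's: Corollary~\ref{c:main} is obtained by specializing Theorem~\ref{t:main} to $r=1$, and you correctly observe that the final simplified form $O((d+m)^4 d^2 r^2 T^{(2r+1)/(2r+2)}\ln^3 T)$ is too lossy to yield $(d+m)^{3.5}d$. The one thing you overestimate is the work needed for the quantitative part: there is no need to re-run the entire constant bookkeeping with $r=1$ imposed from the start. The proof of Theorem~\ref{t:main} already records the exact $r$-dependence one display before the final simplification, in~\eqref{e:before-last-step}, namely $R_T = O\big((d+m)^{2+\frac{r}{r+1}}\,d^{1+\frac{r-1}{r+1}}\,(d+m)\,T^{\frac{2r+1}{2r+2}}\,r^2(\ln T)^3\big)$; the lossy step is only the last-line relaxation $\frac{r}{r+1}\le 1$, $\frac{r-1}{r+1}\le 1$. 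Plugging $r=1$ into that penultimate display gives $(d+m)^{2.5}\cdot d\cdot(d+m)=(d+m)^{3.5}d$ immediately, which is exactly how the paper concludes. So your plan is sound and would succeed, but the ``routine but tedious'' retracing you defer is in fact a one-line substitution into an intermediate bound the paper already proves.
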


Our main theorem in the stochastic setting is the following.
\begin{thm}[Polylogarithmic regret in stochastic setting]\llabel{t:main-stoch}
Suppose Assumption~\ref{asm:1} holds, and further that 
$\xi_t$ is a random variable satisfying $\E[\xi_t|\xi_{t-1},\ldots, \xi_1]=0$. 
Then with probability at least $1-\de$, Algorithm~\ref{a:ols-lds} with $\mu=1$ achieves regret
\begin{align}
R_T(A,B) &\le 
\poly\pa{C_{\mathrm{sys}},R,d^r,(\ln T)^r, (\ln C_{\mathrm{sys}})^r, \ln \pa{1/\delta}}.
\end{align}
\end{thm}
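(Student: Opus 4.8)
Cast the problem as online least squares by setting $z_t := \binom{x_t}{u_t}\in\R^{d+m}$, $w_t := x_{t+1}$, and $\Theta := (A,B)$, so that Algorithm~\ref{a:ols-lds} is exactly Algorithm~\ref{a:orr} run on the pairs $(z_t,w_t)$, and the dynamics read $x_{t+1} = \Theta z_t + \xi_{t+1}$. The plan is to invoke the online-least-squares regret bound (Theorem~\ref{t:orr}, in the refined form established in Section~\ref{s:ols}): its right-hand side is $\mu\|\Theta\|_F^2 + \big(\max_{1\le t\le T}\|\hat x_{t+1}-x_{t+1}\|^2\big)\,(d+m)\ln\!\big(1+\tfrac{TM^2}{d+m}\big)$ with $M=\max_t\|z_t\|$, and we take $\mu=1$. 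Unrolling \eqref{e:lds} with $\rho(A)\le 1$ and the Jordan-block estimate $\|A^k\|_2\le C_A(k+1)^{r-1}$ gives $M\le\poly(C_{\mathrm{sys}})\,T^{r}$, so the logarithmic factor is only $O(r\ln T + \ln C_{\mathrm{sys}})$ and the regularizer term $\|\Theta\|_F^2\le R^2$ is harmless. Everything therefore reduces to showing that, with probability at least $1-\delta$, the maximum instantaneous prediction error $\max_t\|\hat x_{t+1}-x_{t+1}\|^2$ is polylogarithmic in $T$.

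Decompose $\hat x_{t+1}-x_{t+1} = (\Theta_t-\Theta)z_t - \xi_{t+1}$ and use the ridge-regression identity $\Theta_t-\Theta = \big(\sum_{s=0}^{t-1}\xi_{s+1}z_s^\top - \mu\Theta\big)V_t^{-1}$ with $V_t := \mu I + \sum_{s=0}^{t-1}z_sz_s^\top$. Writing $\|v\|_{V^{-1}}^2:=v^\top V^{-1}v$ and $\|N\|_{V^{-1}}:=\|NV^{-1/2}\|_2$, for $\mu=1$ this yields
\[
\|(\Theta_t-\Theta)z_t\| \;\le\; \Big(\big\|\textstyle\sum_{s=0}^{t-1}\xi_{s+1}z_s^\top\big\|_{V_t^{-1}} + \|\Theta\|_2\Big)\,\|z_t\|_{V_t^{-1}}.
\]
Since $\|\xi_{s+1}\|\le C_\xi$, $\E[\xi_{s+1}\mid\xi_1,\dots,\xi_s]=0$, and $z_s$ depends only on $\xi_1,\dots,\xi_s$ (and the exogenous inputs), a self-normalized martingale tail bound applied to each coordinate of the $\R^d$-valued noise gives, simultaneously over $t\le T$ with probability $\ge 1-\delta$,
\[
\big\|\textstyle\sum_{s=0}^{t-1}\xi_{s+1}z_s^\top\big\|_{V_t^{-1}}^2 \;\lesssim\; C_\xi^2\,\poly\!\big(d,(d+m),r,\ln T,\ln C_{\mathrm{sys}},\ln(1/\delta)\big),
\]
using $\ln\det V_t \le (d+m)\ln\|V_t\|_2 = O\!\big((d+m)(r\ln T + \ln C_{\mathrm{sys}})\big)$. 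Hence the first factor is polylogarithmic, and it remains only to control the confidence width $\max_t\|z_t\|_{V_t^{-1}}$.

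This is the crux, and the place where the ``current state is a small linear combination of past states'' structure enters. The naive bound $\|z_t\|_{V_t^{-1}}^2\le\|z_t\|^2/\mu$ is $\poly(T)$ and useless; instead we exploit that the dynamics carry earlier excitation forward, so that $x_t$ lies within an $O(C_\xi\cdot\poly)$ additive error of the span of a small number of recent states $x_{t-1},x_{t-2},\dots$, with combination coefficients bounded by $\poly(C_{\mathrm{sys}},d^r,(\ln T)^r,(\ln C_{\mathrm{sys}})^r)$. In the adversarial proof of Theorem~\ref{t:main} this is the worst-case ``volume doubling'' lemma; in the stochastic case we replace it with a martingale/matrix-concentration argument (of Freedman type on the increments $z_sz_s^\top$), showing that $V_t$ is forced to grow, in every Jordan direction, at a rate commensurate with $z_t$'s squared magnitude in that direction — equivalently, via the identity $\|z_t\|_{V_t^{-1}}^2 = \det(V_{t+1})/\det(V_t)-1$, that the per-step volume ratio stays bounded with high probability. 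Unrolling a Jordan block of size $r$ is precisely what produces the $d^r$ and $(\ln T)^r$ factors.

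Combining the displays, with probability $\ge 1-\delta$ we obtain $\max_t\|\hat x_{t+1}-x_{t+1}\|^2 \le \poly\big(C_{\mathrm{sys}},R,d^r,(\ln T)^r,(\ln C_{\mathrm{sys}})^r,\ln(1/\delta)\big)$; substituting this, $\mu=1$, and the $O(r\ln T)$ logarithmic factor into the regret bound of Section~\ref{s:ols} gives the claimed polylogarithmic regret. I expect the main obstacle to be the structural step of the third paragraph: making ``$x_t$ is a small linear combination of recent states'' precise and quantitative along a \emph{single trajectory} of a marginally stable, possibly non-isotropically perturbed system, where the noise need not excite every direction and the states grow polynomially, so one cannot simply appeal to a lower bound on the least eigenvalue of the empirical covariance.
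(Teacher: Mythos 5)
Your skeleton matches the paper's: reduce to online least squares, use the ridge identity (the paper's Lemma~\ref{l:ls-calc}) to write $\hat x_{t+1}-x_{t+1}$ as a self-normalized noise sum plus a regularization term plus $\xi_{t+1}$, handle the noise sum with a martingale tail bound, and reduce everything to controlling the confidence width $\|z_t\|_{V_t^{-1}}$. (The paper bounds the noise sum via an $\ep$-net over $z=V_t^{-1}z_t$ combined with a generalization of Azuma's inequality rather than a self-normalized bound, but that is a cosmetic difference.) The genuine gap is exactly at the step you flag as the crux. You propose to \emph{replace} the structural lemma by a Freedman-type matrix concentration on the increments $z_sz_s^\top$, arguing that $V_t$ grows in every Jordan direction commensurately with $z_t$. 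This cannot work under the paper's assumptions: the noise is only assumed to be a bounded conditional martingale difference, so it may be non-isotropic or rank-deficient, and no lower bound on the relevant eigenvalues of $V_t$ is available --- which is precisely the obstruction you name in your last sentence. The determinant identity does not rescue this: the elliptic potential argument only controls $\sum_t \ln\bigl(1+\|z_t\|_{V_t^{-1}}^2\bigr) = \ln\det V_T - \ln\det V_0 = O((d+m)r\ln T)$, which permits an individual term $\|z_t\|_{V_t^{-1}}^2$ to be as large as $T^{\Theta((d+m)r)}$.

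The paper's resolution is that the structural lemma does \emph{not} need to be replaced in the stochastic setting: Lemma~\ref{l:xt-l1-comb} (the volume-doubling decomposition) is a purely deterministic statement whose only input is the bound $\|\xi_t\|\le C_\xi$ from Assumption~\ref{asm:1}, so it holds pathwise here as well. Applying it with $p=1$ gives $z_t = \sum_{s<t} a_s z_s + v$ with $\sum_s|a_s|\le \tfrac{2}{\ln 2}(d+m)$ and $\|v\|\le L_v = \poly\bigl(C_{\mathrm{sys}},(d+m)^r,(\ln T)^r\bigr)$; then Cauchy--Schwarz together with the elliptic-potential fact $\sum_{s<t} z_s^\top V_t^{-1}z_s\le d+m$ yields $\|z_t\|_{V_t^{-1}}^2 \le \bigl(L_a^2+L_v^2/\mu\bigr)(d+m)$ deterministically (this is~\eqref{e:Sit-vt} in Lemma~\ref{l:stoch}). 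In other words, the probabilistic machinery is reserved for the noise sum, while the confidence width is controlled by the same worst-case structural argument as in the adversarial theorem (only the ``many large indices'' counting via primes is dispensed with). As written, your proposal leaves its central step unproved and points toward a strategy that fails for exactly the degenerate-noise reason you identify; importing the deterministic volume-doubling lemma closes the gap.
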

Note that there is no requirement that the noise be i.i.d., nor that their covariance is greater than some multiple of the identity. At the expense of a $\ln T$ factor, the theorem can be applied to subgaussian random variables, by first conditioning on the event that $\ve{\xi_t}\le C_\xi$ for all $1\le t\le T$.


\subsection{Partially-observed LDS}
\label{subsec:partially-observed-results}

Our assumptions in the partially observed setting are the following. We will assume that the noise is i.i.d. Gaussian; this is the analogue of the linear-quadratic estimation (LQE) setting where we only care about predicting the observation.

\begin{asm}\llabel{asm:obs}
The partially-observed LDS defined by~\eqref{e:lds1}--\eqref{e:lds2} satisfies the following: The perturbations are i.i.d. Gaussian with $\xi_t\sim N(0,\Si_x)$ and $\eta_t\sim N(0,\Si_y)$. Moreover, the system matrix pair $(A,C)$ is observable (i.e., $O_n = [C; CA;\ldots;CA^{n-1}]$ has rank $n$)\footnote{We use $[A_1;A_2;\ldots]$ to denote a block-column matrix.}, the pair $(A,\Si_x^{1/2})$ is controllable (i.e., $[\Si_x^{1/2},A\Si_x^{1/2},\ldots,A^{n-1}\Si_x^{1/2}]$ has rank $n$), and $\Si_y\succ 0$.
\end{asm}

\begin{asm}\llabel{asm:2}
The partially-observed LDS defined by~\eqref{e:lds1}--\eqref{e:lds2} 
satisfies the following:
\begin{itemize}
\item
The initial state has steady-state covariance: $x_0\sim \mathcal{N}(x_0^{-},\Si_{0})$ with $\|x_0^-\|\leq C_0$.
\item
The inputs are bounded: $\ve{u_t}\le C_u$.
\item
The perturbations are Gaussian: $\xi_t \sim N(0,\Si_x)$ and $\eta_t\sim N(0,\Si_y)$. 
\item
$\ve{A}_F\le R$, $\rho(A) \le 1$, and $A$ can be written in Jordan form as $A=SJS^{-1}$ where $J$ has Jordan blocks of size $\le r$ and $\ve{S}_2\ve{S^{-1}}_2\le C_A$. 
\item
$B$ and $C$ satisfy $\ve{B}_2\le C_B$ and $\ve{C}_2\le C_C$.
\end{itemize}
We will let $ C_\mathrm{sys} =\max\{C_0, C_A, C_B, C_C, \ve{\Si_0}, \ve{\Si_x},\ve{\Si_y}\}$.
\end{asm}
For simplicity, our result assumes that $x_0$ has steady-state covariance. If this is not the case, then one would need quantitative bounds on how quickly the time-varying Kalman filter converges to the steady-state Kalman filter, to bound the additional regret incurred by using a fixed filter. 

The theorem also depends on the \emph{sufficient length} $R(\eps)$ of the Kalman filter, which is roughly the length at which we can truncate the unrolled filter to incur an error of at most $\eps$; see Definition~\ref{d:suff-len} for a precise account. If the filter decays exponentially, then $R(\eps) = O\pa{\ln\pa{ 1/\eps }}$. It remains an interesting problem to handle the case where the filter is \emph{also} marginally stable.

\begin{thm}[Polylogarithmic regret for LQE]\llabel{t:kalman}
Assume Assumptions~\ref{asm:obs} and~\ref{asm:2}. 
Suppose the corresponding Kalman filter is given by $A_{\KF}$, $B_{\KF}$, and $C_{\KF}$. 
Let $R(\cdot)$ denote the sufficient length of the Kalman filter system, and choose $\ell = R\pa{\de \bra{T^r C_C C_A \pa{C_BC_u + 10\ve{\Si_x} \sqrt{d\ln \pf{T}{\de}}}}^{-1}}$. 
Let $(F_t,\;G_t) = C_{\KF}A_{\KF}^{t}B_{\KF}$ be the unrolled Kalman filter, and suppose $\sumz t{\ell-1}\ve{F_t}^2 + \ve{G_t}^2\le R^2$.

Then with probability $1-\de$, Algorithm~\ref{a:ols-ar} with $\mu=1$ achieves regret
\begin{align}
R_T(A,B,C)&\le \poly\pa{\ell, C_\mathrm{sys}, R,(d+m)^r, (\ln T)^r,  (\ln C)^r, \ln \pa{1/\de}}.
\end{align}
\end{thm}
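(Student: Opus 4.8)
The strategy is to reduce prediction against the steady-state Kalman filter to a fixed-memory online least squares instance, and then to establish a structural condition on the regressor sequence strong enough to feed into the martingale-strengthened volume-doubling analysis behind Theorem~\ref{t:main-stoch}. First, since $(A,C)$ is observable, $(A,\Si_x^{1/2})$ is controllable, and $\Si_y\succ0$ (Assumption~\ref{asm:obs}), the steady-state Kalman filter exists and is \emph{strictly} stable \citep{anderson2012optimal}; hence its unrolled form $\wh y_{\KF,t+1}=\sum_{j\ge0}\pa{F_ju_{t-j}+G_jy_{t-j}}$ with $(F_j,G_j)=C_{\KF}A_{\KF}^jB_{\KF}$ has coefficients whose tail beyond index $\ell$ is controlled by the sufficient length $R(\cdot)$ of Definition~\ref{d:suff-len}. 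Iterating~\eqref{e:lds1}--\eqref{e:lds2} with $\rho(A)\le1$ and Jordan blocks of size $\le r$ shows $\ve{x_t},\ve{y_t}$ grow at most polynomially in $t$; together with Gaussian tail bounds on $\xi_t,\eta_t$ this makes every magnitude entering the regret $\poly(t)\cdot\poly(\ln(T/\de))$ with probability $1-\de$. Choosing $\ell=R\pa{\de\bra{T^rC_CC_A\pa{C_BC_u+10\ve{\Si_x}\sqrt{d\ln(T/\de)}}}^{-1}}$ as in the statement makes the cumulative truncation error over all $T$ steps $O(\poly\ln)$, so it suffices to bound the regret of Algorithm~\ref{a:ols-ar} against the length-$\ell$ truncated filter $(F,G)=\pa{(F_0,\dots,F_{\ell-1}),(G_0,\dots,G_{\ell-1})}$, which satisfies $\ve{(F,G)}_F\le R$ by hypothesis.

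With regressor $z_t:=\pa{u_{t:t-\ell+1};\,y_{t:t-\ell+1}}\in\R^{\ell(m+n)}$ and target $y_{t+1}$, Algorithm~\ref{a:ols-ar} is an instance of Algorithm~\ref{a:orr}, so Theorem~\ref{t:orr} bounds its regret against $(F,G)$ by $\mu\ve{(F,G)}_F^2+\max_t\ve{\wh y_{t+1}-y_{t+1}}^2\cdot\ell(m+n)\cdot\ln\pa{1+\fc{T\max_t\ve{z_t}^2}{\ell(m+n)}}$. With $\mu=1$ the first term is $O(R^2)$ and, by the polynomial growth of $\ve{z_t}$, the logarithmic factor is $O(\poly\ln T)$. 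The comparator's residual at step $t+1$ is, up to truncation error, the steady-state Kalman innovation $e_{t+1}$, a zero-mean Gaussian with covariance bounded in terms of $C_{\mathrm{sys}}$, so $\ve{e_{t+1}}^2=O(n\ln(T/\de))$ uniformly with probability $1-\de$; the first $\ell$ steps, excluded from the regret, likewise contribute $O(\poly\ln)$. It therefore remains only to show that the \emph{algorithm's} instantaneous loss $\ve{\wh y_{t+1}-y_{t+1}}^2$ is $O(\poly\ln)$ uniformly over $t$.

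This last point is the crux. Decompose $\wh y_{t+1}-y_{t+1}=\pa{(F_t,G_t)-(F,G)}z_t+\pa{(F,G)z_t-y_{t+1}}$; the second summand is $-e_{t+1}$ plus the filter truncation error, both $O(\poly\ln)$, so the task is to bound the estimation error $\pa{(F_t,G_t)-(F,G)}z_t$. Following the volume-doubling philosophy, the plan is to write the LDS-generated regressor $z_t$ as a linear combination $\sum_{s<t}\alpha_s^{(t)}z_s$ of earlier regressors plus a correction term that is bounded uniformly in $t$ (absorbing the $O(\poly)$ most recent input and noise contributions that cannot be matched by past regressors), the combination reproducing the polynomially growing component of $z_t$ via finite-difference coefficients that annihilate the degree-$(r-1)$ behavior of the marginally stable dynamics. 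Given such a decomposition, a ridge-leverage estimate ($z_t^\t V_t^{-1}z_t\le(\sum_s|\alpha_s^{(t)}|)^2$ with $V_t$ the regularized Gram matrix) together with the structural-condition lemma of Section~\ref{s:ols} bounds the loss at step $t$ by $(\sum_s|\alpha_s^{(t)}|)^2$ times earlier losses plus the correction and comparator residuals, which an induction converts into a uniform bound. Because only the noisy $y_t=Cx_t+\eta_t$ is observed, producing and controlling this decomposition requires three ingredients: \emph{observability} of $(A,C)$, to express $x_{t-\ell+1}$ (hence the deterministic part of $z_t$) as a fixed linear functional of a window of past observations and inputs up to an $O(\eta)$ term; the \emph{structural/volume-doubling lemma} for the polynomially growing latent dynamics, bounding $\sum_s|\alpha_s^{(t)}|$; and, crucially, a \emph{martingale concentration} bound controlling the accumulated effect of the Gaussian $\xi_s,\eta_s$ across the decomposition, which replaces the worst-case structural lemma underlying Theorem~\ref{t:main} and is exactly what yields a $\poly\ln$ rather than polynomial rate.

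Assembling: substituting the uniform loss bound into Theorem~\ref{t:orr}, adding the $O(\poly\ln)$ truncation error, and taking a union bound over the finitely many high-probability events yields $R_T(A,B,C)\le\poly\pa{\ell,C_{\mathrm{sys}},R,(d+m)^r,(\ln T)^r,(\ln C)^r,\ln(1/\de)}$, where the $(d+m)^r$ and $(\ln T)^r$ factors trace to the volume-doubling step (whose coefficient norms are exponential in the Jordan order $r$) and $\ell$ enters through the regressor dimension $\ell(m+n)$. I expect the main obstacle to be the third paragraph: reconciling the deterministic volume-doubling structure with the facts that the latent state is seen only through $C$ and corrupted by $\eta_t$, and carrying out the martingale analysis for the noise terms while keeping the truncation length $\ell$ — which itself grows like $\ln T$ — coupled correctly to the polynomial state growth and to the choice of $\eps$ inside $R(\eps)$.
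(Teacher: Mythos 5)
Your outer skeleton matches the paper's: truncate the strictly stable Kalman filter at the sufficient length $\ell$, treat Algorithm~\ref{a:ols-ar} as online least squares against the truncated filter $(F,G)$ via Theorem~\ref{t:orr}, and reduce everything to a uniform polylogarithmic bound on the instantaneous prediction error. But the crux step in your third paragraph — the one you yourself flag as the obstacle — deviates from what actually works, in two ways. First, you propose to use observability of $(A,C)$ to express the latent state as a linear functional of a window of past observations, i.e.\ to invert the observation map. This is both unnecessary and harmful: inverting $O_n$ introduces a dependence on $\sigma_{\min}(O_n)^{-1}$, a conditioning quantity that does not appear in $C_{\mathrm{sys}}$ and hence not in the claimed bound. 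The paper goes in the opposite direction: it applies the volume-doubling lemma (Lemma~\ref{l:xt-l1-comb}) to the \emph{lifted latent} state $v_t'=(h_t;\ldots;h_{t-\ell+1};u_t;\ldots;u_{t-\ell+1})$, which evolves as a fully observed LDS under a block companion matrix $A'$, obtaining $v_t'=\sum_{s<t}a_sv_s'+v$ with $\sum_s|a_s|\le L_a$ and $\ve{v}\le L_v'$. It then left-multiplies by $C'=\operatorname{diag}(I_\ell\otimes C,\,I_{\ell m})$ and observes that the \emph{same} coefficients $a_s$ decompose the observed regressor $v_t=C'v_t'+(\eta_{t:t-\ell+1};\mathbf 0)$, with the $\eta$'s absorbed into the residual at a cost of $(L_a+1)\sqrt{\ell}\,C_\eta$. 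Only $\ve{C}_2\le C_C$ is used here; Assumption~\ref{asm:obs} is needed solely to guarantee that the steady-state Kalman filter exists and is strictly stable, so that $R(\cdot)$ is finite.

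Second, your mechanism for turning the decomposition into a uniform loss bound — "the loss at step $t$ is bounded by $(\sum_s|\alpha_s^{(t)}|)^2$ times earlier losses \ldots which an induction converts into a uniform bound" — does not close: the coefficient sum is $L_a=\Theta(\ell(d+m))>1$, so that recursion compounds rather than stabilizes. No induction over time is used in the paper. Instead, Lemma~\ref{l:ls-calc} gives the exact expression $A_tv_t-w_t=\sum_{s<t}(\ep_{s+1}+\zeta_{s+1})v_s^\top\Si_t^{-1}v_t-\mu(F,G)\Si_t^{-1}v_t-\ep_{t+1}-\zeta_{t+1}$, and Lemma~\ref{l:stoch} bounds the martingale sum directly: the $\ell_1$ decomposition yields $\ve{\Si_t^{-1/2}v_t}^2\le(L_a^2+L_v^2/\mu)d$, hence a variation bound $\sum_s(v_s^\top\Si_t^{-1}v_t)^2\le(L_a^2+L_v^2/\mu)d^2$, and Azuma's inequality is applied over an $\ep$-net of the vector $z=\Si_t^{-1}v_t$ (which is not adapted to the filtration, hence the net); the truncation error $\ep_{t+1}$ is handled inside the same lemma as a bounded perturbation with $C_\ep=\ep$ small enough that its $\sqrt{T}$-weighted contribution stays polylogarithmic. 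You correctly name martingale concentration as the essential ingredient, but without the explicit least-squares formula, the $\ep$-net device, and the forward (latent-to-observed) transfer of the decomposition, the plan as written does not assemble into a proof.
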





\section{Outlines of main proofs}
\llabel{s:sketch}
In this section, we explain the key ideas behind our results, using a fully observable LDS with adversarial noise as an example. For simplicity, we sketch the proof in the case that the matrix $A$ is diagonalizable, matrix $B$ is zero, and $\|\xi_s\|_2=O(1)$, which already captures the core difficulty of the problem. In the proof sketch, we assume that all relevant parameters for the LDS are $O(1)$. 
Full proofs are in Sections~\ref{s:ols}--\ref{s:proof-hidden}.

\subsection{Regret bounds for online least squares with large inputs}

Our starting point is the regret bound for online least squares, Theorem~\ref{t:orr}, which depends on the maximum prediction error $\max_{0\le t\le T-1} \ve{y_t-A_tx_t}^2$. To obtain sublinear regret using this bound, we must show that the maximum prediction error is $o(T/\log T)$. We show in Theorem~\ref{thm:ols-regret} that this holds as long as the following structural condition (formally defined in Definition~\ref{d:q_anom_free}) on the regressor sequence holds:

\begin{df}[Anomaly-free sequences; informal]\llabel{d:anom_free}
A sequence $(x_t)_{t<T}$  is \textbf{anomaly-free} if whenever the projection of any $x_t$ onto a unit vector $w$ is large, then there must have been $\Om(|w^\top x_t|)$ indices $s<t$ for which the projection of $x_s$ to $w$ has norm at least $\Om(|w^\top x_t|)$. 
\end{df}

Intuitively, the inputs are anomaly-free if no input $x_t$ is large in a direction where we have not already seen many inputs. Note this does not hold in the general case of polynomially-bounded $x_t$; see the counterexample in Appendix~\ref{s:counterexample}. To prove Theorem~\ref{thm:ols-regret}, we first express $A_tx_t-y_t$ in terms of the preceding states and errors $\braces{ x_s, \xi_s }_{s<t}$ (Lemma~\ref{l:ls-calc}). 
Next, we show this expression is bounded in the 1-dimensional case (Lemma~\ref{l:pred-error-1}). Finally, we reduce the general $d$-dimensional case to the 1-dimensional case by diagonalizing the sample covariance matrix $\sumz s{t-1} x_sx_s^\top$. In the reduction, we project onto the eigendirections; this is why we want there to be many large inputs when projected to any direction $w$. 



Note that Theorem~\ref{thm:ols-regret} is stated more generally, allowing $\Om(|w^\top x_t|^\alpha)$ indices where $|w^\top x_s|$ is large. This allows superlinear growth in the $x_t$, and hence can be applied to dynamical systems with matrices having Jordan blocks.

\subsection{Proving LDS states are anomaly-free}
\label{subsec:structural-condition}

Our main result (Theorem~\ref{t:main}) follows by verifying that LDS states are anomaly-free. The main idea is that the evolution of the LDS ensures that $x_t$ is always approximately a linear combination of past states with small coefficients. 
More precisely, we need $x_t = \pa{\sumz s{t-1} a_sx_s} + v$ with small $\sumz s{t-1} |a_s|$ and $\ve{v}_2$ (Lemma~\ref{l:xt-l1-comb}). Once we have this, projecting onto $w$ gives $w^\top x_t=\pa{\sumz s{t-1} a_s w^\top x_s} + w^{\top} v $, showing that one of the projections $|w^\top x_s|$ is large. To obtain many indices $s$ for which this is large, we apply the same argument to the $k$-step dynamical systems defined by $A^2$, $A^3$, and so forth, keeping track of how many times an index can be overcounted. This shows the states are anomaly-free and finishes the proof of Theorem~\ref{t:main}.

We provide two approaches to decompose $x_t$ into previous states as needed.  The simpler approach provides coefficients of size $\exp(d)$, while a more involved approach provides $\poly(d)$ sized coefficients. In order to have a $\poly(d)$ dependence in the final regret bound, the latter approach is necessary.

\subsection{$\exp(d)$-sized coefficients using the Cayley-Hamilton theorem}\label{ss:exp}
To show that $x_t$ is always a linear combination of previous states with small coefficients, a first idea is to use the Cayley-Hamilton theorem. In the noiseless case, the theorem implies that the $x_t$ satisfy a recurrence $x_t = \sumo id a_i x_{t-i}$, where $a_i$ are the coefficients of the characteristic polynomial of $A$. Adding the noise back, we may get an error term $v$ of size $\sumo id |a_i|$ by inspecting how the noise propagates through this recurrence. Even though $\sumz id |a_i|$ can be as large as $2^d$, this suffices to get a bound that is sublinear in $T$ while exponential in $d$. For ease of reading, we first present this weaker result in Lemma~\ref{l:ch} in Section~\ref{s:ch}.\footnote{We note that an alternate approach to the one we present below is to find a multiple of the characteristic polynomial with small coefficients; see Appendix~\ref{a:php}.}

\subsection{$\poly(d)$-sized coefficients via volume doubling}
As an alternative, we now present a novel volume-doubling argument leading to a recurrence with only $\poly(d)$ coefficient size, which may be of independent interest. For the ease of presentation, we introduce some notation below:
 \begin{itemize}
     \item \emph{$\ell_1$-span of $S$.} $\De(S) := \set{\sum_{u\in S} a_uu}{a_u\in \R, \sum_{u\in S} |a_u|\le 1}$. For short, we denote $\De_t:=\De\left(\{x_s\}_{s=0}^t\right)$.
     \item \emph{Norm with respect to $\De(S)$.} $\ve{x}_{\De(S)}:=\min_{a_s, v}\sumz st |a_s|+\ve{v}_2~~~\text{s.t.} ~~x=\pa{\sumz st a_s x_s} + v ~\text{for}~x_s\in S$. 
     \item \emph{Set of ``outlier'' indices.} $I_t := \set{0\le s\le t}{\ve{x_s}_{\De_{s-1}}\ge \fc{2}{\ln 2}d}$.
\end{itemize} 

Now, our goal is summarized as bounding $\ve{x_t}_{\De_{t-1}}$. To this end, we first prove a upper bound the size of $|I_t|$ using a general potential-based argument, and then relate it to the norm $\ve{x_t}_{\De_{t-1}}$ by unrolling the dynamics appropriately.

\paragraph{Bounding the number of outliers.} 
We have to show that $|I_t|$ is at most polynomial in $d$ and logarithmic in $t$.
We prove a general lemma that if $\ve{x}_{\De(S)}$ is large enough ($\ve{x}_{\De(S)}\ge \fc{2}{\ln 2}d$), then adding $x$ to the $S$ increases its volume significantly: 
$\Vol\pa{\De(S\cup \{x\})}\ge 2\Vol
\pa{\De(S)}$ (Lemma~\ref{l:2vol}).
Applied to our situation, this shows that if  $\ve{x_t}_{\De_{t-1}}\ge\fc{2}{\ln 2}d$, then $\Vol(\De_t) \ge 2 \Vol(\De_{t-1})$. The total volume is bounded by $(\max_{1\le s\le t} \ve{x_s})^d$, so the number of outliers at or before time $t$ is bounded by $\log_2 \ba{(\max_{1\le s\le t} \ve{x_s})^d}$, which is polynomial in $d$ and logarithmic in $t$ (Lemma~\ref{l:ST-card}).

\paragraph{Bounding $\sumz s{t-1} |a_s|$ and $\ve{v}_2$.} We obtain an inequality showing
that $\ve{x_t}_{\De_{t-1}}$ is not much larger than $\ve{x_{t-k}}_{\De_{t-k-1}}$ for small delays $k$. To see this, note that $x_t$ is generated from $x_{t-k}$ by evolving the LDS $k$ times, keeping track of noise. The contribution from the noise here is at most $\poly(k)$. Now, it suffices to find a small $k$ such that $\ve{x_{t-k}}_{\De_{t-k-1}}$ is small, or in other words $t-k$ is not an outlier. Because the number of outliers is $O(d\log t)$, we can choose $k = O(d \log T)$, and we conclude $\ve{x_t}_{\De_{t-1}}$ is at most $\poly(d\log t)$. This argument is formalized in the proof of Lemma~\ref{l:xt-l1-comb}.



\paragraph{Controlling overcounting using prime index gaps.} One technicality is that we have to apply the same argument to the dynamical systems $x_t \approx A^p x_{t-p}$ for different values of $p$. For each value of $p$, we get an index $t-pk$ such that $w^\top x_{t-pk}$ is large. To make sure we obtain enough indices this way, in the proof of Lemma~\ref{l:many} we only take $p$ to be prime, and we use a lower bound on primorials $\prod_{\text{prime } p\le X} p$ to show that we can collect enough distinct indices.

\subsection{Stochastic cases}
Finally, we provide a brief comment on how to prove Theorems~\ref{t:main-stoch} and \ref{t:kalman}. In the fully-observed setting, we can use a martingale argument, rather than the structural result, to obtain 
$\poly\log(T)$ regret. To do this, we show that with high probability, $\max_{0\le t\le T-1} \ve{A_tv_t-v_{t+1}}$ is bounded by $\poly\log(T)$ (Lemma~\ref{l:stoch}).

Let $A_t$ be the matrix predicted by online least squares with regularization parameter $\mu$.
By Lemma~\ref{l:ls-calc}, $A_tx_t-x_{t+1} = \sumz s{t-1} \xi_{s+1} v_s^\top \Si_t^{-1}v_t - \mu A \Si_t^{-1} x_t - \xi_{t+1}$. The main term we need to bound is the first one. Let $b_s=v_s^\top \Si_t^{-1} v_t$. Pretending for a moment that $b_s$ is $\cal F_s = \si(\xi_1,\ldots, \xi_s)$-valued, by Azuma's inequality it suffices to bound the variation $\sumz s{t-1} b_s^2$. We have already shown that $v_t = \sumz s{t-1} a_sv_s$ with $\sumz s{t-1} a_s^2 \le\pa{ \sumz s{t-1} |a_s|}^2 \le O(1)$, so we can bound the variation $\sumz s{t-1} b_s^2$ in terms of $\sumz s{t-1} \ve{\Si_t^{-\rc 2} v_s}^2$. By definition $\Si_t = \mu I+ \sumz s{t-1} v_sv_s^\top $, so $\sumz s{t-1} \ve{\Si_t^{-\rc 2} v_s}^2\le d$.

However, we can't apply Azuma's inequality directly, since $b_s$ depends on $v_t$, and thus is not $\cal F_s$-valued. However, the dependence on non-$\cal F_s$-valued random variables is only through $z_t=\Si_t^{-1} v_t$, which does not depend on $s$, so we can use Azuma's inequality on an $\ep$-net of possible values for $z= \Si_t^{-1}v_t$. More precisely, note that $z_t$ has the property that $\sumz s{t-1} (v_s^\top z_t)^2$ is small, so it suffices to bound $\sumz s{t-1} \xi_{s+1} (v_s^\top z)$ for a $\ep$-net of $z$ such that $\sumz s{t-1} (v_s^\top z)^2$ is small. These $z$'s live in a $d$-dimensional space, so we only incur factors of $d$.

For the partially-observed setting, we reduce to Theorem~\ref{t:main-stoch} by lifting the state: we choose a large enough horizon $\ell$ such that truncating the unrolled Kalman filter to length $\ell$ incurs an approximation error of at most $\ep$. Then, by letting the state space be the past $\ell$ observations and inputs, the partially observed LDS is approximately described by a fully-observed LDS. This incurs an additional polynomial factor in the length $\ell$.

\section{Conclusion}
\label{s:conclusion}

We have shown that online least-squares, with a carefully chosen regularization and a refined analysis, has a sublinear regret guarantee in marginally stable linear dynamical systems, even in the most difficult cases when the state can grow polynomially. In the stochastic setting, adopting the same view of low-regret prediction as opposed to parameter recovery, we have shown logarithmic regret bounds and bypassed usual isotropic noise assumptions. We list a few plausible lines of further inquiry in appendix~\ref{a:future}.
\appendix
\printbibliography
\section{Impossibility of sublinear regret without the LDS}
\label{s:counterexample}
For sake of completeness, we include a simple one-dimensional counterexample showing the insufficiency of black-box regret bounds for online least-squares when it is only assumed that an upper bound for the regressors grows with time. Even without adversarial perturbations, this information-theoretic lower bound shows that we require a refined notion of gradual growth of regressors $x_t$, as in the structural results of Section~\ref{subsec:structural-condition}, to achieve sublinear regret.

\begin{proposition}
\label{thm:impossible-without-lds}
There is a joint distribution over $a \in [-1, 1]$ and length-$T$ sequences $x_t, y_t \in \R$, for which $y_t = ax_t$ and $\abs{x_t} \leq t$, but any online algorithm incurs at least $T^2$ expected regret.
\end{proposition}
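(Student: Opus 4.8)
The plan is to exhibit a distribution that conceals the value of $a$ until the very last round, at which point a single large regressor is revealed. Concretely: let $a$ be uniform on $\{-1,+1\}$, set $x_t = 0$ for $1 \le t \le T-1$ and $x_T = T$, and let $y_t = a x_t$ as required, so that $y_t = 0$ for $t < T$ and $y_T = aT$. Feasibility is immediate: $a \in [-1,1]$, $y_t = a x_t$ by construction, $|x_t| = 0 \le t$ for $t < T$, and $|x_T| = T \le T$.

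The first step is to observe that the comparator $A = a$ (as a scalar $1\times 1$ matrix) incurs zero cumulative loss, since $\sum_{t=1}^T (a x_t - y_t)^2 = 0$. Hence $R_T \ge R_T(a) = \sum_{t=1}^T (\hat y_t - y_t)^2$ is exactly the algorithm's own cumulative loss, so in particular $R_T \ge (\hat y_T - y_T)^2$, and it suffices to lower bound $\E[(\hat y_T - y_T)^2]$.

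The second and only substantive step is an information-theoretic argument. In the online protocol, $\hat y_T$ is produced from the observed history $(x_1, y_1, \ldots, x_{T-1}, y_{T-1}, x_T) = (0, \ldots, 0, T)$ together with the algorithm's internal randomness $\omega$; this history carries no information about $a$, so conditionally on $\omega$ the prediction is a constant $c = c(\omega)$ independent of $a$. Then
\[
\E_a\big[(c - aT)^2\big] = c^2 - 2cT\,\E[a] + T^2\,\E[a^2] = c^2 + T^2 \ge T^2,
\]
using $\E[a] = 0$ and $\E[a^2] = 1$. Averaging over $\omega$ gives $\E[(\hat y_T - y_T)^2] \ge T^2$, and combined with the first step, $\E[R_T] \ge T^2$.

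I do not expect a genuine obstacle; the construction is short. The point is conceptual rather than technical: a pure magnitude bound $|x_t| \le t$ does nothing to stop an adversary from hiding all relevant information and then presenting one maximally informative-but-too-late example, which is precisely why the ``anomaly-free''/gradual-growth structure of Section~\ref{subsec:structural-condition}, and not mere boundedness, is needed for sublinear regret. The only formalization subtleties are that $y_T$ is disclosed only after $\hat y_T$ is committed (which is what forces $\hat y_T$ to be independent of $a$), and that the statement covers randomized algorithms as well, which the conditioning on $\omega$ handles cleanly.
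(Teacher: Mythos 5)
Your proposal is correct and follows essentially the same route as the paper: the identical construction ($a=\pm 1$ uniform, $x_t=0$ for $t<T$, $x_T=T$) together with the observation that the history reveals nothing about $a$, so any prediction $c$ suffers expected loss $c^2+T^2\ge T^2$ while the comparator $a$ suffers zero. Your write-up merely makes explicit the steps the paper leaves implicit (the zero-loss comparator and the handling of algorithmic randomness via conditioning on $\omega$).
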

\begin{proof}
We construct this distribution, choosing $a = \pm 1$ with equal probability. We choose $x_t = y_t = 0$ for all $1 \leq t \leq T-1$, then choose $x_T = T$, so that $y_T = a x_T$. In this example, at time $T$, all previous feedback $(x_t, y_t)$ is independent of $a$, so the best prediction at time $T$ is $\hat{y}_T=0$, which 
suffers expected least-squares loss $T^2$.
\end{proof}

\section{Anomaly-free inputs imply sublinear OLS regret}
\llabel{s:ols}


We begin by defining a structural condition on OLS inputs, whereby if any time $x_t$ is large in a direction then many previous $x_s$'s are also large in that direction.

\begin{df}[Anomaly-free sequences]\llabel{d:q_anom_free}
A sequence $(x_t)$ is $(c, c_1, c_2, \al)$-\emph{anomaly-free} if for any $t$ and any unit vector $w\in \R^d$, if $|w^\top x_t|=M> c$, there exist at least $c_1M^\al$ indices $s<t$ such that $|w^\top x_s|\ge c_2 M$. 
\end{df}

We show that when 
every $y_t$ is obtained from anomaly-free $x_t$ from a fixed linear transformation, plus a bounded perturbation, 
then OLS attains sublinear regret.
\begin{thm}\llabel{thm:ols-regret}
For constants $C_\xi, c,c_1,c_2, \al \in (0,1]$, $c_1\le \rc 2$, $c\ge \rc{c_1}$, suppose an online least-squares problem satisfies the following conditions:
\begin{enumerate}
\item There exists $A^* \in \R^{n\times m}$ such that for all $t$, $y_t=A^*x_{t}+\xi_t$ with $\ve{\xi_t}\le C_\xi$.
\item The input vectors are bounded: $\ve{x_t}\leq B$.
\item
$(x_s)_{s<t}$ is $(c, c_1, c_2, \al)$-anomaly-free. 
\end{enumerate}
Then online least squares with regularization parameter $\mu$ incurs regret 
\begin{align*}
R_t (A)&\le 
\mu \ve{A}_F^2 + O\Bigg(m^3 \maxprederrorJA^2 \\
&\quad \cdot 
\ln \pa{1+\fc{tB^2}m}\Bigg).
\end{align*}
If (1a) $\mu \ge c^{\fc{4(\al+2)}{\al+4}} c_1^{\fc{2}{\al+4}}c_2^{\fc 4{\al+4}} t^{\fc{\al+2}{\al+4}}$, (1b) $\ve{A}_2\le \fc{C_\xi ct}{\mu}$, (2) $\mu \le \fc{c_2^{\fc{4(\al+2)}{\al}}t^{\fc{\al+2}\al}}{c_1^{\fc 2\al}}$, and (3) $\mu \le \fc{t}{C_\xi^2\ve{A}^2}$, then
$
    R_t(A) \le \mu \ve{A}_F^2 + O\pf{m^3 C_\xi^2 t}{c_1^{\fc 2{\al+2}} c_2^{\fc 4{\al+2}} \mu^{1-\fc 2{\al+2}}}. 
$
If $\ve{A}_F = O(m)$, 
$C_\xi = O(1)$, and $\mu = \fc{m^{\fc{\al+2}{\al+1}}
t^{\fc{\al+2}{2(\al+1)}}}{c_1^{\rc{\al+1}} c_2^{\fc 2{\al+1}}}$ satisfies these conditions, then 
\begin{align}R_t(A) = O\pf{m^{2+\fc{1}{\al+1}}
t^{\fc{\al+2}{2(\al+1)}}\ln \pa{1+\fc{tB^2}{m}}}{c_1^{\rc{\al+1}} c_2^{\fc 2{\al+1}}}
\llabel{e:ols-simp-R}
\end{align}
\end{thm}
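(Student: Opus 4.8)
The plan is to combine the generic online-least-squares regret bound of Theorem~\ref{t:orr} — which is governed by the worst-case instantaneous error $\max_t\ve{A_tx_t-y_t}^2$ — with a pointwise bound on that error in which the anomaly-free hypothesis does all the work. Write $\Si_t := \mu I + \sumz s{t-1}x_sx_s^\top$ for the regularized sample covariance. The regularized least-squares predictor is $A_t = \pa{\sumz s{t-1}y_sx_s^\top}\Si_t^{-1}$, so substituting $y_s = A^*x_s+\xi_s$ (this is Lemma~\ref{l:ls-calc}) gives
\[
A_tx_t - y_t \;=\; \pa{\sumz s{t-1}\xi_sx_s^\top}\Si_t^{-1}x_t \;-\; \mu A^*\Si_t^{-1}x_t \;-\; \xi_t .
\]
The third term has norm $\le C_\xi$, while the first two are controlled by the single scalar $q_t := x_t^\top\Si_t^{-1}x_t$: indeed $\ve{\Si_t^{-1}x_t}\le\mu^{-1/2}q_t^{1/2}$, so $\ve{\mu A^*\Si_t^{-1}x_t}\le\ve{A^*}_2\mu^{1/2}q_t^{1/2}$; and, by Cauchy--Schwarz, $\ve{\pa{\sumz s{t-1}\xi_sx_s^\top}\Si_t^{-1}x_t}\le C_\xi\sumz s{t-1}\abs{x_s^\top\Si_t^{-1}x_t}\le C_\xi q_t^{1/2}\sumz s{t-1}\ve{\Si_t^{-1/2}x_s}\le C_\xi q_t^{1/2}\sqrt{t}\,\pa{\sumz s{t-1}x_s^\top\Si_t^{-1}x_s}^{1/2}$, and the trace identity $\sumz s{t-1}x_s^\top\Si_t^{-1}x_s=\Tr\pa{\Si_t^{-1}(\Si_t-\mu I)}=m-\mu\Tr(\Si_t^{-1})\le m$ bounds the last factor by $\sqrt{tm}$.

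Everything therefore reduces to bounding $q_t = x_t^\top\Si_t^{-1}x_t$ using anomaly-freeness. Diagonalize $\Si_t = \sumo im\lambda_iw_iw_i^\top$ with $\{w_i\}$ orthonormal — legitimate precisely because $\Si_t$ does not involve $x_t$ — so that $q_t = \sumo im (w_i^\top x_t)^2/\lambda_i$; this is the reduction of the $m$-dimensional problem to $m$ one-dimensional instances, which is handled in Lemma~\ref{l:pred-error-1}. Fix $w=w_i$ and put $M := \abs{w^\top x_t}$. If $M\le c$ then $(w^\top x_t)^2/\lambda_i\le c^2/\mu$. If $M>c$, the anomaly-free property supplies at least $c_1M^\al$ indices $s<t$ with $\abs{w^\top x_s}\ge c_2M$, hence $\lambda_i\ge\mu+c_1c_2^2M^{\al+2}$, and a one-variable optimization shows $\sup_{M>c}\frac{M^2}{\mu+c_1c_2^2M^{\al+2}}$ is attained near $M=\pa{2\mu/(\al c_1c_2^2)}^{1/(\al+2)}$ with value $\Theta\pa{c_1^{-2/(\al+2)}c_2^{-4/(\al+2)}\mu^{2/(\al+2)-1}}$ (absolute constant, since $\al\le1$). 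Summing over $i$ gives $q_t = O\pa{m\,\max\bc{c^2/\mu,\; c_1^{-2/(\al+2)}c_2^{-4/(\al+2)}\mu^{2/(\al+2)-1}}}$. Plugging this back into the two norm bounds, separating the $M\le c$ and $M>c$ contributions and maximizing over $t$, yields $\max_t\ve{A_tx_t-y_t}\le O\pa{m\cdot\maxprederrorJA}$ (with $\ve{A}_2$ there standing for $\ve{A^*}_2$); inserting this into Theorem~\ref{t:orr}, whose $\ln(1+tM^2/m)$ factor becomes $\ln(1+tB^2/m)$ because $\ve{x_t}\le B$, produces the first displayed inequality of the theorem.

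The remainder is bookkeeping. Conditions~(1a)--(3) are exactly what is needed so that, first, inside $\maxprederrorJA$ the term of order $\mu^{2/(\al+2)-1}$ dominates the $c^2/\mu$, $1/c_2^2$ and $\ve{A}_2$-type contributions, and second, the regularization penalty $\mu\ve{A}_F^2$ is of the same order as the resulting main term; these simplifications collapse the bound to $R_t(A)\le\mu\ve{A}_F^2 + O\pa{m^3C_\xi^2 t/(c_1^{2/(\al+2)}c_2^{4/(\al+2)}\mu^{1-2/(\al+2)})}$. Substituting the stated choice of $\mu$, which equalizes $\mu\ve{A}_F^2$ with the second term up to the dimension factors, and using $\ve{A}_F=O(m)$ and $C_\xi=O(1)$ then gives~\eqref{e:ols-simp-R}. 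The main obstacle I anticipate is the dimension-reduction step together with the extremization over $M$: one must invoke the anomaly-free property at the \emph{data-dependent} eigenvectors of $\Si_t$, combine the two competing regimes $M\le c$ and $M>c$, and take a supremum over all eigendirections and all $t\le T$, while keeping the dependence on $\mu$ in the mild form $\mu^{2/(\al+2)-1}$. This gentle decay is exactly what lets a large regularizer absorb polynomially growing $x_t$ and still leave sublinear regret, and it is the step at which a black-box gradient or exp-concavity bound — which sees only $\max_t\ve{x_t}\le B$ — breaks down.
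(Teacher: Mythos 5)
Your proposal is correct, and it shares the paper's overall architecture: the residual decomposition $A_tx_t-y_t=\sum_{s<t}\xi_sx_s^\top\Si_t^{-1}x_t-\mu A^*\Si_t^{-1}x_t-\xi_t$ (Lemma~\ref{l:ls-calc}), diagonalization of $\Si_t$, invocation of anomaly-freeness at the data-dependent eigendirections to force $\lambda_i\ge\mu+c_1c_2^2M^{\al+2}$, and insertion of the resulting uniform error bound into Theorem~\ref{t:orr}. Where you genuinely diverge is the middle estimate. The paper reduces to $m$ scalar problems and, in each (Lemma~\ref{l:pred-error-1}), splits the sum $\sum_s|x_s|$ into the anomalous subsequence and its complement, applies Jensen's inequality to the complement, and maximizes over the worst-case configuration $x\in[0,M]$; this is what produces the three-way $\max$ in $\maxprederrorJA$, with the separate $C_\xi c^2t/\mu$ and $C_\xi/c_2^2$ terms. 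You instead collapse everything onto the single scalar $q_t=x_t^\top\Si_t^{-1}x_t$ via the global Cauchy--Schwarz bound $\sum_s|x_s^\top\Si_t^{-1}x_t|\le\sqrt{tm}\,q_t^{1/2}$ and the trace identity $\sum_sx_s^\top\Si_t^{-1}x_s\le m$ (the paper only deploys this ``elliptic potential'' identity later, in the stochastic analysis as Lemma~\ref{l:vSvd}), then bound $q_t$ by a one-variable optimization. This is cleaner and avoids the fiddly extremization; the price is that your intermediate bound does not literally reproduce the expression $m\cdot\maxprederrorJA$ (e.g.\ your small-$M$ contribution is $C_\xi mc\sqrt{t/\mu}$ rather than $mC_\xi c^2t/\mu$, and your $\ve{A^*}_2$ term carries $\sqrt m$ rather than $m$), so the first display of the theorem as stated is obtained only up to replacing that max by a comparable one. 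Since the dominant term $\frac{C_\xi m\sqrt t}{c_1^{1/(\al+2)}c_2^{2/(\al+2)}\mu^{1/2-1/(\al+2)}}$ is identical in both derivations, conditions (1a)--(3) collapse your bound to the same $O\pa{m^3C_\xi^2t/(c_1^{2/(\al+2)}c_2^{4/(\al+2)}\mu^{1-2/(\al+2)})}$ and the choice of $\mu$ yields~\eqref{e:ols-simp-R} exactly as in the paper, so nothing essential is lost.
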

Note that if $t\to \infty$ while the other constants are fixed, the inequalities do hold for the choice of $\mu$. The only inequality that is not immediately clear is (1a); it follows from comparing the exponents of $t$: $\fc{\al+2}{2(\al+1)} \ge \fc{\al+2}{\al+4}$ for $\al\le 1$.

By Theorem~\ref{t:orr}, it suffices to bound $\ve{A_tx_t-y_{t}}$, which is done in Lemma~\ref{l:pred-error-d}. We start with the following calculation.
\begin{lem}\llabel{l:ls-calc}
Let $x_t\in \R^m$, $A\in \R^{n\times m}$, $\xi_{t}\in \R^n$ for $0\le s\le t-1$. Suppose $y_s=Ax_s+\xi_{s}$ for each $0\le s\le t-1$. 
Let $A_t=\amin_{A\in \R^{n\times m}} \ba{\mu \ve{A}_F^2 + \sumz s{t-1} \ve{y_t-Ax_t}_2^2}$. 
Let 
\begin{align*}
\Si_t &= \mu I_m + \sumz s{t-1} x_sx_s^\top .
\end{align*}
Then 
\begin{align}
A_t x_t - y_t &=
\sumz s{t-1} \xi_{s}x_s^\top \Si_t^{-1} x_t
- \mu A \Si_t^{-1} x_t - \xi_{t}.
\llabel{e:pred-error}
\end{align}
\end{lem}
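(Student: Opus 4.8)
\textbf{Proof proposal for Lemma~\ref{l:ls-calc}.} This is the standard normal-equations computation for ridge-regularized least squares, so the plan is short and the only thing to be careful about is bookkeeping. The objective $A\mapsto \mu\ve{A}_F^2 + \sumz s{t-1}\ve{y_s-Ax_s}_2^2$ is a strictly convex quadratic in the entries of $A$ (it is a sum of the convex quadratics $\ve{y_s-Ax_s}^2$ and the strictly convex term $\mu\ve{A}_F^2$, assuming $\mu>0$ as in the algorithm, so that $\Si_t$ is invertible), hence its unique minimizer $A_t$ is characterized by the vanishing of the gradient. First I would differentiate: $\nabla_A\bigl[\mu\ve{A}_F^2+\sumz s{t-1}\ve{y_s-Ax_s}^2\bigr] = 2\mu A - 2\sumz s{t-1}(y_s-Ax_s)x_s^\top$, so the first-order condition gives
\begin{align*}
\mu A_t + \sumz s{t-1} A_t x_s x_s^\top = \sumz s{t-1} y_s x_s^\top,
\qquad\text{i.e.}\qquad
A_t \Si_t = \sumz s{t-1} y_s x_s^\top,
\end{align*}
using the definition $\Si_t = \mu I_m + \sumz s{t-1} x_s x_s^\top$. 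Since $\Si_t\succ 0$ it is invertible, so $A_t = \bigl(\sumz s{t-1} y_s x_s^\top\bigr)\Si_t^{-1}$.

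Next I would substitute the hypothesis $y_s = Ax_s+\xi_s$ (for $0\le s\le t-1$, and likewise at index $t$, which is implicitly needed to make sense of $y_t,\xi_t$ in the conclusion) into this expression:
\begin{align*}
A_t = \Bigl(\sumz s{t-1} (Ax_s+\xi_s) x_s^\top\Bigr)\Si_t^{-1}
= A\Bigl(\sumz s{t-1} x_s x_s^\top\Bigr)\Si_t^{-1} + \Bigl(\sumz s{t-1} \xi_s x_s^\top\Bigr)\Si_t^{-1}.
\end{align*}
The key simplification is $\sumz s{t-1} x_s x_s^\top = \Si_t - \mu I_m$, which turns the first term into $A(\Si_t-\mu I_m)\Si_t^{-1} = A - \mu A\Si_t^{-1}$, giving $A_t = A - \mu A\Si_t^{-1} + \bigl(\sumz s{t-1}\xi_s x_s^\top\bigr)\Si_t^{-1}$.

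Finally, multiplying on the right by $x_t$ and subtracting $y_t = Ax_t + \xi_t$ yields
\begin{align*}
A_t x_t - y_t = \bigl(A x_t - \mu A\Si_t^{-1}x_t + \sumz s{t-1}\xi_s x_s^\top \Si_t^{-1} x_t\bigr) - (Ax_t+\xi_t)
= \sumz s{t-1}\xi_s x_s^\top \Si_t^{-1} x_t - \mu A\Si_t^{-1}x_t - \xi_t,
\end{align*}
which is exactly \eqref{e:pred-error}. There is no real obstacle here; the only points worth stating explicitly are (i) convexity/strict convexity to justify that the stationary point is the global minimizer and that $A_t$ is well-defined, and (ii) the algebraic identity $\sumz s{t-1} x_s x_s^\top = \Si_t - \mu I_m$ that produces the $-\mu A\Si_t^{-1}x_t$ term.
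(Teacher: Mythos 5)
Your proposal is correct and follows essentially the same route as the paper's proof: both derive the closed form $A_t = \bigl(\sumz s{t-1} y_s x_s^\top\bigr)\Si_t^{-1}$, substitute $y_s = Ax_s + \xi_s$, use $\sumz s{t-1} x_s x_s^\top = \Si_t - \mu I_m$ to produce the $-\mu A\Si_t^{-1}$ term, and then evaluate $(A_t - A)x_t - \xi_t$. The only difference is that you spell out the first-order condition justifying the minimizer, which the paper states without derivation.
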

\begin{proof}
We calculate
\begin{align*}
A_t &= \amin_A \ba{\mu \ve{A}_F^2 + \sumz s{t-1} \ve{y_t-Ax_t}_2^2}\\
&=\sumz s{t-1} y_sx_s^\top  \Si_t^{-1}= \sumz s{t-1}\pa{ Ax_sx_s^\top \Si_t^{-1} + \xi_{s} x_s^\top \Si_t^{-1}}\\
A &= A \pa{\sumz s{t-1}x_sx_s^\top + \mu I_m} \Si_t^{-1} = \sumz s{t-1} Ax_sx_s^\top \Si_t^{-1} + \mu A \Si_t^{-1}\\
A_t-A &= \sumz s{t-1} \xi_{s} v_s^\top \Si_t^{-1} - \mu A \Si_t^{-1}\\
A_t x_t - y_t &= (A_t-A)x_t - \xi_{t}\\
&= \sumz s{t-1} \xi_{s}x_s^\top \Si_t^{-1} x_t
- \mu A \Si_t^{-1} x_t - \xi_{t}.
\qedhere
\end{align*}
\end{proof}

We now bound the size of the residual when condition~3 from Theorem~\ref{thm:ols-regret} holds.  We focus on the one-dimensional case in Lemma~\ref{l:pred-error-1} where the scale of the unknown parameter is $a$.  Afterwards, we bound the residuals in the general case by diagonalization and reduction to the one-dimensional result in Lemma~\ref{l:pred-error-d}.

\begin{lem}\llabel{l:pred-error-1}
Let $c,c_1,c_2$ be constants, $c_1\le \rc 2$, $c\ge \rc{c_1}$.  Let $0<\al\le 1$. 
Suppose that $(x_s)_{s<t}$ is $(c, c_1, c_2, \al)$-anomaly-free and suppose $|\xi_s|\le C_\xi$ for each $s$, then for $a>0$, 
\begin{align*}
&\sumz s{t-1} \ab{\xi_{s} x_s \pa{\mu + \sumz r{t-1} x_{r}^2}^{-1}x_t}
+a
\ab{\mu\pa{\mu + \sumz s{t-1} x_{s}^2}^{-1}x_t}\\
&\le 
O\pa{\maxprederrorJ
}.
\end{align*}
\end{lem}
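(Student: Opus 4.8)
The plan is to collapse the displayed expression into a single scalar estimate and then split into two regimes according to the size of $M := |x_t|$ relative to the threshold $c$. Write $\Si := \mu + \sumz r{t-1} x_r^2$ and $P := \sumz r{t-1} x_r^2 = \Si - \mu$; both inverses in the statement are the positive scalar $\Si^{-1}$. Pulling out $|x_t|/\Si$, using $|\xi_s|\le C_\xi$, and Cauchy--Schwarz $\sumz s{t-1}|x_s| \le \sqrt{tP}$, the quantity to bound is at most
\[
\frac{C_\xi M\sqrt{tP}}{\Si} \;+\; \frac{a\mu M}{\Si}.
\]
Throughout I use the elementary facts $\frac{\sqrt P}{\Si} \le \frac1{2\sqrt\mu}$ (AM--GM on $\mu + P$), $\frac{\sqrt P}{\Si} \le \frac1{\sqrt\Si}$ (since $P\le\Si$), and $\frac\mu\Si \le 1$; as in Theorem~\ref{thm:ols-regret} we take $c_2 \le 1$, so the middle entry $C_\xi/c_2^2$ of $\maxprederrorJ$ is at least $C_\xi$.

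\textbf{Case $M \le c$.} No structural information is available, but $M$ is small. The first term is $\le C_\xi M\sqrt t \cdot \frac1{2\sqrt\mu} \le \frac{C_\xi c\sqrt t}{2\sqrt\mu}$; since $\sqrt u \le \max(1,u)$ for $u\ge 0$, this is $\le \frac{C_\xi}{2}\max\pa{1, \fc{c^2 t}\mu} \le \frac12\max\pa{\fc{C_\xi}{c_2^2}, \fc{C_\xi c^2 t}\mu}$, dominated by the first two entries of $\maxprederrorJ$. The second term is $\le aM \le ac$, which is also an entry. Hence this case contributes $O\pa{\maxprederrorJ}$.

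\textbf{Case $M > c$.} Now the anomaly-free hypothesis yields at least $c_1 M^\al$ indices $s<t$ with $|x_s|\ge c_2 M$, so $P \ge c_1 c_2^2 M^{\al+2}$ and $\Si \ge \max\{\mu,\, c_1 c_2^2 M^{\al+2}\}$. For the second term, interpolating $\Si \ge \mu^{1-\theta}\pa{c_1 c_2^2 M^{\al+2}}^{\theta}$ with $\theta = \frac1{\al+2}$ cancels the factor $M$ and gives $\frac{a\mu M}{\Si} \le \frac{a\mu^{1/(\al+2)}}{c_1^{1/(\al+2)} c_2^{2/(\al+2)}}$, exactly the first summand of the third entry of $\maxprederrorJ$. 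For the first term, combining $\frac{\sqrt P}{\Si}\le\frac1{\sqrt\Si}$ with the two lower bounds on $\Si$ gives
\[
\frac{C_\xi M\sqrt{tP}}{\Si} \;\le\; C_\xi\sqrt t\cdot\min\pa{\frac{M}{\sqrt\mu},\ \frac1{\sqrt{c_1}\,c_2\,M^{\al/2}}}.
\]
The minimum of an increasing and a decreasing function of $M$ is at most their common value at the crossover $M_\star = \pa{\sqrt\mu/(\sqrt{c_1}c_2)}^{2/(\al+2)}$, and evaluating there produces $\frac{C_\xi t^{1/2}}{c_1^{1/(\al+2)} c_2^{2/(\al+2)}\mu^{1/2 - 1/(\al+2)}}$, the second summand of the third entry. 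Adding the two bounds reconstitutes the third entry of $\maxprederrorJ$, so this case also contributes $O\pa{\maxprederrorJ}$, and combining the cases finishes the proof.

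The only step needing real care is the last one: neither $\Si \ge \mu$ (useless once $M$ is large) nor $\Si \ge c_1 c_2^2 M^{\al+2}$ (useless when $M$ is near $c$ and $\mu$ is large) suffices alone for the first term, so one must take the minimum of the two resulting estimates and optimize over $M$; this optimization is precisely what produces the exponent $\frac{\al+2}{2}$ and the power $\mu^{1/2-1/(\al+2)}$ in $\maxprederrorJ$. Everything else is routine AM--GM and Cauchy--Schwarz bookkeeping, so I do not expect any serious obstacle.
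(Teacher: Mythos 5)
Your proof is correct, and it reaches the stated bound by a noticeably different middle route than the paper's, even though the endpoints coincide (case split at $c$; anomaly-freeness giving $\sum_{s<t}x_s^2\ge c_1c_2^2M^{\alpha+2}$; a weighted AM--GM balancing $\mu$ against $c_1c_2^2M^{\alpha+2}$). The paper sets $M=\max_{0\le s\le t-1}|x_s|$, extracts the guaranteed subsequence of $\lceil c_1M^{\alpha}\rceil$ large terms, collapses the \emph{remaining} $|x_i|$'s to a single variable $x\in[0,M]$ via Jensen's inequality, and then maximizes an explicit rational function of $x$. You instead set $M=|x_t|$ and dispatch the whole numerator in one stroke with Cauchy--Schwarz, $\sum_{s<t}|x_s|\le\sqrt{tP}$, after which only the two lower bounds $\Sigma\ge\mu$ and $\Sigma\ge c_1c_2^2M^{\alpha+2}$ are needed and the optimization degenerates to a one-line crossover of an increasing and a decreasing function of $M$; I verified that the crossover value reproduces exactly the exponents $\mu^{1/(\alpha+2)}$ and $\mu^{1/2-1/(\alpha+2)}$ with the prefactor $c_1^{-1/(\alpha+2)}c_2^{-2/(\alpha+2)}$ appearing in $\maxprederrorJ$. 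Two remarks in your favor: taking $M=|x_t|$ matches how anomaly-freeness is actually invoked downstream in Lemma~\ref{l:pred-error-d} (the condition is applied to the projection of the \emph{current} input), whereas the paper's choice requires the implicit step $|x_t|\le M$ in its final display; and your explicit flagging of $c_2\le 1$ is legitimate, since Theorem~\ref{thm:ols-regret} supplies $c_2\in(0,1]$ and the paper's own proof uses the same fact when it bounds the first fraction by $C_\xi/c_2^2$. What the paper's longer route buys is the explicit worst-case picture (all non-outlier inputs equal), which your Cauchy--Schwarz step replaces at no loss in the final bound; your version is shorter and, to my reading, easier to check.
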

Recall, from Theorem~\ref{t:orr}, the univariate regret then scales as the square of this quantity plus a regularization term scaling linearly in $\mu$. For the case $\al=1$, to balance the terms, we set $\pf{t^{\rc 2}}{\mu^{\rc 6}}^2\sim \mu$, so $\mu = t^{\fc 34}$ with regret $\wt O(t^{\fc 34})$.
\begin{proof}
Let $M=\max_{0\le s\le t-1} |x_s|$. 

First suppose $M\le c$. Then 
\begin{align*}
\ab{\sumz s{t} \xi_{s} x_s \pa{\mu + \sumz r{t-1} x_{r}^2}^{-1}x_t}\le 
\fc{C_\xi c^2t}{\mu}
\end{align*}
and the second term satisfies
\begin{align*}
\ab{\mu \pa{\mu + \sumz s{t-1} x_{s}^2}^{-1}x_t}\le 
c.
\end{align*}

Now suppose $M> c$.
Then there exists a subsequence $x_{i_1},\ldots, x_{i_n}$ where $n = \ce{c_1M^\al}>1$ and $i_j<t$, $|x_{i_j}|\ge c_2M$ for each $j$. Let $S=\{i_1,\ldots, i_n\}$. We have
\begin{align*}
\sumo k{n} |x_{i_k}| &\le \ce{c_1 M^\al}M=O(c_1M^{\al+1})&
\sumo k{n} x_{i_k}^2 &\ge \ce{c_1 M^\al}(c_2M)^2 = \Omega(c_1c_2^2M^{\al+2}).
\end{align*}
Thus
\begin{align*}
&\sumz s{t-1} \ab{\xi_{s+1} x_s \pa{\mu + \sumz r{t-1} x_{r}^2}^{-1}}
+
a\ab{\mu\pa{\mu + \sumz s{t-1} x_{s}^2}^{-1}}
\le \fc{{{a\mu+C_\xi\sumo s{t-1} |x_s|}}}{\mu+\sumo s{t-1}x_s^2}\\
&\precsim
\fc{a\mu+C_\xi(c_1 M^{\al+1}+\sum_{i\in [t-1]\bs S} |x_{i}|)}{\mu+c_1c_2^2 M^{\al+2}+ \sum_{i\in [t-1]\bs S} x_{i}^2}\\
&\le
\max_{0\le x'_i\le M\text{ for }1\le i\le t-1-n} 
\fc{a\mu+C_\xi(c_1 M^{\al+1}+\sumo i{t-1-n} x_i')}{\mu+c_1c_2^2 M^{\al+2}+\sumo i{t-1-n} x_i^{\prime2}}\\
&\le \max_{0\le x\le M}
\fc{a\mu + C_\xi(c_1 M^{\al+1}+(t-1-n) x)}{\mu+c_1c_2^2 M^{\al+2}+(t-1-n) x^2}
\end{align*}
where the last inequality follows by Jensen's inequality since $x\mapsto x^2$ is convex (holding $\sumo i{t-1-n} x_i'$ constant, the smallest value of $\sumo i{t-1-n}x_i^{\prime 2}$ is attained when they are equal). Continuing,
\begin{align*}
&\le \max_{0\le x\le M} \pa{\fc{C_\xi c_1M^{\al+1}}{\mu + c_1c_2^2M^{\al+2}} + \fc{a\mu+C_\xi(t-1-n)x}{\mu+c_1c_2^2 M^{\al+2}+(t-1-n) x^2}
}\\
&\le \max_{0\le x\le M} \pa{\fc{C_\xi c_1M^{\al+1}}{\mu + c_1c_2^2M^{\al+2}} + \fc{a\mu+C_\xi tx}{\mu+c_1c_2^2 M^{\al+2}+t x^2}
}
\end{align*}
because $x\mapsto \fc{ax}{b+cx}$ is increasing for $a,b,c>0$ and $x\ge 0$.
Thus
\begin{align*}
&\sumz s{t-1} \ab{\xi_{s} x_s \pa{\mu + \sumz r{t-1} x_{r}^2}^{-1}x_t}
+
a\ab{\mu\pa{\mu + \sumz s{t-1} x_{s}^2}^{-1}x_t}\\
&\precsim
M\cdot 
\max_{0\le x\le M} \pa{\fc{C_\xi c_1M^{\al+1}}{\mu + c_1c_2^2M^{\al+2}} + \fc{a\mu+C_\xi tx}{\mu+c_1c_2^2 M^{\al+2}+tx^2}
}\\
&\le 
\max_{0\le x\le M} \pa{\fc{C_\xi}{c_2^2} + \fc{aM\mu+ C_\xi Mtx}{\mu+c_1c_2^2 M^{\al+2}+t x^2}
}
\end{align*}
Applying the weighted weighted arithmetic-geometric mean inequality, the  second term is bounded by
\begin{align*}
&aM\mu \prc{\mu}^{\fc{\al+1}{\al+2}} \prc{c_1c_2^2M^{\al+2}}^{\rc{\al+2}}
+C_\xi Mtx \prc{\mu}^{\rc 2-\rc{\al+2}} \prc{c_1c_2^2M^{\al+2}}^{\rc{\al+2}}\prc{tx^2}^{\rc 2} \\
&
\le \rc{c_1^{\fc 1{\al+2}}c_2^{\fc 2{\al+2}}}\pa{a\mu^{\fc{1}{\al+2}} + \fc{C_\xi t^{\rc 2}}{\mu^{\rc2-\rc{\al+2}}}}.
\qedhere
\end{align*}
\end{proof}

Now we are ready to bound~\eqref{e:pred-error} in the multidimensional case.
\begin{lem}\llabel{l:pred-error-d}
Suppose the conditions specified in Theorem~\ref{thm:ols-regret} are satisfied, then
for each $0\le t\le T-1$,
\begin{align*}
\ve{A_tx_t - y_t}&\le O\pa{m \maxprederrorJA }.
\end{align*}
\end{lem}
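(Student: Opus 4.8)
The plan is to use the closed form for the residual from Lemma~\ref{l:ls-calc} and then diagonalize the regularized sample covariance $\Si_t = \mu I_m + \sumz s{t-1} x_sx_s^\top$ to reduce the multidimensional estimate to the one-dimensional bound of Lemma~\ref{l:pred-error-1}. Write $A^*$ for the matrix from condition~(1) of Theorem~\ref{thm:ols-regret}, so $y_s = A^*x_s + \xi_s$ with $\ve{\xi_s}\le C_\xi$; Lemma~\ref{l:ls-calc} then gives
\[
A_tx_t - y_t = \sumz s{t-1} \xi_s x_s^\top \Si_t^{-1} x_t - \mu A^* \Si_t^{-1} x_t - \xi_t .
\]
Since $\Si_t$ is symmetric positive definite, diagonalize it as $\Si_t = \sumo jm \la_j w_jw_j^\top$ with $\{w_j\}_{j=1}^m$ orthonormal and $\la_j = \mu + \sumz s{t-1}(w_j^\top x_s)^2 > 0$, so that $\Si_t^{-1} x_t = \sumo jm \la_j^{-1}(w_j^\top x_t)\,w_j$.

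Substituting this decomposition and using the triangle inequality, I would bound $\ve{A_tx_t - y_t}$ by $\ve{\xi_t}$ plus $\sumo jm \la_j^{-1}|w_j^\top x_t|\bigl(\ve{\sumz s{t-1}\xi_s(w_j^\top x_s)} + \mu\ve{A^*w_j}\bigr)$. Here $\ve{\sumz s{t-1}\xi_s(w_j^\top x_s)}\le C_\xi\sumz s{t-1}|w_j^\top x_s|$ by the triangle inequality together with $\ve{\xi_s}\le C_\xi$, and $\ve{A^*w_j}\le\ve{A^*}_2$ since $w_j$ is a unit vector. Thus the $j$-th summand is at most
\[
|w_j^\top x_t|\pa{\fc{C_\xi\sumz s{t-1}|w_j^\top x_s|}{\mu+\sumz s{t-1}(w_j^\top x_s)^2} + \fc{\mu\ve{A^*}_2}{\mu+\sumz s{t-1}(w_j^\top x_s)^2}},
\]
which is exactly the quantity bounded by Lemma~\ref{l:pred-error-1}, applied to the scalar sequence $(w_j^\top x_s)_{s<t}$ with $a=\ve{A^*}_2$. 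Its hypotheses hold: $c_1\le\rc2$, $c\ge\rc{c_1}$, $0<\al\le1$ are inherited from Theorem~\ref{thm:ols-regret}, the noise magnitudes are bounded by $C_\xi$, and condition~(3) together with Definition~\ref{d:q_anom_free} says $(x_s)_{s<t}$ is $(c,c_1,c_2,\al)$-anomaly-free for every unit vector, so in particular its projection onto $w_j$ is an anomaly-free scalar sequence with the same constants. Hence each of the $m$ summands is $O(\maxprederrorJA)$ (with $A$ read as $A^*$), and since $\ve{\xi_t}\le C_\xi\le\fc{C_\xi}{c_2^2}\le\maxprederrorJA$ because $c_2\le1$, summing over $j$ yields $\ve{A_tx_t - y_t}\le O(m\,\maxprederrorJA)$.

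Since the substantive work is carried by the one-dimensional Lemma~\ref{l:pred-error-1}, the reduction itself is short, and the only points that need care are bookkeeping: the vector noise $\xi_s$ enters only through $\ve{\sumz s{t-1}\xi_s(w_j^\top x_s)}\le C_\xi\sumz s{t-1}|w_j^\top x_s|$, so one never needs $\xi_s$ to be aligned with the eigendirection $w_j$; the anomaly-free property transfers verbatim to every projected coordinate; and the norm of $A^*$ entering the bound is $\ve{A^*w_j}_2\le\ve{A^*}_2$, which is why $\maxprederrorJA$ features $\ve{A}_2$ rather than $\ve{A}_F$. Once this lemma is in hand, substituting $\max_{0\le t\le T-1}\ve{A_tx_t - y_t}=O(m\,\maxprederrorJA)$ into Theorem~\ref{t:orr} yields the $\mu\ve{A}_F^2 + O\!\bigl(m^3\,\maxprederrorJA^2\,\ln(1+\fc{tB^2}{m})\bigr)$ regret bound of Theorem~\ref{thm:ols-regret}.
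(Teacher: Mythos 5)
Your proposal is correct and follows essentially the same route as the paper's proof: both start from the residual formula of Lemma~\ref{l:ls-calc}, diagonalize the (regularized) sample covariance to decouple the coordinates, apply the one-dimensional bound of Lemma~\ref{l:pred-error-1} to each projected sequence $(w_j^\top x_s)_{s<t}$ with $a=\ve{A^*}_2$, and sum over the $m$ eigendirections. Your spectral decomposition of $\Si_t$ is the same change of basis the paper writes via the orthogonal matrix $U$ (with $z_{si}=w_i^\top x_s$), and your handling of the vector noise via $\ve{\sumz s{t-1}\xi_s(w_j^\top x_s)}\le C_\xi\sumz s{t-1}|w_j^\top x_s|$ matches the paper's use of $\ve{\xi_s}\le C_\xi$ in the coordinatewise bound.
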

\begin{proof}
We bound each of the terms in~\eqref{e:pred-error}.

Because $\sumz s{t-1}x_sx_s^\top$ is symmetric, we can find an orthogonal matrix  $U$ such that $U\pa{\sumz s{t-1} x_sx_s^\top}U^\top$ is diagonal. Let $z_s = Ux_s$. Then
\begin{align}
\nonumber
\sumz s{t-1} \xi_{s} x_s^\top \pa{\mu I_m + \sumz r{t-1} x_rx_r^\top}^{-1}x_t
&= 
\sumz s{t-1} \xi_s x_s^\top U^\top \pa{\mu I_m + \sumz r{t-1} Ux_rx_r^\top U^\top }^{-1}Ux_t\\
\nonumber
&=\sumz s{t-1} \xi_s z_s^\top \pa{\mu I_m + \sumz r{t-1} z_rz_r^\top}^{-1}z_t\\
&=\sumo im\sumz s{t-1} \xi_s z_{si} \pa{\mu  + \sumz r{t-1} z_{ri}^2}^{-1} z_{ti}\llabel{e:ols-sum1}
\end{align}
where we can ``decouple" the coordinates because $\mu I + \sumo s{t-1} z_sz_s^\top$ is diagonal.
Similarly, 
\begin{align}
\nonumber
\mu A\pa{\mu I_m + \sumz s{t-1} x_sx_s^\top}^{-1}x_t &=\mu AU^\top \pa{\mu I + \sumz s{t-1} z_sz_s^\top}^{-1}z_t\\
&=\sumo im (AU^\top)_{\cdot i} \mu\pa{\mu I_m + \sumz s{t-1} z_{si}^2}^{-1}z_{ti}
\llabel{e:ols-sum2}
\\
\nonumber
\ve{\mu A\pa{\mu I_m + \sumz s{t-1} x_sx_s^\top}^{-1}x_t }
&\le \max_{1\le i\le m} \ve{(AU^\top)_{\cdot i}} \sumo im \ab{\mu \pa{\mu I + \sumz s{t-1} z_{si}^2}^{-1}z_{ti}}\\
\nonumber
&\le \ve{A}_2\sumo im \ab{\mu \pa{\mu I_m + \sumz s{t-1} z_{si}^2}^{-1}z_{ti}}
\end{align}
From the hypothesis of the lemma and the fact that $z_{ti} = U_{i\cdot}x_t$, we conclude that $z_{ti}$ satisfies the conditions of Lemma~\ref{l:pred-error-1}.
Apply Lemma~\ref{l:pred-error-1} to the $z_{ti}$ to obtain bounds for 
\begin{align*}\sumz s{t-1} \ve{\xi_s}\ab{z_{si} \pa{\mu I + \sumz r{t-1} z_{ri}^2}^{-1} z_{ti}} + \ve{A}_2 \ab{\mu\pa{\mu I + \sumz s{t-1} z_{si}^2}^{-1}z_{ti}}\end{align*}
for each $i$. Summing over over $i$ gives a factor of $m$, and gives a bound for $\eqref{e:ols-sum1}+\eqref{e:ols-sum2}$. 
Finally, the term $-\xi_{t}$ contributes at most $C_\xi$, which can be absorbed into the bound.
\end{proof}

\begin{proof}[Proof of Theorem~\ref{thm:ols-regret}]
The theorem follows from plugging in the bound on $\ve{A_tx_t-y_t}$ in Lemma~\ref{l:pred-error-d} into the regret bound of Theorem~\ref{t:orr}.

For the second statement, 
we check that (1a) implies 
$\fc{C_\xi^2 c^2t}{\mu} \le 
\rc{c_1^{\rc{\al+2}}c_2^{\fc2{\al+2}}}\cdot \fc{C_\xi t^{\rc 2}}{\mu^{\rc 2-\rc{\al+2}}}$, (1b) implies that $\ve{A}_2c \le \fc{C_\xi c^2t}{\mu}$, (2) implies that $\fc{C_\xi}{c_2^2} \le \rc{c_1^{\rc{\al+2}}c_2^{\fc2{\al+2}}}\cdot \fc{C_\xi t^{\rc 2}}{\mu^{\rc 2-\rc{\al+2}}}$, and (3) implies that $\ve{A}_2 \mu^{\rc{\al+2}}\le \fc{C_\xi t^{\rc 2}}{\mu^{\rc 2-\rc{\al+2}}}$. Then the maximum term is $O\pa{\rc{c_1^{\rc{\al+2}}c_2^{\fc2{\al+2}}}\cdot \fc{C_\xi t^{\rc 2}}{\mu^{\rc 2-\rc{\al+2}}}}$.

For the third statement, we note $\mu$ was chosen to equate the two terms.
\end{proof}

\section{Proof of Theorem~\ref{t:main} (fully observed system, adversarial noise)}
\llabel{s:proof}

We complete the details for the proof outline from Section~\ref{s:sketch}. We start with the following simple observation: if $x_T$ is a linear combination of previous $x_t$'s with small coefficients, and $x_T$ has large projection in some direction, then at least one of the $x_t$'s also has large projection in that direction.

\begin{lemma}\llabel{c:exists-large}
Suppose there exist $a_t\in \R$ and $v\in \R^d$ such that 
\begin{align}
x_T &= \sumz t{T-1} a_t  x_t +  v
\end{align}
and $\sumz t{T-1}|a_t|\le L_a$, $\ve{v}_2\le L_v$. 

Then for any unit vector $w\in \R^d$, there exists $0\le t\le T-1$ such that 
\begin{align}
\ab{w^\top x_t} &\ge \fc{|w^\top x_T| - L_v}{L_a}.
\end{align}
\end{lemma}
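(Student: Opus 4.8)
The statement is an immediate consequence of taking inner products with $w$ and applying the triangle inequality together with H\"older's inequality, so the proof will be short. First I would project the given decomposition onto the direction $w$: from $x_T = \sum_{t=0}^{T-1} a_t x_t + v$ we get
\[
w^\top x_T = \sum_{t=0}^{T-1} a_t\, (w^\top x_t) + w^\top v .
\]
Then I would bound the right-hand side in absolute value, using $|w^\top v| \le \ve{w}_2\ve{v}_2 = \ve{v}_2 \le L_v$ (Cauchy--Schwarz, since $w$ is a unit vector) and
\[
\left| \sum_{t=0}^{T-1} a_t\, (w^\top x_t) \right| \le \left( \sum_{t=0}^{T-1} |a_t| \right) \max_{0\le t\le T-1} |w^\top x_t| \le L_a \cdot \max_{0\le t\le T-1} |w^\top x_t|,
\]
which is the standard $\ell_1$--$\ell_\infty$ H\"older bound.

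Combining these gives $|w^\top x_T| \le L_a \max_{0\le t\le T-1} |w^\top x_t| + L_v$, and rearranging yields $\max_{0\le t\le T-1} |w^\top x_t| \ge \frac{|w^\top x_T| - L_v}{L_a}$ (assuming $L_a > 0$; if $L_a = 0$ then $x_T = v$, so $|w^\top x_T| \le L_v$ and the claimed bound is non-positive, hence trivially satisfied by any $t$). Since the maximum is attained at some index $t$, that index witnesses the conclusion. There is essentially no obstacle here: the only things to be careful about are that $w$ is a \emph{unit} vector (so the Cauchy--Schwarz step loses nothing) and the degenerate case $L_a = 0$, which I would dispatch in one sentence.

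Structurally, the write-up will be three lines of inequalities plus a sentence extracting the maximizing index; no lemmas from earlier in the paper are needed.
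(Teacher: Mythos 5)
Your proof is correct and follows essentially the same route as the paper's: project the decomposition onto $w$, apply the $\ell_1$--$\ell_\infty$ H\"older bound together with $|w^\top v|\le \ve{v}_2$, and extract the maximizing index. Your handling of the degenerate case $L_a=0$ is a small additional care the paper omits.
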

\begin{proof}
We have  
\begin{align}
w^\top x_T &= \sumz t{T-1} a_t w^\top x_t + w^\top v~.
\end{align}
Applying H\"older's inequality, there exists $0\le t\le T-1$ such that 
\begin{align}
|w^\top x_t| &\ge \fc{|w^\top x_T - w^\top v|}{\sumz t{T-1} |a_t|} \ge \fc{|w^\top x_T|-|w^\top v|}{L_a}.
\end{align}
We then note that $|w^\top v| \leq \ve{v}$ as $w$ is a unit vector, and the
result follows.
\end{proof}
\subsection{State is a small $\ell_1$ combination of previous states ($\exp(d)$ version)}
\label{s:ch}

As a warm-up, we first give a simpler proof that obtains a bound on $\sumo t{T-1} |a_t|$ that is exponential in $d$. This subsection is for exposition purposes only and may be skipped. In the next subsection we obtain a $\poly(d)$ bound. (We note that an alternate way of obtaining a $\poly(d)$ bound is by using Lemma~\ref{l:ch} with multiple of the characteristic polynomial with small coefficients; see Appendix~\ref{a:php}.)

\begin{lem}[Large past $x$'s via Cayley-Hamilton] 
\label{l:ch}
Given Assumptions~\ref{asm:1} with $B=O$, suppose that $A$ is diagonalizable as $A=VDV^{-1}$ where $\ve{V}\ve{V^{-1}}\le C_A$. 
Then 
for any unit vector $w\in\R^d$, 
there exist $\min\Big\{\ff{|w^\top x_t|}{d^2 2^{d+1}C_AC_\xi},\allowbreak  \ff{t}{d^2}\Big\}$ values of $s$, $0\le s\le t-1$, such that 
\begin{align*}
|w^\top x_s| &\ge \fc{|w^\top x_t|}{2^{d+1}}.
\end{align*}
\end{lem}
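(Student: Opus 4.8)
The plan is to deduce the lemma from Lemma~\ref{c:exists-large} by applying it, for every stride $p$ at once, to a Cayley--Hamilton-type recurrence for the $p$-step dynamics $x_t\approx A^p x_{t-p}$. First I would establish that for every integer $p\ge 1$ with $pd\le t$ one can write
\[
x_t=\sum_{i=1}^d a_i^{(p)}x_{t-pi}+v^{(p)},\qquad
\sum_{i=1}^d|a_i^{(p)}|\le 2^d,\qquad
\|v^{(p)}\|_2\le O\!\left(pd\,2^{d}C_AC_\xi\right),
\]
where the $a_i^{(p)}$ are, up to sign, the elementary symmetric polynomials of the eigenvalues of $A^p$. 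Given this, Lemma~\ref{c:exists-large} applied to the unit vector $w$ (with $L_a=2^d$, $L_v=\|v^{(p)}\|_2$) produces, for each such $p$, an index $s_p$ in the support $\{t-p,t-2p,\dots,t-pd\}$ with $|w^\top x_{s_p}|\ge(|w^\top x_t|-\|v^{(p)}\|_2)/2^d$.

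To prove the decomposition I would unroll the dynamics: iterating $x_s=Ax_{s-1}+\xi_s$ gives $x_t=A^{pd}x_{t-pd}+\sum_{j=0}^{pd-1}A^j\xi_{t-j}$ and, for each $1\le i\le d$, $x_{t-pi}=A^{p(d-i)}x_{t-pd}+\sum_{j=0}^{p(d-i)-1}A^j\xi_{t-pi-j}$. Substituting the Cayley--Hamilton identity $A^{pd}=(A^p)^d=\sum_{i=1}^d a_i^{(p)}(A^p)^{d-i}$ and then eliminating each $A^{p(d-i)}x_{t-pd}$ using the second relation collapses everything to the claimed form, with $v^{(p)}$ a signed combination of the noise terms. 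The crucial point — the reason $p>1$ is "free" — is that the coefficients do not blow up with $p$: since $A$ is diagonalizable with $\rho(A)\le 1$, the eigenvalues of $A^p$ are $\lambda_i^p$ with $|\lambda_i^p|\le 1$, so $|a_i^{(p)}|\le\binom{d}{i}$ and $\sum_i|a_i^{(p)}|\le 2^d-1$ \emph{uniformly in $p$}; likewise $\|A^j\|_2=\|VD^jV^{-1}\|_2\le\|V\|_2\|V^{-1}\|_2\le C_A$ for all $j\ge 0$, which bounds every one of the $O(d\,2^d p)$ noise terms appearing in $v^{(p)}$ by $C_AC_\xi$ and yields the stated bound on $\|v^{(p)}\|_2$.

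Finally I would union over $p$. Set $M=|w^\top x_t|$ and $P=\min\{\lfloor M/(d\,2^{d+1}C_AC_\xi)\rfloor,\ \lfloor t/d\rfloor\}$ (constants to be tuned to match the statement). For every $p\le P$ we then have $\|v^{(p)}\|_2\le M/2$, hence $|w^\top x_{s_p}|\ge M/2^{d+1}$, and $0\le t-pd\le s_p\le t-1$. It remains to see that $\{s_1,\dots,s_P\}$ has many distinct elements: writing $t-s_p=p\,j_p$ with $j_p\in\{1,\dots,d\}$, a given value $s$ can equal $s_p$ only when $p$ is a divisor of $t-s$ that is $\ge(t-s)/d$, and there are at most $d$ such divisors, so $p\mapsto s_p$ is at most $d$-to-one and $\{s_1,\dots,s_P\}$ has at least $P/d$ distinct elements. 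Since $\lfloor P/d\rfloor=\min\{\lfloor M/(d^2 2^{d+1}C_AC_\xi)\rfloor,\ \lfloor t/d^2\rfloor\}$, this gives exactly the required number of indices $s\in\{0,\dots,t-1\}$ with $|w^\top x_s|\ge M/2^{d+1}$.

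The main obstacle is the bookkeeping in the decomposition step: checking that the Cayley--Hamilton substitution really leaves a recurrence purely in the $x_{t-pi}$ with $v^{(p)}$ absorbing all noise, and counting/bounding those noise terms tightly enough that the final constants come out as claimed (the $d^2$ in the denominator is precisely "$d$ from the window length $\times\ d$ from the divisor overcounting"). The operator-norm and eigenvalue bounds, and the Hölder step inside Lemma~\ref{c:exists-large}, are routine; I would also separately dispatch the degenerate cases (e.g.\ $d=1$, or $M$ so small that the claimed count is $0$, so there is nothing to prove).
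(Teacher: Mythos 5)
Your proposal is correct and follows essentially the same route as the paper's proof: a Cayley--Hamilton recurrence for each stride $p$ using the characteristic polynomial of $A^p$ (whose coefficients are uniformly bounded by $2^d$ since $|\lambda_i^p|\le 1$), the noise bounded via $\|A^j\|_2\le C_A$, one large index per stride from Lemma~\ref{c:exists-large}, and a final $d$-to-one counting argument to extract distinct indices. The constants and the $d^2$ in the denominator come out exactly as in the paper, so there is nothing to add.
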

Note that in fact the $d$ in the bound can be replaced with the degree of the minimal polynomial.

\begin{proof}
By the Cayley-Hamilton Theorem, if $p(x)=\sumz i{d} a_i^{(1)} x^i$ is the characteristic polynomial of $A$, then $\sumz id a_i^{(1)} A^i=O$. We can bound the size of the coefficients as follows. Let $r_1,\ldots, r_d$ be the roots of $p(x)=0$. Then $p(x) = \prodo id (x-r_i)$. Because $\rh(A)\le 1$, every root satisfies $|r_i|\le 1$. By the triangle inequality, the coefficients of $p(x)$ are at most the corresponding coefficients of $q(x)= \prodo id (x+|r_i|)$ in absolute value. The sum of coefficients of $q(x)$ is $q(1)\le 2^d$, so $\sumz id |a_i^{(1)}|\le 2^d$.


We now proceed to bound the error term due to the noise. 
By unfolding the recurrence we obtain
\begin{align}
\nonumber
    x_t &= A^{d} x_{t-d} + \sum_{\tau=1}^d A^{d-\tau}\xi_{t-d+\tau}\\
    \nonumber
        &= \sum_{i=0}^{d-1}-a_i^{(1)} A^i x_{t-d} + \sum_{\tau=1}^d A^{d-\tau}\xi_{t-d+\tau}\\
        \nonumber
        &= \sum_{i=0}^{d-1}-a_i^{(1)} \left(x_{t-d+i} - \sum_{\tau=1}^{i} A^{i-\tau} \xi_{t-d+\tau} \right) + \sum_{\tau=1}^d A^{d-\tau}\xi_{t-d+\tau}\\
        \label{e:ch-recur}
        &= \sum_{i=0}^{d-1}-a_i^{(1)} x_{t-d+i} \ub{+ \sum_{i=0}^{d-1}a_i^{(1)}\sum_{\tau=1}^{i} A^{i-\tau} \xi_{t-d+\tau} + \sum_{\tau=1}^d A^{d-\tau}\xi_{t-d+\tau}}{=:v^{(1)}}
\end{align}
Note that $\ve{A^i\xi}\le \ve{VD^iV^{-1}}\ve{\xi}\le C_A \ve{\xi} \leq C_AC_\xi$. 
We write $x_t = -\sum_{i=1}^{d-1}a_i^{(1)} x_{t-d+i} + v^{(1)}$ where
\begin{align*}
    \|v^{(1)}\|
    &= \left\| \sum_{i=0}^{d-1}a_i^{(1)}\sum_{\tau=1}^{i} A^{i-\tau} \xi_{t-d+\tau} + \sum_{\tau=1}^d A^{d-\tau}\xi_{t-d+\tau} \right\|
    \\ 
    &\leq \sum_{i=0}^{d-1}|a_i^{(1)}| \sum_{\tau=1}^{i} 
    C_A
    \|\xi_{t-d+\tau}\|
    + \sum_{\tau=1}^{d} 
    C_A
    \|\xi_{t-d+\tau}\|
    \\
    &\leq 
    2^d (d-1)C_AC_\xi + dC_AC_\xi\le d2^dC_AC_\xi
\end{align*}
If $w\in \R^d$ is a unit vector such that $|w^\top x_t|\ge d2^{d+1} C_AC_\xi$, we have $|w^\top x_t| \ge 2\ve{v^{(1)}}$. Noting that 
$w^\top x_t = \sumz i{d-1} - a_i^{(1)} x_{t-d+i} + v^{(1)}$, by Lemma~\ref{c:exists-large} there exists an index $0\le i\le d-1$ such that 
\begin{align*}
    |w^\top x_{t-d+i}| \geq \frac{|w^\top x_{t}| - \|v^{(1)}\|
    }{\sum_{i=0}^{d-1} |a_i^{(1)}|} \ge 
    \fc{|w^\top x_t|}{2\sumz i{d-1} |a_i^{(1)}|}
    \ge \fc{|w^\top x_t|}{2^{d+1}}
\end{align*}
In order to obtain many large past $x$'s, we apply the same argument on sequences $x_t, x_{t-k}, \ldots, x_{t-kd}$ by considering the recurrence
\begin{align}
    x_t &= A' x_{t-k} + \xi_t^{(k)}\\
    A'&=A^k\\
    \xi_t^{(k)} &= \sumo jk A^{k-j} \xi_{t-k+j}.
\end{align}
Let $p_k(x)=\sumz id a_{i}^{(k)} x^i$ be the characteristic polynomial of $A^k$. Defining $v^{(k)}$ similarly to $v^{(1)}$, we have
\begin{align}
    \|v^{(k)}\|
    &= \left\| \sum_{i=0}^{d-1}a_i^{(k)}\sum_{\tau=1}^{i} A^{\prime i-\tau} \xi_{t-dk+\tau k}^{(k)} + \sum_{\tau=1}^d A^{\prime  d-\tau}\xi_{t-dk+\tau k}^{(k)} \right\|
    \\ 
    &\leq 
    k2^d (d-1)C_AC_\xi + kdC_AC_\xi\le kd2^dC_AC_\xi
\end{align}
so that we obtain $x_t=-\sum_{i=1}^{d-1}a_i^{(k)} x_{t-k(d-i)}+v^{(k)}$ with $\sumz id |a_i^{(k)}|\leq 2^d$ and $\|v^{(k)}\|
\leq kd2^d C_AC_\xi$. We then pick $k=1, 2, \ldots, \min\bc{\left\lfloor\frac{|w^\top x_t|}{2d2^d C_AC_\xi}\right\rfloor, \ff td}$. For each choice of $k$, we know by design that $|w^\top x_t|\geq 2\|v^{(k)}\|_2$, and therefore there must exist an $x$ in the sequence $x_{t-k}, \ldots, x_{t-kd}$ such that $|w^\top x|\ge \frac{|w^\top x_t|}{2^{d+1}}$. In this way, we are able to collect in total $L=\min\bc{\left\lfloor\frac{|w^\top x_t|}{2d2^d C_AC_\xi}\right\rfloor, \ff td}$ many such $x$'s. 
To finish the argument, we note that out of the $L$ collected $x$'s, there are at least $\lfloor \frac{L}{d} \rfloor$ distinct ones, since one $x$ can appear in at most $d$ different sequences.
\end{proof}

Plugging in the conclusion of Lemma~\ref{l:ch} into Theorem~\ref{thm:ols-regret}, we can obtain the following theorem.
\begin{thm}
Assume Assumptions~\ref{asm:1}, and furthermore suppose $A$ is diagonalizable and $u_t=0$ for each $t$. Then online least squares (Algorithm~\ref{a:orr}) with $\mu = T^{\fc 34}$ achieves regret
\begin{align*}
    R_T(A,B) &\le T^{\fc 34} \exp(O(d)) \poly(C,R,\ln T).
\end{align*}
\end{thm}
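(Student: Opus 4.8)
The statement is essentially a corollary of Lemma~\ref{l:ch} (an $\exp(d)$-scale version of the structural result) together with Theorem~\ref{thm:ols-regret} (anomaly-free inputs imply sublinear OLS regret), so the plan is to instantiate the latter using the former. Under Assumption~\ref{asm:1} with $u_t\equiv 0$, the dynamics are $x_t=Ax_{t-1}+\xi_t$, and Algorithm~\ref{a:orr} run on the pairs $(x_t,x_{t+1})$ is exactly the online least squares instance of Theorem~\ref{thm:ols-regret} with $A^\ast=A$, regressors $x_t$, targets $y_t=x_{t+1}$, and perturbations $\ve{\xi_t}\le C_\xi$. First I would record the crude a priori bound on the states: writing $A=VDV^{-1}$ with $\rho(A)\le 1$ and $\ve{V}_2\ve{V^{-1}}_2\le C_A$ gives $\ve{A^k}_2\le C_A$ for all $k$, so $x_t=A^tx_0+\sumo{\tau}{t}A^{t-\tau}\xi_\tau$ satisfies $\ve{x_t}\le C_AC_0+tC_AC_\xi=:B=\poly(C_{\mathrm{sys}})\cdot t$; this $B$ is the bound that enters the logarithmic factor in Theorems~\ref{t:orr} and~\ref{thm:ols-regret}.

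Next I would use Lemma~\ref{l:ch} to verify that $(x_s)$ is $(c,c_1,c_2,\al)$-anomaly-free in the sense of Definition~\ref{d:q_anom_free} with
\[
\al=1,\qquad c_2=2^{-(d+1)},\qquad c_1=\rc{2d^2 2^{d+1}C_AC_\xi},
\]
and $c=\exp(O(d))\poly(C_{\mathrm{sys}})$ chosen large enough that (i) $c\ge\rc{c_1}$, and (ii) $c\ge\sup_{s<t_0}\ve{x_s}$ where $t_0:=\fc{C_0}{(2^{d+1}-1)C_\xi}$. For $t\ge t_0$ the bound $\ve{x_t}\le B\le 2^{d+1}C_AC_\xi t$ forces the first term in the minimum of Lemma~\ref{l:ch} to be the active one, so for any unit $w$ with $\ab{w^\top x_t}=M>c$ there are at least $\fl{M/(d^2 2^{d+1}C_AC_\xi)}\ge c_1M$ indices $s<t$ with $\ab{w^\top x_s}\ge M/2^{d+1}=c_2M$; for $t<t_0$ the condition holds vacuously since $\ab{w^\top x_t}\le\ve{x_t}\le c$. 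Note $\al=1$ is permitted, $c_1\le\rc2$, and $c\ge\rc{c_1}$, as required by the theorem.

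Finally I would invoke Theorem~\ref{thm:ols-regret} (its general bound, at horizon $t=T$, with $\mu=T^{3/4}$ and $\ve{A}_2\le\ve{A}_F\le R$; there is no need to check conditions (1a)--(3)). The regularization term gives $\mu\ve{A}_F^2\le T^{3/4}R^2$. For the prediction-error term, with $\al=1$ and $\mu=T^{3/4}$ each of the three terms inside the maximum defining the prediction-error bound of Theorem~\ref{thm:ols-regret} is at most $\exp(O(d))\poly(C_{\mathrm{sys}},R)\cdot T^{3/8}$, the binding one being of order $\rc{c_1^{1/3}c_2^{2/3}}\cdot\fc{C_\xi T^{1/2}}{\mu^{1/6}}=\exp(O(d))\poly(C_{\mathrm{sys}})\,T^{3/8}$ (using $\mu^{1/6}=T^{1/8}$ and that $c,\rc{c_1},\rc{c_2}$ are all $\exp(O(d))\poly(C_{\mathrm{sys}})$); its square is therefore $\exp(O(d))\poly(C_{\mathrm{sys}},R)\cdot T^{3/4}$, while $\ln(1+TB^2/d)=O(\ln T)$ since $B=\poly(C_{\mathrm{sys}})\cdot T$. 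Multiplying by the polynomial-in-$d$ prefactor of Theorem~\ref{thm:ols-regret} and adding $T^{3/4}R^2$ yields $R_T(A,B)\le T^{3/4}\exp(O(d))\poly(C_{\mathrm{sys}},R,\ln T)$, the claimed bound. I do not expect a genuine obstacle: the substance is already contained in Lemma~\ref{l:ch} and Theorem~\ref{thm:ols-regret}, and the only care needed is the bookkeeping above --- ensuring the minimum in Lemma~\ref{l:ch} resolves to the linear term for all $t$ past the constant threshold $t_0$ (with the finitely many earlier $t$ handled by vacuity), and tracking that every anomaly-free constant is $\exp(O(d))\poly(C_{\mathrm{sys}})$ with no hidden dependence on $T$, so that $\mu=T^{3/4}$ genuinely balances the two contributions and the $T^{3/4}$ rate survives.
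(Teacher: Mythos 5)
Your proposal is correct and follows exactly the route the paper intends: the paper states this theorem as a corollary of Lemma~\ref{l:ch} plugged into Theorem~\ref{thm:ols-regret} and explicitly omits the details, and your bookkeeping (the choice $\al=1$, $c_2=2^{-(d+1)}$, $c_1=\Theta(1/(d^2 2^{d}C_AC_\xi))$, the threshold $t_0$ ensuring the first term of the minimum in Lemma~\ref{l:ch} is active, and the balancing with $\mu=T^{3/4}$ via the general bound of Theorem~\ref{thm:ols-regret}) supplies precisely those omitted details. No gaps.
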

We omit the details, as we will prove the stronger Theorem~\ref{t:main}.
\subsection{State is a small $\ell_1$ combination of previous states ($\poly(d)$ version)}

We will use the following notation to keep track of the growth of $A^k$.

\begin{df}
Given a matrix $A\in \R^{d\times d}$, define $f_A(k):= \ve{A^k}$. 

Given a set $K\subeq \R^d$, define 
$f_{A,K}(k):= \max_{\xi\in K}\ve{A^k\xi}$. 
\end{df}
For a rougher bound, note we can bound $f_{A,K}(k) \le f_A(k)\max_{\xi\in K}\ve{\xi}$, but keeping $f_{A,K}$ separate allows us to separately bound the dependence on the size of the starting state and on the perturbations.

The first step of the proof is to show the following. Denote the $L^2$ ball of radius $r$ in $d$ dimensions by $B_r^d = \set{\ve{x}_2\le r}{x\in \R^d}$. 


\begin{lem}\llabel{l:xt-l1-comb}
Suppose that 
$x_0\in K_0\sub \R^d$ and 
$x_t=Ax_{t-1}+\xi_t$ with $A\in \R^{d\times d}$ and $\xi_t\in K$ for $1\le t\le T$. 
Suppose $M\ge 1$ and $\ve{x_t}\le M$ for $0\le t\le T$. 
Then there exist $a_t\in \R$ and $v\in \R^d$ such that 
\begin{align}
x_T &= \sumz t{T-1} a_t x_t + v
\end{align}
and letting $k'=\kp$,
\begin{align}
\sumz t{T-1} |a_t| &\le \fc{2}{\ln 2} d\\
\llabel{e:Lvg}
\ve{v}_2 &\le \Lvg.
\end{align}
\end{lem}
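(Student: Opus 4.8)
The plan is to run the volume-doubling argument sketched in Section~\ref{s:sketch}: bound the number of ``outlier'' times before $T$, use that to travel from $x_T$ back to a non-outlier state in few steps, and then unroll the dynamics along that short route. Recall $\De_t := \De(\{x_s\}_{s=0}^t)$, the gauge $\ve{x}_{\De_t} = \min\{\sumz s{t}|a_s| + \ve{v}_2 : x = \sumz s{t}a_s x_s + v\}$, and the outlier set $I_t = \set{0 \le s \le t}{\ve{x_s}_{\De_{s-1}} \ge \Lad}$. First I would invoke Lemma~\ref{l:2vol}, which guarantees that adjoining a point of $\De(S)$-gauge at least $\Lad$ doubles the volume of the associated body; since every $\De_t$ lies in a ball of radius $O(M)$ while the base body has volume bounded below, Lemma~\ref{l:ST-card} then bounds the number of outlier indices before time $T$ by $k'$. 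Hence I may take $k$ to be the least positive integer with $T - k = 0$ or $T - k \notin I_{T-1}$, and this $k$ satisfies $k \le k'$.

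Next I would unroll $k$ steps, $x_T = A^k x_{T-k} + \sumo jk A^{k-j}\xi_{T-k+j}$, and split on the choice of $k$. If $T - k = 0$, take all $a_t = 0$ and $v = A^k x_0 + \sumo jk A^{k-j}\xi_j$; since $x_0 \in K_0 \subeq \KuB$ and each $\xi_j \in K$, this immediately gives $\ve{v}_2 \le f_{A,\KuB}(k) + \sumz i{k-1}f_{A,K}(i)$. Otherwise $T - k \ge 1$ and $T - k \notin I_{T-1}$, so there is a representation $x_{T-k} = \sumz r{T-k-1} b_r x_r + v'$ with $\sumz r{T-k-1}|b_r| + \ve{v'}_2 < \Lad$. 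The key step is to apply the backward identity $A^k x_r = x_{r+k} - \sumo jk A^{k-j}\xi_{r+j}$, which holds for $0 \le r \le T - k - 1$ because then $r + k \le T - 1$; this rewrites $A^k x_{T-k}$ as a combination of genuine past states $x_{r+k}$ plus noise. Collecting terms yields
\[
x_T = \sumz r{T-k-1} b_r x_{r+k} + A^k v' + \sumo jk A^{k-j}\xi_{T-k+j} - \sumz r{T-k-1} b_r \sumo jk A^{k-j}\xi_{r+j},
\]
so with $a_{r+k} = b_r$ (and $a_t = 0$ otherwise) the indices $r+k$ are distinct elements of $\{0,\dots,T-1\}$ and $\sumz t{T-1}|a_t| = \sumz r{T-k-1}|b_r| \le \Lad$, which is the first claimed bound.

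It then remains to bound the error $v := A^k v' + \sumo jk A^{k-j}\xi_{T-k+j} - \sumz r{T-k-1} b_r \sumo jk A^{k-j}\xi_{r+j}$. I would use $v' \in B_{\Lad}^d \subeq \KuB$ to get $\ve{A^k v'}_2 \le f_{A,\KuB}(k)$, bound each noise term individually to get $\ve{\sumo jk A^{k-j}\xi_{T-k+j}}_2 \le \sumz i{k-1}f_{A,K}(i)$, and use $\sumz r{T-k-1}|b_r| < \Lad$ to get $\ve{\sumz r{T-k-1} b_r \sumo jk A^{k-j}\xi_{r+j}}_2 \le \Lad\sumz i{k-1}f_{A,K}(i)$. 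Summing these three bounds and using $k \le k'$ gives $\ve{v}_2 \le \max_{0\le k\le k'}f_{A,\KuB}(k) + (\Lad+1)\sumz i{k'-1}f_{A,K}(i) = \Lvg$; the same bound covers the case $T-k=0$, since there $\ve{v}_2 \le f_{A,\KuB}(k) + \sumz i{k-1}f_{A,K}(i) \le \Lvg$.

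I expect the main obstacle to lie not in this lemma but in its two inputs. The crux is the volume-doubling estimate (Lemma~\ref{l:2vol})---showing that a point whose $\ve{\cdot}_{\De(S)}$-gauge exceeds $\frac{2}{\ln 2}d$ genuinely doubles the volume of $\De(S)$ when adjoined---together with the resulting count $|I_{T-1}| \le k'$ (Lemma~\ref{l:ST-card}); the constant $\frac{2}{\ln 2}d$ is tuned precisely so that this count comes out at the stated size. Granting those, the only care needed here is the index bookkeeping around the substitution $A^k x_r = x_{r+k} - \sumo jk A^{k-j}\xi_{r+j}$: one must keep $r + k \le T - 1$ throughout so that each $x_{r+k}$ is a legitimate earlier state and the new coefficients do not collide with existing ones.
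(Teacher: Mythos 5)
Your proposal is correct and follows essentially the same route as the paper's proof: bound the outlier count via Lemmas~\ref{l:2vol} and~\ref{l:ST-card}, step back $k\le k'$ times to a non-outlier (or to $x_0$), and unroll the dynamics while tracking the noise. The only cosmetic difference is that you substitute the backward identity $A^k x_r = x_{r+k} - \sum_{j=1}^{k} A^{k-j}\xi_{r+j}$ directly, whereas the paper derives the identical expression~\eqref{e:xt+k} by induction on $k$; the resulting decomposition and bounds coincide with~\eqref{e:xt+k-norm1}--\eqref{e:xt+k-norm2}.
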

We emphasize that we will only need the existence of $a_t$, $x_t$, and $v$; the algorithm will not need to compute the linear decomposition. We mention that the related notion of volumetric spanners based on the $L^2$ norm that has been applied to online convex optimization~\citep{hazan2016volumetric}.

Note that we can bound $f(k)$ in terms of the size of the Jordan blocks (Lemma~\ref{l:J-power}). To prove Lemma~\ref{l:xt-l1-comb}, we will use two lemmas (Lemma~\ref{l:2vol} and Lemma~\ref{l:ST-card}).


\subsubsection{Bounding $I_t$}

Define the convex set spanned by a set $S\sub \R^d$ by 
\begin{align}
\llabel{e:def-De}
\De(S) :&= \set{\sum_{u\in S} a_uu}{a_u\in \R, \sum_{u\in S} |a_u|\le 1, a_u=0\text{ for all but finitely many }u}. 
\end{align}
Suppose that $S$ is compact and spans $\R^d$. The centrally symmetric convex set $\De(S)$ defines the norm (called the Minkowski functional of $\De(S)$) 

\begin{align}
\ve{x}_{\De(S)} &= 
\inf_{\scriptsize \begin{array}{c}{x=\sum_{u\in S} a_u u}\\{a_u=0\text{ f.a.b.f.m. }u}\end{array}} \sum_{u\in S} |a_u|,
\end{align}
where f.a.b.f.m. abbreviates ``for all but finitely many."
First we note that we can in fact replace the inf with the min over linear combinations of $d+1$ terms.
\begin{pr}
Suppose $S\sub \R^d$ is compact and spans $\R^d$.
The following hold:
\begin{align*}\De(S) &= \set{\sumo i{d+1} a_i u_i}{a_i\in \R, u_i\in S, \sumo i{d+1} |a_i|\le 1}\\
\ve{x}_{\De(S)} &= 
\min_{x=\sumo i{d+1} a_i u_i, u_i\in S}\sumo i{d+1} |a_i|.
\end{align*}
\end{pr}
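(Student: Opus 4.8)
The plan is to recognize $\De(S)$ as the convex hull $\conv(S\cup(-S))$ and then invoke Carathéodory's theorem. First I would note that $S$, spanning $\R^d$, is nonempty, so $0=\tfrac12 u+\tfrac12(-u)\in\conv(S\cup(-S))$ for any $u\in S$; hence the ``$\le 1$'' slack in the definition of $\De(S)$ costs nothing and $\De(S)=\conv(S\cup(-S))$. Concretely, a finite combination $x=\sum_u a_u u$ with $\lambda:=\sum_u|a_u|\le 1$ equals $\lambda\cdot\bigl(\sum_u\tfrac{|a_u|}{\lambda}\sgn(a_u)\,u\bigr)+(1-\lambda)\cdot 0$ when $\lambda>0$ and equals $0$ otherwise, which is a convex combination of points of $(S\cup(-S))\cup\{0\}$; since $0\in\conv(S\cup(-S))$, the latter set has the same convex hull as $S\cup(-S)$. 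The reverse inclusion is immediate.

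Next, for the set identity I would apply Carathéodory's theorem to $T:=S\cup(-S)\subseteq\R^d$: every $x\in\conv(T)$ is a convex combination $x=\sum_{i=1}^{d+1}\lambda_i t_i$ with $t_i\in T$, $\lambda_i\ge 0$, $\sum_i\lambda_i=1$ (padding with zero weights, using $S\neq\emptyset$, if fewer than $d+1$ points occur). Writing $t_i=\ep_i\,u_i$ with $u_i\in S$, $\ep_i\in\{\pm1\}$, and setting $a_i:=\ep_i\lambda_i$ gives $x=\sum_{i=1}^{d+1}a_iu_i$ with $\sum_i|a_i|=\sum_i\lambda_i=1$, proving the inclusion ``$\subseteq$''; the inclusion ``$\supseteq$'' is clear. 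For the norm identity, the same trick applied to an arbitrary representation $x=\sum_u a_u u$ (scale by $1/\lambda$, reduce via Carathéodory, scale back by $\lambda$) yields a representation with at most $d+1$ terms and \emph{exactly the same} value of $\sum|a_u|$; hence the infimum defining $\ve{x}_{\De(S)}$ is unchanged when restricted to $(d+1)$-term representations.

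It then remains to promote this infimum to a minimum, which is the only place compactness of $S$ is used. Since $S$ spans $\R^d$, writing any $x$ in a basis drawn from $S$ shows $\ve{x}_{\De(S)}<\infty$. The set
\[
\mathcal{R}:=\Bigl\{(a_1,\dots,a_{d+1},u_1,\dots,u_{d+1})\ :\ u_i\in S,\ \textstyle\sum_{i=1}^{d+1}|a_i|\le\ve{x}_{\De(S)}+1,\ x=\textstyle\sum_{i=1}^{d+1}a_iu_i\Bigr\}
\]
is nonempty (it contains a near-optimal $(d+1)$-term representation), closed (the equality constraint is closed and $S$ is closed), and bounded (the $a_i$ are bounded by $\ve{x}_{\De(S)}+1$ and the $u_i$ lie in the bounded set $S$), hence compact; the continuous map $(a,u)\mapsto\sum_i|a_i|$ attains its minimum on $\mathcal{R}$, and that minimum equals $\ve{x}_{\De(S)}$. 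The main obstacle is this last compactness step — in particular, observing that the infimum is finite and identifying $\mathcal{R}$ as the right compact domain — while everything else is a bookkeeping wrapper around Carathéodory's theorem; I would also double-check that the Carathéodory reduction preserves the $\ell_1$ weight exactly, so that no separate argument is needed to compare the $(d+1)$-term infimum with the unrestricted one.
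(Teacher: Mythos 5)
Your proof is correct and follows essentially the same route as the paper's: Carath\'eodory's theorem, applied after normalizing by the $\ell_1$ weight, reduces to $(d+1)$-term representations without changing the weight, and a compactness argument upgrades the infimum to a minimum. The only cosmetic difference is that you apply the extreme value theorem to a compact set $\mathcal{R}$ of representations, whereas the paper observes that $\De(S)$ itself is compact as the continuous image of $S^{d+1}\times\set{(a_i)}{\sumo i{d+1}|a_i|\le 1}$; both hinge on the same compactness fact.
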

\begin{proof}
If $v\in \De(S)$ has a representation $v=\sum_{u\in S} a_u u$ in the form of~\eqref{e:def-De}, then by Carath\'eodory's Theorem on $\fc{v}{\sum_{u\in S}|a_u|}$, it can be written also as $v=\sumo i{d+1} b_i u_i$ for $\sumo i{d+1}|b_i|\le \sum_{u\in S}|a_u|$, $u_i\in S$. 

Next, we claim that $\De(S)$ is closed. To see this, note that $\De(S)$ is the image of the compact set $S^{d+1}\times \set{(a_i)_{i=1}^{d+1}\in \R^{d+1}}{\sumo i{d+1}|a_i|\le 1}$ under the continuous map
\begin{align*}
    ((a_i)_{i=1}^{d+1}, (u_i)_{i=1}^{d+1})
    &\mapsto 
    \sumo i{d+1}a_iu_i.
\end{align*}
The image of a compact set is compact, and a compact set in $\R^d$ is closed and bounded, by the Heine-Borel Theorem.

Finally, this implies $\ve{x}_{\De(S)} = \inf_{x=\sum_{u\in S} a_u u}\sum_{u\in S} |a_u| = 
\min_{x=\sumo i{d+1} a_i u_i, u_i\in S}\sumo i{d+1} |a_i|.$
\end{proof}
Define 
\begin{align}\De'(S) = \De(S\cup \set{v}{\ve{v}_2\le 1})
 = \set{\pa{\sum_{u\in S}a_uu}+v}{a_u\in \R, {a_u=0\text{ f.a.b.f.m. }u} ,v\in \R^d, \sum_{u\in S}|a_u| + \ve{v}_2\le 1}\end{align}
(note that the $L^2$ norm is used with $v$). Then 
\begin{align}
\ve{x}_{\De'(S)} &= \min_{y=\pa{\sum_{u\in S}a_uu}+v}\pa{ \sum_{u\in S}|a_u|+\ve{v}_2}.
\end{align}
Now define
\begin{align}
\De_t &= \De'(\set{x_s}{0\le s\le t})
\end{align}
for $-1\le t\le T$.
We need to keep track of the number of times when $x_t$ is not a small linear combination of previous $x_s$'s, so let
\begin{align}
I_t &= \set{0\le s\le t}{\ve{x_s}_{\De_{s-1}}\ge \Lad}.
\end{align}
We first show that if $s\in I_t$, then there is a large increase in volume between $\De_{s-1}$ and $\De_s$. Then, we use a bound on the volume of $\De_T$ to bound $|I_T|$.

\begin{lem}\llabel{l:2vol}
Let $w$ be such that $\ve{w}_{\De(S)}\ge 2Cd$. Then 
\begin{align}
\Vol(\De(S\cup \{w\})) &\ge (1+2e^{-1/C}) \Vol (\De(S)).
\end{align}
\end{lem}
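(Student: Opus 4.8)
The plan is to reduce the volume bound to the one-dimensional (slab) picture determined by the direction in which $w$ is maximal relative to $\De(S)$. First, directly from the definition, $\De(S\cup\{w\}) = \conv\pa{\De(S)\cup\{w,-w\}}$: a point $\sum_{u\in S}a_u u + \be w$ with $\sum_u|a_u|+|\be|\le 1$ is the convex combination $(1-|\be|)\cdot\pa{\fc{1}{1-|\be|}\sum_u a_u u} + |\be|\cdot\sign(\be)\,w$ of a point of $\De(S)$ and one of $\pm w$, and conversely every such convex combination lies in $\De(S\cup\{w\})$. We may assume $\De(S)$ is full-dimensional, since otherwise both sides of the claimed inequality vanish; then (by the Proposition above) $\De(S)$ is a symmetric convex body with $0$ in its interior. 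Set $N:=\ve{w}_{\De(S)}\ge 2Cd$; then $w/N\in\partial\De(S)$, so by the supporting hyperplane theorem there is a linear functional $\phi$ with $\phi^\top(w/N)=\max_{x\in\De(S)}\phi^\top x = 1$, whence $\phi^\top w = N$ and $\De(S)\subseteq\{x:|\phi^\top x|\le 1\}$.

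Next I will pass to linear coordinates in which $\ell(x):=\phi^\top x$ is one of the axes — ratios of volumes are unchanged under a linear change of coordinates — and let $V_0$ denote the $(d-1)$-dimensional volume of the central slice $\De(S)\cap\{\ell=0\}$. The two cones $\mathrm{Cone}_{\pm}:=\conv\pa{\{\pm w\}\cup\pa{\De(S)\cap\{\ell=0\}}}$ are contained in $\conv\pa{\De(S)\cup\{w,-w\}}=\De(S\cup\{w\})$; each is a cone of ``$\ell$-height'' $N$ over a base of $(d-1)$-volume $V_0$, hence has volume $\fc{V_0 N}{d}$, and the portion of it lying in $\{|\ell|>1\}$ has volume $\fc{V_0 N}{d}\pa{1-\rc N}^d$. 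The part of $\mathrm{Cone}_+$ with $\ell>1$ and the part of $\mathrm{Cone}_-$ with $\ell<-1$ lie in the disjoint half-spaces $\{\ell>1\}$ and $\{\ell<-1\}$ and are both disjoint from $\De(S)\subseteq\{|\ell|\le 1\}$; since all three sets are contained in $\De(S\cup\{w\})$,
\begin{align*}
\Vol\pa{\De(S\cup\{w\})} &\ge \Vol\pa{\De(S)} + \fc{2V_0 N}{d}\pa{1-\rc N}^d.
\end{align*}

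To close the argument I will bound $V_0$ from below by $\Vol(\De(S))/2$: for a symmetric convex body the central slice is the largest, because the slice at height $-\ell_0$ is the reflection of the slice at height $\ell_0$ (so they have equal $(d-1)$-volume) and their Minkowski average lies in the central slice, so Brunn--Minkowski gives $\Vol_{d-1}(\De(S)\cap\{\ell=\ell_0\})\le V_0$; integrating over $\ell_0\in[-1,1]$ gives $\Vol(\De(S))\le 2V_0$. Substituting yields
\begin{align*}
\Vol\pa{\De(S\cup\{w\})} &\ge \Vol\pa{\De(S)}\pa{1+\fc Nd\pa{1-\rc N}^d},
\end{align*}
and a short estimate using $N\ge 2Cd\ge 2$ — so $N/d\ge 2C$ and $\pa{1-\rc N}^d\ge e^{-d/(N-1)}\ge e^{-1/C}$ since $N-1\ge N/2\ge Cd$ — gives $\fc Nd\pa{1-\rc N}^d\ge 2Ce^{-1/C}\ge 2e^{-1/C}$, which is the claim (the last step uses $C\ge 1$, which is all that is needed; in the application $C=\rc{\ln 2}$ and $2e^{-1/C}=1$, recovering the volume-doubling statement of the proof sketch).

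The only real obstacle is the geometric bookkeeping: checking that the two protruding cone tips are genuinely disjoint from $\De(S)$ and from each other while lying inside $\De(S\cup\{w\})$, and correctly computing the volume of the part of a cone beyond the level $|\ell|=1$. The Brunn--Minkowski input (maximality of the central slice of a symmetric body) is classical, and the final inequality is routine calculus.
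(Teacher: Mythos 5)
Your proof is correct, but it takes a genuinely different route from the paper's. The paper argues by homothety: it places the two shrunk translates $\pm\rc{Cd}w+\pa{1-\rc{Cd}}\De(S)$ inside $\De(S\cup\{w\})$ (each point being a convex combination of $\pm w$ and a point of $\De(S)$ with weights $\rc{Cd}$ and $1-\rc{Cd}$), shows they are disjoint from $\De(S)$ by a one-line triangle inequality in the Minkowski functional (their $\De(S)$-norm is at least $\rc{Cd}\ve{w}_{\De(S)}-1\ge 1$), and adds up $\Vol(\De(S))+2\pa{1-\rc{Cd}}^d\Vol(\De(S))$. That argument is shorter and uses no convex geometry beyond the norm itself, but it needs $Cd\ge 1$, and its final numerical step is actually reversed: $\pa{1-\rc{Cd}}^d\le e^{-1/C}$, so as written it yields only the slightly weaker factor $1+2\pa{1-\rc{Cd}}^d$. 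Your route --- supporting hyperplane at $w/N$, the two cones over the central slice, the cone-tip volume $\fc{V_0N}{d}\pa{1-\rc N}^d$, and Brunn--Minkowski to get $V_0\ge\Vol(\De(S))/2$ --- uses heavier machinery, but every step checks out, and it buys a strictly stronger gain $1+\fc Nd\pa{1-\rc N}^d\ge 1+2Ce^{-1/C}$ that grows with $\ve{w}_{\De(S)}$ rather than saturating; in particular it cleanly delivers the stated constant in the only case the paper needs, $C=\rc{\ln 2}$, where your bound gives a factor of at least $1+\rc{\ln 2}>2$. The one caveat, which you correctly flag, is that your last step uses $C\ge 1$; the lemma statement does not impose this, but it holds in the application.
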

\begin{proof}
Consider the set $\De(S)$ slightly shrunk, and translated along the direction of $w$:
\begin{align}
\pm \rc{Cd}w +\pa{1-\rc{Cd}} \De(S) \subeq \De(S\cup \{w\}).
\end{align}
Now for $u\in \De(S)$, $\ve{\pm \rc{C d} w + \pa{1-\rc{Cd}} u}_{\De(S)}
\ge \rc{C d}\ve{w}_{\De(S)} - \ve{u}_{\De(S)}>
\rc{Cd}(2Cd)-1\ge 1
$. Hence, 
$\pm \rc{Cd}w +\pa{1-\rc{Cd}} \De(S)$ does not intersect $\De(S)$. This means that
\begin{align}
\Vol(\De(S\cup \{w\})) &\ge \Vol(\De(S)) + \Vol\pa{\pm \rc{C d}w +\pa{1-\rc{Cd}} \De(S) }\\
&\ge \Vol(\De(S)) + 2 \pa{1-\rc{C d}}^{d} \Vol(\De(S))\\
&\ge (1+2e^{-1/C}) \Vol(\De(S)).
\end{align}
\end{proof}

\begin{lem}\llabel{l:ST-card}
Let $M=\max_{0\le t\le T} \ve{x_t}$. Then
$|I_{T}|\le d\log_2 M$.
\end{lem}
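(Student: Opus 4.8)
The plan is a short volume-counting argument built directly on Lemma~\ref{l:2vol}. The crucial point is the calibration of the constant: taking $C=\tfrac{1}{\ln 2}$ in Lemma~\ref{l:2vol} makes the threshold $2Cd$ equal to exactly $\Lad=\tfrac{2}{\ln 2}d$, and makes the growth factor $1+2e^{-1/C}=1+2e^{-\ln 2}=2$. Consequently, whenever $s\in I_T$, i.e.\ $\ve{x_s}_{\De_{s-1}}\ge\Lad$, applying Lemma~\ref{l:2vol} with $S$ the generating set of $\De_{s-1}$ and $w=x_s$ gives $\Vol(\De_s)\ge 2\Vol(\De_{s-1})$.

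First I would pin down the two endpoints of the telescope. At the bottom, $\De_{-1}=\De'(\emptyset)$ is the closed Euclidean unit ball (the $\De$-hull of a centrally symmetric ball is the ball itself), so $\Vol(\De_{-1})=\omega_d>0$, where $\omega_d$ is the volume of the unit ball in $\R^d$. At the top, every generator of $\De_T$ — the states $x_s$ with $\ve{x_s}\le M$ together with all unit-norm vectors — lies in $B_M^d$; since $B_M^d$ is convex and centrally symmetric, $\De_T\subseteq B_M^d$, hence $\Vol(\De_T)\le \omega_d M^d$. (Here I take $M\ge 1$, which loses nothing since the unit vectors are always among the generators; otherwise replace $M$ by $\max\{M,1\}$.)

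Then I would chain the inequalities. For every $0\le s\le T$ we have $\De_{s-1}\subseteq\De_s$, so $\Vol(\De_s)$ is nondecreasing in $s$, and at each of the $|I_T|$ indices $s\in I_T$ it at least doubles. Therefore $\Vol(\De_T)\ge 2^{|I_T|}\Vol(\De_{-1})=2^{|I_T|}\omega_d$, and comparing with $\Vol(\De_T)\le\omega_d M^d$ yields $2^{|I_T|}\le M^d$, i.e.\ $|I_T|\le d\log_2 M$.

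I do not expect a genuine obstacle: all of the content sits in Lemma~\ref{l:2vol}, and the only thing to get right is that the choice $C=1/\ln 2$ is exactly tuned to the definition of $I_t$ so that the doubling factor is precisely $2$. The one mild point of care is at the ends of the telescope — interpreting $\De_{-1}$ as the unit ball so that the starting volume is positive and finite, and verifying $\De_T\subseteq B_M^d$ via convexity and central symmetry (which in passing also guarantees that every $\De_{s-1}$ has positive finite volume, so Lemma~\ref{l:2vol} is legitimately applicable even though its generating set here is infinite).
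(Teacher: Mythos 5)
Your proof is correct and follows essentially the same route as the paper: apply Lemma~\ref{l:2vol} with $C=1/\ln 2$ so each index in $I_T$ doubles the volume, then sandwich $2^{|I_T|}\Vol(B_1^d)=2^{|I_T|}\Vol(\De_{-1})\le\Vol(\De_T)\le M^d\Vol(B_1^d)$ and take logarithms. Your extra care about the endpoints ($\De_{-1}$ being the unit ball, $\De_T\subseteq B_M^d$, and the $M\ge 1$ caveat) is a slightly more explicit rendering of exactly what the paper does.
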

\begin{proof}
If $|I_t|>|I_{t-1}|$, then by definition $\ve{x_t}_{\De_{t-1}}\ge \fc{2}{\ln 2}d$. By Lemma~\ref{l:2vol}, $\Vol(\De_t)\ge 2\Vol(\De_{t-1})$. Hence
\begin{align}
2^{|I_T|} \Vol(B_1^d)= 2^{|I_T|} \Vol(\De_{-1}) \le \Vol(\De_T)\le M^d\Vol(B_1^d).
\end{align}
Taking logarithms gives $|I_T|\le d\log_2 M$.
\end{proof}

\subsubsection{Bounding $\ve{x_t}_{\De_{t-1}}$}
\begin{proof}[Proof of Lemma~\ref{l:xt-l1-comb}]
First we show that if $x_t$ is a linear combination of previous $x_s$'s with small coefficients, then so is $x_{t+k}$ for small $k$. 
Suppose that 
\begin{align}
x_t &= \sumz s{t-1} a_sx_s + v.
\end{align}
We claim by induction that for any $k\ge 0$,
\begin{align}
x_{t+k} &= \sumz s{t-1} a_s x_{s+k} + A^k v + \sumo jk A^{k-j} \pa{\sumz s{t-1} -a_s\xi_{s+j} + \xi_{t+j}}.
\llabel{e:xt+k}
\end{align}
Indeed, if this holds for $k$, then
\begin{align}
x_{t+k+1} &= Ax_{t+k}+\xi_{t+k+1}\\
&= A\pa{\sumz s{t-1} a_s x_{s+k} + A^k v + \sumo jk A^{k-j} \pa{\sumz s{t-1} -a_s\xi_{s+j} + \xi_{t+j}}} + \xi_{t+k+1}\\
&=\pa{\sumz s{t-1} a_s Ax_{s+k} + A^{k+1} v + \sumo jk A^{k+1-j} \pa{\sumz s{t-1} -a_s\xi_{s+j} + \xi_{t+j}}} + \xi_{t+k+1}\\
&=\pa{\sumz s{t-1} a_s (x_{s+k+1}-\xi_{s+k+1}) + A^{k+1} v + \sumo jk A^{k+1-j} \pa{\sumz s{t-1} -a_s\xi_{s+j} + \xi_{t+j}}} + \xi_{t+k+1}\\
&= \sumz s{t-1} a_s x_{s+k+1} + A^{k+1} v + \sumo j{k+1} A^{k+1-j} \pa{\sumz s{t-1} -a_s\xi_{s+j+1} + \xi_{t+j+1}}.
\end{align}
Now we consider the sizes of the coefficients in~\eqref{e:xt+k}. If $\sumz s{t-1}|a_s|=L_a$ and $\ve{v}_2=L_v$, then~\eqref{e:xt+k} expresses $x_{t+k+1}$ as a linear combination with 
\begin{align}\llabel{e:xt+k-norm1}
\sumz s{t-1} |a_s| &= L_a\\
\ve{A^k v + \sumo jk A^{k-j} \pa{\sumz s{t-1} -a_s\xi_{s+j} + \xi_{t+j}}} &\le
\ve{A^kv} + \pa{\sumz s{k-1}f_{A,K}(s)}\pa{\pa{\sumz s{t-1} |a_s|}+1 } \\
&\le \ve{A^kv} + \pa{\sumz s{k-1}f_{A,K}(s)} (L_a+1).
\llabel{e:xt+k-norm2}
\end{align}
By Lemma~\ref{l:ST-card}, $|I_T|\le d\log_2M$, so for $1\le t\le T$, there exists $k\le |I_T|\le k'=\fl{d\log_2(M)}$ such that 
either
\begin{itemize}
\item
$t-k\nin I_T$, so $\ve{x_{t-k}}_{\De_{t-k-1}}\le \Lad$, or
\item
$t-k=0$, so $x_{t-k}=x_0\in K_0$. 
\end{itemize}
In either case, we can write $x_{t-k} = \sumz s{t-k-1} a_sx_s + v$ with $\sumz s{t-k-1}|a_s|\le L_a:=\fc{2}{\ln 2}d$ and 
$v\in K_0\cup B_{\Lad}^d$. 
Then by~\eqref{e:xt+k-norm1} and~\eqref{e:xt+k-norm2}, we can write $x_t$ as
\begin{align}
x_t &= \sumz s{t-1} a_s' x_s + v'
\end{align}
with 
\begin{align}
\sumz s{t-1}|a_s'|&\le \Lad\\
\ve{v}_2 &
\le \ve{A^kv} + \pa{\sumz s{k'-1}f_{A,K}(s)} (L_a+1)\\
&\le \Lvg.
\end{align}
\end{proof}

\subsubsection{Bound in terms of Jordan form}

\begin{lem}\llabel{l:J-power}
Suppose that $A$ has spectral radius $\le 1$ and $A=SJS^{-1}$ with $\ve{S}_2\ve{S^{-1}}_2\le C_A$ and $J$ has Jordan blocks of rank $\le r$. Then
\begin{align}\llabel{e:jrA}
\ve{A^k}_2 &\le  \binom{k+r}{r-1} C_A\le 
(k+1)^{r-1}C_A\\
\llabel{e:jrB}
\sumz sk \ve{A^s}_2 &\le \binom{k+r+1}{r}C_A \le (k+1)^{r}C_A
\end{align}
\end{lem}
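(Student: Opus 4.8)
The plan is to conjugate away $S$, reduce to a single Jordan block, and finish with an elementary binomial estimate. Since $A^k = SJ^kS^{-1}$, submultiplicativity of the spectral norm gives $\ve{A^k}_2 \le \ve{S}_2\ve{S^{-1}}_2\ve{J^k}_2 \le C_A\ve{J^k}_2$, so it suffices to bound $\ve{J^k}_2$ and $\sum_{s=0}^k \ve{J^s}_2$. As $J$ is block diagonal with Jordan blocks $J_{\lambda_i, r_i}$ where $r_i \le r$ and $|\lambda_i| \le \rho(A) \le 1$, we have $J^k = \mathrm{diag}(J_{\lambda_i,r_i}^k)$ and hence $\ve{J^k}_2 = \max_i \ve{J_{\lambda_i, r_i}^k}_2$; everything thus reduces to bounding $\ve{J_{\lambda,r'}^k}_2$ for $|\lambda| \le 1$ and $r' \le r$.

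\textbf{Single block.} Write $J_{\lambda,r'} = \lambda I + N$ with $N$ the nilpotent shift ($N$ has ones on the first superdiagonal, so $N^{r'} = 0$). Since $\lambda I$ and $N$ commute,
\[
J_{\lambda,r'}^k = \sum_{j=0}^{\min(k,\,r'-1)}\binom{k}{j}\lambda^{k-j}N^j .
\]
Here $|\lambda^{k-j}| \le 1$ because $|\lambda|\le 1$, and $\ve{N^j}_2 \le 1$ because $N^j$ maps each standard basis vector to a distinct one or to zero (equivalently, truncating and shifting a unit vector cannot increase its length). The triangle inequality then yields $\ve{J_{\lambda,r'}^k}_2 \le \sum_{j=0}^{r-1}\binom{k}{j}$, and therefore $\ve{A^k}_2 \le C_A \sum_{j=0}^{r-1}\binom{k}{j}$.

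\textbf{Combinatorial estimates.} An easy induction on $r$ via Pascal's rule gives $\sum_{j=0}^{r-1}\binom{k}{j}\le \binom{k+r-1}{r-1}\le \binom{k+r}{r-1}$, proving the binomial inequality in \eqref{e:jrA}; writing $\binom{k+r-1}{r-1} = \prod_{i=1}^{r-1}\frac{k+i}{i}$ and noting each factor equals $1+\frac{k}{i}\le k+1$ gives $\binom{k+r-1}{r-1}\le (k+1)^{r-1}$. For \eqref{e:jrB}, summing the per-step bound and applying the hockey-stick identity $\sum_{s=0}^k\binom{s+r-1}{r-1} = \binom{k+r}{r}$ gives $\sum_{s=0}^k\ve{A^s}_2 \le C_A\binom{k+r}{r}\le C_A\binom{k+r+1}{r}$, and the same factorwise bound yields $\binom{k+r}{r}\le (k+1)^r$.

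\textbf{Main obstacle.} There is no substantive obstacle; the argument is a routine computation. The two points worth handling carefully are (i) the bound $\ve{N^j}_2 \le 1$ for the truncated shift, which is exactly what keeps the prefactor $C_A$ free of any dependence on $r$, and (ii) the binomial inequalities: I would route these through the slightly tighter intermediates $\binom{k+r-1}{r-1}$ and $\binom{k+r}{r}$ rather than $\binom{k+r}{r-1}$ and $\binom{k+r+1}{r}$, since it is the tighter forms that cleanly imply the $(k+1)^{r-1}$ and $(k+1)^r$ estimates.
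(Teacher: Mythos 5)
Your proof is correct, and its overall architecture matches the paper's: conjugate away $S$ so that $\ve{A^k}_2\le C_A\ve{J^k}_2$, reduce to a single Jordan block, and finish with binomial/hockey-stick estimates. The one genuinely different step is how you bound $\ve{J_{\lambda,r'}^k}_2$: the paper bounds the spectral norm by the entrywise $\ell_1$ norm of $J_{1,r}^k$, i.e.\ by the double sum $\sum_{q=0}^{r-1}\sum_{j=0}^{q}\binom{k}{j}$ over all nonzero entries, whereas you expand $(\lambda I+N)^k$ and use the triangle inequality with $\ve{N^j}_2\le 1$, getting the single sum $\sum_{j=0}^{r-1}\binom{k}{j}$. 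Your estimate is tighter by roughly a factor of $r$, and this has a concrete payoff: routing through $\binom{k+r-1}{r-1}=\prod_{i=1}^{r-1}\frac{k+i}{i}\le(k+1)^{r-1}$ makes the final polynomial bound legitimately follow, whereas the paper's looser intermediate $\binom{k+r}{r-1}$ does \emph{not} satisfy $\binom{k+r}{r-1}\le(k+1)^{r-1}$ (already at $r=2$ one has $k+2>k+1$), so the last inequality in the paper's chain is off by a (harmless, bounded) constant factor that your version avoids. Both approaches cost essentially the same effort; yours is the one I would keep.
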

\begin{proof}
Let 
$J_{\la, r}=\pa{\begin{smallmatrix}\lambda & 1\\
 & \lambda & \ddots\\
 &  & \ddots & 1\\
 &  &  & \lambda
\end{smallmatrix}}\in \R^{r\times r}$. Then entrywise, $J_{\la,r}^k$ has absolute value at most 
$J_{1 ,r}^k=\pa{\begin{smallmatrix}1 & \binom{k}{1} & \cdots & \binom{k}{r-1}\\
 & 1 & \ddots & \vdots\\
 &  & \ddots & \binom{k}{1}\\
 &  &  & 1
\end{smallmatrix}}$. We have for $k\ge 1$ that
\begin{align}
\ve{J_{\la,r}^k}_2&\le
\ve{J_{\la,r}^k}_{1,1}\le 
\ve{J_{1,r}^k}_{1,1}\\
 &\le \sumz q{r-1} \sumz jq \binom kj\le 
 \sumz q{r-1}\sumz j{q}\binom{k+j-1}j
\le \sumz q{r-1} \binom{k+q}{q}\\
&\le \binom{k+r}{r-1} \le \fc{(k+1)^r}{(r-1)!} \cdot \fc{(k+r)\cdots (k+2)}{(k+1)^r} \le
(k+1)^{r-1}
\llabel{e:jr1}
\end{align}
By decomposing into Jordan blocks, we find that 
\begin{align}\llabel{e:jr2}
\ve{A^k}_2\le \ve{S}_2\ve{J_{\la,r}^k}_2\ve{S^{-1}}_2\le C_A\ve{J_{\la,r}^k}_2.
\end{align}
Together~\eqref{e:jr1} and~\eqref{e:jr2} give~\eqref{e:jrA}. Summing over $k$ gives
~\eqref{e:jrB}.
\end{proof}


\subsection{Concluding the states are anomaly-free}

We need the following number-theoretic lemma.
\begin{lem}[{\citealt[Theorem 6.3]{nath}}]\llabel{l:prime-prod}
There exists a constant $C>0$ such that for all $x\ge 2$,
\begin{align*}
\prod_{1<p\le x,p\text{ prime}} p &\ge x^{Cx}.
\end{align*}
\end{lem}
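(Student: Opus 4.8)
The plan is to take logarithms and reduce the claim $\prod_{1<p\le x}p\ge x^{Cx}$ to the Chebyshev-type lower bound $\theta(x):=\sum_{1<p\le x}\ln p\ge Cx\ln x$, using that $\prod_{1<p\le x}p=e^{\theta(x)}$. The essential feature of the target is the factor $\ln x$: the usual elementary argument (bounding $\binom{2n}{n}$ by its prime factorization, together with $\binom{2n}{n}\ge 4^n/(2n+1)$) only produces estimates at the \emph{linear} scale $\theta(x)\ge cx$, i.e. the weaker bound $\prod_{1<p\le x}p\ge e^{cx}$. To reach the $x\ln x$ scale one must compare instead against a quantity that already lives there, and the natural candidate is the factorial $\lfloor x\rfloor!$, since Stirling's approximation gives $\ln(\lfloor x\rfloor!)\ge x\ln x-x$.

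Concretely, I would first apply Legendre's formula to write $\ln(\lfloor x\rfloor!)=\sum_{p\le x}v_p\ln p$, where $v_p=\sum_{k\ge 1}\lfloor x/p^k\rfloor$ is the exponent of $p$ in $\lfloor x\rfloor!$; combined with Stirling this yields $\sum_{p\le x}v_p\ln p\ge x\ln x-x$, a lower bound of exactly the required order. The second step would be to pass from this \emph{weighted} sum to the \emph{squarefree} sum $\theta(x)=\sum_{p\le x}\ln p$ that appears in the target, by controlling the multiplicity excess $E(x):=\sum_{p\le x}(v_p-1)\ln p$ and writing $\theta(x)\ge\big(x\ln x-x\big)-E(x)$. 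If $E(x)$ could be shown to be a small fraction of $x\ln x$, the bound $\theta(x)\ge Cx\ln x$ would follow immediately.

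The main obstacle is precisely the control of $E(x)$, and I expect this to be where the genuine difficulty concentrates. The mass $x\ln x$ in $\ln(\lfloor x\rfloor!)$ is carried by the \emph{small} primes, where $v_p\approx x/(p-1)$ is large; in fact $E(x)=\ln(\lfloor x\rfloor!)-\theta(x)$ is itself of order $x\ln x$ (consistent with Mertens' estimate $\sum_{p\le x}\tfrac{\ln p}{p}=\ln x+O(1)$), so the naive subtraction is essentially tautological and recovers only $\theta(x)\gtrsim x$. Restricting to the large primes $p\in(x/2,x]$, where $v_p=1$ and the two sums coincide, gives back exactly the linear contribution $\theta(x)-\theta(x/2)\approx x/2$ and nothing more. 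Bridging this gap between the factorial scale $x\ln x$ and the squarefree primorial scale is the crux of the statement, and it is the step that requires input beyond Chebyshev's method, of the sort provided by the cited analysis of Nathanson.
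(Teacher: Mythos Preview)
The paper does not supply a proof of this lemma; it is simply quoted as a citation. So there is no argument in the paper to compare your proposal against.

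More importantly, the statement as written is \emph{false}. Taking logarithms, the claim $\prod_{1<p\le x}p\ge x^{Cx}$ is equivalent to $\theta(x)\ge Cx\ln x$, but by the prime number theorem $\theta(x)\sim x$, and already Chebyshev's elementary upper bound gives $\theta(x)\le c_2 x$ for some absolute $c_2$. Hence for every $C>0$ the inequality $\theta(x)\ge Cx\ln x$ fails once $x$ is large enough. Your intuition that the factorial scale $x\ln x$ and the primorial scale $x$ are genuinely different orders is exactly right; the conclusion you should have drawn is not that ``deeper input'' is needed, but that the target is unattainable.

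If you look at how the lemma is actually used (in the proof of Lemma~\ref{l:many}), all that is required is
\[
T^N \ge \prod_{p\le Q} p \;\Longrightarrow\; N \ge \Omega\!\left(\frac{Q}{\ln T}\right),
\]
and this follows from the correct Chebyshev lower bound $\prod_{p\le x}p\ge e^{Cx}$ (equivalently $\theta(x)\ge Cx$): one gets $N\ln T\ge CQ$, hence $N\ge CQ/\ln T$. The extra factor of $\ln Q$ that would come from the stated version is not needed and is in fact dropped in the very next line of the paper. So the lemma should read $\prod_{1<p\le x}p\ge e^{Cx}$, which is precisely the elementary Chebyshev bound you already sketched via $\binom{2n}{n}$.
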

The next lemma, Lemma~\ref{l:many}, is technical to state, so we first explain the idea. Recall that we want the sequence $(x_t)$ to be anomaly-free in order to apply the bound on regret in Theorem~\ref{thm:ols-regret}, i.e., we want a lot of $x_s$'s such that $|w^\top x_s|$ is large relative to $w^\top x_T$.
Lemma~\ref{l:xt-l1-comb} together with Lemma~\ref{c:exists-large} only gives a single large $|w^\top x_s|$, so the idea is to apply the lemmas to the dynamical systems $x_t\approx A^p x_{t-p}$ for different values of $p$. We choose prime $p$'s and use Lemma~\ref{l:prime-prod} to control overcounting.

We separate out the core argument from the numerical bounds, so Lemma~\ref{l:many} is stated in terms of the functions $L_v$ and $Q$. In the lemma, $L_v(p)$ tracks how large the residual term in~\eqref{e:Lvg} can get. This is the term left over when writing $x_T$ as a linear combination of previous terms. We will instantiate $L_v(p) = O((pk')^r(d+m))$ in the proof of Theorem~\ref{t:main} (ignoring constants related to the LDS). In order to apply Lemma~\ref{c:exists-large}, we would like to only consider $p$ small enough so that $2L_v(p)\le |w^\top x_T|$, or $(pk')^r(d+m)=O(|w^\top x_T|)$. Roughly, this is true for $p = O\pa{\rc{k'} \pf{|w^\top x_T|}{d+m}^{1/r}}$. This RHS is approximately the $Q$ appearing in the lemma, $Q(x) \approx \rc{k'}\pf{x}{d+m}^{1/r}$. For each of these primes $p\le Q(|w^\top x_T|)$, we obtain some $x_s$ such that $|w^\top x_s|$ is large. This gives almost $Q(|w^\top x_T|)$ many large $x_s$'s. We lose logarithmic factors, since we restrict to primes and account for overcounting. 

The key dependence to note here is that on $r$, the size of the largest Jordan block: the residual $L_v(p)$ has $r$th power dependence, so the number of large $|w^\top x_s|$'s has $\rc r$-power dependence. Any constant power gives a sublinear regret bound in Theorem~\ref{thm:ols-regret}, with a smaller value of $\al=\rc r$ giving a worse regret bound.
\begin{lem}\llabel{l:many-large}\llabel{l:many}
Let $T>0$. 
Suppose that $\ve{x_0}\le C_0$ and $x_t=Ax_{t-1}+\xi_t$ with 
$\xi_t\in K$ for $1\le t\le T$. Let $k'=\kp$ and let
\begin{align}
L_v(p) &= \Lp
\end{align}
Suppose the following hold.
\begin{itemize}
    \item (Bound on $x_t$) $M\ge 1$, $\ve{x_t}\le M$ for $1\le t\le T$.
    \item ($|w^\top x_T|$ is large enough) $c\ge \fc{4}{\ln 2}dC_0$, and $|w^\top x_T|> c$.
    \item ($Q$ gives the range of $p$ that are ``good" for Lemma~\ref{c:exists-large}) $Q(x)$ is a function such that whenever $x> c$, then $2\le Q(x)$ and for all $p\le Q(x)$, we have $x\ge 2L_v(p)$. 
\end{itemize}
Then 
there are at least $\Om\pf{Q(|w^\top x_T|)}{\ln T}$
values of $s$, $0\le s<t$, such that 
\begin{align}
\ab{w^\top x_s} \ge \fc{\ln 2}{4d}|w^\top x_T|.
\end{align}
\end{lem}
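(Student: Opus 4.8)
The plan is to apply Lemmas~\ref{l:xt-l1-comb} and~\ref{c:exists-large} not to the given dynamics but to the ``$p$-skip'' subsystem for each prime $p\le Q(|w^\top x_T|)$, and then to assemble the indices coming from the different primes, using the primorial lower bound of Lemma~\ref{l:prime-prod} to control how badly a single index can be double-counted.

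\textbf{Setting up the $p$-skip subsystems.} Fix a prime $p$ with $2\le p\le Q(|w^\top x_T|)$; such a $p$ exists because $|w^\top x_T|>c$ forces $Q(|w^\top x_T|)\ge 2$. Restrict attention to the progression $T,\ T-p,\ T-2p,\dots$ down to $T\bmod p$: along it the states satisfy $x_t=A^p x_{t-p}+\xi_t^{(p)}$ with $\xi_t^{(p)}=\sum_{j=1}^p A^{p-j}\xi_{t-p+j}$, so they form a linear dynamical system with transition matrix $A^p$, the same uniform bound $M$, and noise in $K^{(p)}:=\{\sum_{j=1}^p A^{p-j}\zeta_j:\zeta_j\in K\}$. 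Since the dimension is still $d$ and the states still bounded by $M$, the volume-doubling argument (Lemmas~\ref{l:2vol} and~\ref{l:ST-card}) applies verbatim and shows there are at most $k'=\kp$ outliers along this progression. Re-running the proof of Lemma~\ref{l:xt-l1-comb} with $A$ replaced by $A^p$ — unrolling at most $\approx k'$ $p$-steps, each of which drags along a window of $p$ consecutive original noise terms (so the powers of $A$ that appear run over $0,\dots,p(k'+1)-1$, explaining the exponent range in $L_v(p)$) and absorbing the first-index term (size $\le\frac{2}{\ln2}d$ in the interior case, or involving $\|x_0\|\le C_0$ at the boundary) into the $\maxdc$ factor — produces coefficients $a_i$ and a residual $\tilde v_p$ with
\[
x_T=\sum_{i\ge1}a_i\,x_{T-ip}+\tilde v_p,\qquad
\sum_{i\ge1}|a_i|\le\tfrac{2}{\ln2}d,\qquad
\|\tilde v_p\|_2\le L_v(p).
\]

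\textbf{One large projection per prime, then counting.} By the defining property of $Q$, $p\le Q(|w^\top x_T|)$ gives $|w^\top x_T|\ge 2L_v(p)\ge 2\|\tilde v_p\|_2$. Feeding the representation above into the H\"older argument of Lemma~\ref{c:exists-large} (with $L_a=\frac{2}{\ln2}d$ and $L_v=\|\tilde v_p\|_2$) extracts an index $s_p$ lying on the progression — so $0\le s_p<T$ and $p\mid(T-s_p)$ — with $|w^\top x_{s_p}|\ge\frac{|w^\top x_T|-\|\tilde v_p\|_2}{\frac{2}{\ln2}d}\ge\frac{\ln2}{4d}|w^\top x_T|$; the hypothesis $c\ge\frac{4}{\ln2}dC_0$ is what keeps the $x_0$-contribution negligible in the boundary case. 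Now let $p$ range over all primes $\le Q(|w^\top x_T|)$. A fixed index $s\in[0,T)$ can occur as $s_p$ only for primes $p\mid(T-s)$, and since distinct primes dividing the positive integer $T-s\le T$ have product $\le T$, there are at most $\log_2T$ of them. Hence the collected $s_p$'s account for at least $\pi(Q(|w^\top x_T|))/\log_2T$ \emph{distinct} indices, where $\pi$ is the prime-counting function. Taking logarithms in Lemma~\ref{l:prime-prod} gives $\pi(x)\ln x\ge\sum_{1<p\le x}\ln p\ge Cx\ln x$, so $\pi(x)=\Omega(x)$ for $x\ge2$; applied at $x=Q(|w^\top x_T|)\ge2$ this yields $\Omega(Q(|w^\top x_T|)/\ln T)$ distinct indices $s$ with $|w^\top x_s|\ge\frac{\ln2}{4d}|w^\top x_T|$, which is the claim.

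\textbf{Where the difficulty lies.} The substantive work is the first step: faithfully porting the volume-doubling proof of Lemma~\ref{l:xt-l1-comb} to the matrix $A^p$ and confirming that the residual is bounded by exactly $L_v(p)$ — in particular the bookkeeping of how the blocked noise $\xi_t^{(p)}$ unrolls (which pins down the exponent range $p(k'+1)-1$) and the handling of the progression reaching $x_0$ (which is where the $C_0$-dependence in $\maxdc$ and the assumption $c\ge\frac{4}{\ln2}dC_0$ enter). Everything afterward — the divisor count and the primorial estimate — is routine.
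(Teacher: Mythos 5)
Your proposal is correct and follows essentially the same route as the paper: apply Lemma~\ref{l:xt-l1-comb} and Lemma~\ref{c:exists-large} to the $p$-skip system $x_t = A^p x_{t-p} + \xi_t^{(p)}$ for each prime $p\le Q(|w^\top x_T|)$ to get one large projection per prime, then control overcounting via Lemma~\ref{l:prime-prod}; your final count (at most $\log_2 T$ distinct primes divide any $T-s$, plus $\pi(Q)=\Omega(Q)$) is just a rephrasing of the paper's comparison $T^N\ge\prod_{\text{prime }p\le Q}p\ge Q^{C'Q}$. The one point to make explicit is that $Q(|w^\top x_T|)\le T$ --- otherwise primes $p$ with $T<p\le Q$ give a trivial progression and no index $s_p$ --- which the paper derives from $c\ge\frac{4}{\ln 2}dC_0$ by exactly the boundary argument you already sketch for the $x_0$ endpoint.
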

\begin{proof}
For $p\le T$, we apply 
Lemma~\ref{c:exists-large} with the bounds from Lemma~\ref{l:xt-l1-comb} with the sequence $x_{T\bmod p},\ldots, x_{T-p},x_T$. 
This satisfies the conditions of Lemma~\ref{l:xt-l1-comb} with
\begin{align}
x_{t}&=A' x_{t-p} + \xi_{t}^{(p)}\\
A'&=A^p\\
x_{T\bmod p}&\in K_0':= A^{T\bmod p}B_{C_0}^d
+A^{(T\bmod p)-1}K+\cdots +K\\
\xi_t^{(p)}=\sumo jp A^{p-j}\xi_{t-p+j}&\in K':= \sumo jp A^{p-j} K.
\end{align}
Using this, we write $f_{A', K_0'\cup B_{\fc{2}{\ln 2}d}^d}(k)$ in terms of $f_A$ and $f_{A,K}$. We upper bound by taking the maximum inside the sum, to get the following.
\begin{align}
&\max_{0\le k\le k'} f_{A', K_0'\cup B_{\fc{2}{\ln 2}d}^d}(k)\\
&=
\max
\Bigg\{
\maxr{0\le k\le k'}{v\in 
B_{C_0}^d, \xi_1,\ldots, \xi_{T\bmod p}\in K}
\pa{A^{(T\bmod p)+kp}v+\sumo j{T\bmod p} A^{(T\bmod p)-j+kp}\xi_j},\\
\nonumber
&\quad \quad \max_{0\le k\le k'} \ve{A^{kp}}_2\fc{2}{\ln 2}d
\Bigg\}\\
&\le 
\max_{0\le k\le p(k'+1)-1} f_A(k)\maxdc
+ \sumz k{p(k'+1)-2} f_{A,K}(k).
\llabel{e:lv-bd1}
\end{align}
Similarly,
\begin{align}\sumz k{k'-1} f_{A',K'}(k)
=\sumz k{k'-1} \max_{\xi_0,\ldots, \xi_{p-1}\in K}
\ve{\sumo jp A^{pk+p-j}\xi_j}
 &\le
\sumz k{k'p-1} \max_{\xi\in K} \ve{A^k\xi}=
\sumz k{pk'-1}f_{A,K}(k).
\llabel{e:lv-bd2}
\end{align}
Then for the sequence $x_{T\bmod p},\ldots, x_T$, the $L_v$ in Lemma~\ref{l:xt-l1-comb} equals
\begin{align}
&\max_{0\le k\le k'} f_{A',K_0'\cup \Bd}(k)
+\pa{\sumz k{k'-1} f_{A',K'}(k)}\pa{\Lad+1}\\
&\le \Lp=:L_v(p)
\end{align}
using~\eqref{e:lv-bd1} and~\eqref{e:lv-bd2}.
Apply Lemma~\ref{c:exists-large} to get that there exists $k\in \N$ such that 
\begin{align}\llabel{e:mwlp}
|w^\top x_{T-kp}| \ge \fc{|w^\top x_T|-L_v(p)}{\fc{2}{\ln 2}d}.
\end{align}
When $|w^\top x_T|>c$,  and $p\le Q(|w^\top x_T|)$, by assumption $|w^\top x_T|\ge 2L_v(p)$ and hence
$\fc{|w^\top x_T|-L_v(p)}{\fc{2}{\ln 2}d} \ge \fc{|w^\top x_T|}{\fc{4}{\ln 2}d}$. 
Then equation~\eqref{e:mwlp} becomes: there exists $k\in \N$ such that 
\begin{align}\llabel{e:mwlp2}
|w^\top x_{T-kp}| \ge \fc{\ln 2}{4d} |w^\top x_T|
\end{align}
Put another way, there exists $s$ such that $|w^\top x_s|\ge  \fc{\ln 2}{4d} |w^\top x_T|$ and $p$ divides $T-s$. Let $Q=Q(|w^\top x_T|)$. This means that $\prod_{|w^\top x_s|\ge\fc{\ln 2}{4d}|w^\top x_T|}(T-s)$ has to be divisible by all primes $p\le \min\{Q,T\}=Q$. (Note $Q\le T$ because otherwise,~\eqref{e:mwlp2} would hold for $p=T<Q$ and we have $C_0\ge |w^\top x_0|> \fc{\ln 2}{4d}|w^\top x_T|\ge \fc{\ln 2}{4d}c$, contradiction.)
Let $N=\ab{\set{s}{|w^\top x_s|\ge \fc{\ln 2}{4d}|w^\top x_T|}}$. By Lemma~\ref{l:prime-prod}, for some $C'>0$,
\begin{align}
T^N \ge \prod_{|w^\top x_s|\ge  \fc{\ln 2}{4d} |w^\top x_T|}(T-s)\ge \prod_{\text{prime }p\le Q}p \ge Q^{C'Q}.
\end{align}
Thus
\begin{align}
N&\ge \fc{C'Q\ln Q}{\ln T} \ge \Om\pf{Q}{\ln T}.
\end{align}
\end{proof}

\subsection{Finishing the proof}

\begin{lem}\llabel{l:Mbd}
Assume Assumption~\ref{asm:1}. Then 
\begin{align}
\max_{0\le k\le T} \ve{x_k} &\le \Mbd.
\end{align}
\end{lem}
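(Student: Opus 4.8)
The plan is to unroll the recursion~\eqref{e:lds} all the way back to the initial state and then apply the triangle inequality together with the operator-norm bounds on powers of $A$ supplied by Lemma~\ref{l:J-power}. First I would establish, by a straightforward induction on $k$ using~\eqref{e:lds}, that for every $0\le k\le T$,
\[
x_k = A^k x_0 + \sumo j{k} A^{k-j}\pa{Bu_{j-1}+\xi_j}.
\]
Taking norms and using submultiplicativity of the spectral norm gives
\[
\ve{x_k} \le \ve{A^k}_2\,\ve{x_0} + \sumo j{k} \ve{A^{k-j}}_2\pa{\ve{B}_2\ve{u_{j-1}} + \ve{\xi_j}}.
\]

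Next I would insert the uniform bounds from Assumption~\ref{asm:1}, namely $\ve{x_0}\le C_0$, $\ve{u_t}\le C_u$, $\ve{\xi_t}\le C_\xi$ and $\ve{B}_2\le C_B$, and reindex the sum by $s = k-j$ to obtain
\[
\ve{x_k}\le \ve{A^k}_2\, C_0 + \pa{\sumz s{k-1}\ve{A^s}_2}\pa{C_BC_u+C_\xi}.
\]
Now I invoke Lemma~\ref{l:J-power}: since $\rho(A)\le 1$ and $A=SJS^{-1}$ with $\ve{S}_2\ve{S^{-1}}_2\le C_A$ and Jordan blocks of size $\le r$, equation~\eqref{e:jrA} yields $\ve{A^k}_2\le (k+1)^{r-1}C_A$, and equation~\eqref{e:jrB}, applied with $k-1$ in place of $k$, yields $\sumz s{k-1}\ve{A^s}_2\le k^{r}C_A$. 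Combining these two estimates gives
\[
\ve{x_k}\le (k+1)^{r-1}C_AC_0 + k^{r}C_A\pa{C_BC_u+C_\xi}.
\]

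Finally, since $0\le k\le T$ and both $(k+1)^{r-1}$ and $k^{r}$ are nondecreasing in $k$, the right-hand side is at most $(T+1)^{r-1}C_AC_0 + T^{r}C_A(C_BC_u+C_\xi) = \Mbd$, and taking the maximum over $k$ completes the proof. There is no genuine obstacle here; the only point requiring a little care is the exponent bookkeeping needed to match the target bound $\Mbd$ exactly — in particular, that the geometric-type sum $\sumz s{k-1}\ve{A^s}_2$ should be controlled via~\eqref{e:jrB} at index $k-1$, which produces the factor $k^{r}$ (hence $T^{r}$) rather than $(k+1)^{r}$, while the initial-state term retains the factor $(T+1)^{r-1}$ since in the worst case it carries $\ve{A^{T}}_2$.
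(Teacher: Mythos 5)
Your proof is correct and follows essentially the same route as the paper: unroll the recursion to $x_k = A^k x_0 + \sum_{j=1}^{k} A^{k-j}(Bu_{j-1}+\xi_j)$, apply the triangle inequality with the bounds from Assumption~\ref{asm:1}, and control $\ve{A^k}_2$ and the partial sums $\sum_s \ve{A^s}_2$ via Lemma~\ref{l:J-power}. The exponent bookkeeping matches the paper's bound exactly, so there is nothing to add.
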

\begin{proof}
Using Lemma~\ref{l:J-power} we have
\begin{align*}
x_t &= A^t v + \sumz k{t-1} A^{k}Bu_{t-1-k} + \sumz k{t-1} A^k \xi_{t-k}\\
\max_{0\le k\le T} \ve{x_k}
&\le \max_{0\le k\le T} \ve{A^k}C_0 + \sumz k{T-1}\ve{A^kB} C_u + \sumz k{T-1} \ve{A^k}C_\xi\\
&\le (T+1)^{r-1} C_AC_0 + C_AC_BC_u T^r + C_AC_\xi T^r.
\end{align*}•
\end{proof}

\begin{proof}[Proof of Theorem~\ref{t:main}]
Let $A'=\smatt ABOO$. 
We apply Lemma~\ref{l:many-large} to the system
\begin{align}
\coltwo{x_t}{u_t} &= A'\coltwo{x_{t-1}}{u_{t-1}} + \coltwo{\xi_t}{u_t}
\llabel{e:ABOO}
\end{align}
with $\scoltwo{\xi_t}{u_t}\in K:=B_{C_\xi}^d \opl B_{C_u}^d$ for $1\le t\le T$. 
Let $k'=\kdm$ and $C=C_A(C_\xi + C_BC_u)$.
Note that
\begin{align}
\nonumber
f_{A', K}(k)
&= 
\max_{\scoltwo{\xi}{u}\in K} \pa{\ve{A^k\xi}_2 + \begin{cases}\ve{A^{k-1}Bu}_2,&k\ge 1\\
C_u,&k=0
\end{cases}}\\
\sumz k{p(k'+1)-2} f_{A',K}(k)
&\le \sumz k{p(k'+1)-2} (f_A(k) C_\xi + f_A(k)C_BC_u) + C_u
\llabel{l:sum-fA'K}
\end{align}
Then using Lemma~\ref{l:J-power}, $L_v(p)$ in Lemma~\ref{l:many-large} equals
\begin{align}
\nonumber
L_v(p)&=
{\max_{0\le k\le p(k'+1)-1} f_{A'}(k) \maxdmc +  \pa{\sumz k{p(k'+1)-2}f_{A',K}(k)} \pa{\Ladm+2}}\\
\nonumber
&\le 
\max_{0\le k\le p(k'+1)-1} f_{A'}(k) \maxdmc \\
\nonumber
&\quad +  \pa{\sumz k{p(k'+1)-2} (f_A(k) C_\xi + f_A(k)C_BC_u) + C_u} \pa{\Ladm+2}\\
\nonumber
&\le (p(k'+1))^{r-1} \maxdmc +  [C(p(k'+1))^r+C_u] \pa{\Ladm+2}\\
&\le p^r (k'+1)^r \pa{\Ladm(C+1)+C_0+2} + C_u\pa{\Ladm+2}.
\llabel{e:lvp}
\end{align}
Let 
\begin{align*}
Q(x):&=
\pf{x-2C_u(\Ladm+2)}{2 C(k'+1)^r \pa{\Ladm(C+1)+C_0+2}}^{\rc r}\\
c:&= \max\bc{\clbd,\fc{4d C_0}{\ln 2}}.
\end{align*}
It is easy to check that when $x\ge c$, then $2\le Q(x)$ and for all  $p\le Q(x)$, $x\ge 2L_v(p)$. 
Moreover, for $x\ge c$, we have $x\ge 4C_u\pa{\Ladm+2}$, so $Q(x)\ge \pf{x}{4C(k'+1)^r\pa{\Ladm(C+1)+C_0+2}}^{\rc r}$.
By Lemma~\ref{l:many-large}, the states are $(c, c_1, c_2, \al)$-anomaly-free, with
\begin{align*}
\al&=\rc r\\
c&=\max\bc{\clbd,\fc{4d C_0}{\ln 2}}\\
c_1 &=\Te\pf{1}{C^{\rc r}(k'+1)\pa{\Ladm(C+1)+C_0}^{\rc r}\ln T}\\
c_2 &= \fc{\ln 2}{4d},
\end{align*}
satisfying the hypothesis of Theorem~\ref{thm:ols-regret}.
By Theorem~\ref{thm:ols-regret}, the regret is
\begin{align*}
R_T&\le \mu R^2 + 
O\Bigg(
(d+m)^3\maxprederrorJAT^2\\
&\quad \cdot 
\ln \pa{1+\fc{TM^2}{d+m}}
\Bigg).
\end{align*}
Plugging in $\mu = T^{\fc {2r+1}{2r+2}}$, and using the bound $M\le \Mbd$ from Lemma~\ref{l:Mbd}, we obtain the theorem. Note that the only terms with an $r$th power are the two terms involving $c$. The dependence on $T$ of the larger such term is $\frac{T}{\mu} = T^{\rc{2r+2}}$, hence the additive term in~\eqref{e:main}.

We now simplify the bound when all the constants are $O(1)$ and $\ve{(A,B)}_F=O(d+m)$.
We have $c_1=\Om\pf{1}{(d+m) r\ln T (d+m)^{\rc r} \ln T}$ and $c_2 = \Om\prc{d}$. 
Using~\eqref{e:ols-simp-R} in Theorem~\ref{thm:ols-regret}, with $\mu = \fc{(d+m)^{\fc{\al+2}{\al+1}}T^{\fc{\al+2}{2(\al+1)}}}{c_1^{\rc{\al+1}}c_2^{\fc2{\al+1}}}$, we obtain the bound (as $T\to \iy$)
\begin{align}
\nonumber
    R_T &= 
    O\pf{(d+m)^2 (d+m)^{\fc{r}{r+1}}T^{\fc{2r+1}{2r+2}}r\ln T}{c_1^{\fc{r}{r+1}}c_2^{\fc{2r}{r+1}}}\\
    \nonumber
    &=
    O\pa{(d+m)^{2+\fc{r}{r+1}}
T^{\fc{2r+1}{2r+2}}r\ln T \cdot 
\pa{(d+m)^{\fc{r+1}{r}}r(\ln T)^2}^{\fc{r}{r+1}} d^{\fc{2r}{r+1}}}\\
\label{e:before-last-step}
&= O\pa{
(d+m)^{2+\fc{r}{r+1}}d^{1+\fc{r-1}{r+1}} (d+m) T^{\fc{2r+1}{2r+2}} r^2 (\ln T)^3 
}\\
&= O\pa{
(d+m)^{4}d^{2} r^2 T^{\fc{2r+1}{2r+2}} (\ln T)^3 
}.
\nonumber
\end{align}
Note that Corollary~\ref{c:main} follows from plugging $r=1$ into~\eqref{e:before-last-step}.
\end{proof}

\section{Proof of Theorem~\ref{t:main-stoch} (fully observed system, stochastic noise)}
\llabel{s:proof-stoch}


\begin{lem}\llabel{l:stoch}
Let $v_t,w_t\in \R^d$ for $0\le t\le T-1$. 
Suppose that the following hold.
\begin{itemize}
\item
For any $0\le t\le T-1$, there exist $a_s\in \R$ such that 
\begin{align}\llabel{e:vt-lc}
v_t &= \sumz s{t-1} a_s v_s + v,&
\sqrt{\sumz s{t-1} a_s^2} &\le L_a,&
\ve{v}_2 &\le L_v.
\end{align}
\item
$w_t = Av_t + \ep_{t+1} + \xi_{t+1}$ where $\ve{A}_2\le R$, $\ve{\ep_t}\le C_\ep$, and 
$\xi_t$, $1\le t\le T$ are random variables such that 
$\xi_{t+1}|\xi_1,\ldots, \xi_t$ is mean 0 and $C_\xi$-subgaussian for $0\le t\le T-1$.
\item
$\ve{v_t}\le g(t)$ for $0\le t\le T-1$. 
\end{itemize}
Let $L'>0$, and let 
\begin{align}
\llabel{e:ep-net}
\ep_{\textup{net}}&= \fc{1}{\sumz s{T-1}g(s)}\\
\llabel{e:Rz}
R_z&= \mu^{-\rc 2} \pa{L_a^2 + \fc{L_v^2}{\mu}}^{\rc 2} d^{\rc 2}\\
\llabel{e:Sb}
S_b & =\sqrt 2\pa{\pa{L_a^2 + \fc{L_v^2}{\mu}}d^2 + 1}^{\rc 2}
\\
\llabel{e:L}
L&=\sqrt2 C_\xi S_b \pa{d \ln \pa{1+\fc{2R_z}{\ep_{\textup{net}}}} + \ln \pf{T}{\de}}^{\rc 2}.
\end{align}
Then
\begin{align}
\Pj\pa{
\max_{0\le t\le T-1} \ve{A_tv_t-v_{t+1}}_2 \le L + 
(\mu^{\rc 2}R + C_\ep d^{\rc 2} \sqrt{T})\pa{L_a^2 + \fc{L_v^2}{\mu}}^{\rc 2} d^{\rc 2} +
2C_\xi + C_\ep
}&\ge 1-\de.
\end{align}
\end{lem}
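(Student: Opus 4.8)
The plan is to start from the closed-form residual of Lemma~\ref{l:ls-calc}. Applying it with $x_s\mapsto v_s$, $y_s\mapsto w_s$ and noise $\xi_s\mapsto \epsilon_{s+1}+\xi_{s+1}$, and writing $\Sigma_t=\mu I_d+\sum_{s=0}^{t-1}v_sv_s^\top$, the prediction error decomposes as
\begin{align*}
A_tv_t-w_t=\underbrace{\sum_{s=0}^{t-1}\xi_{s+1}\,v_s^\top\Sigma_t^{-1}v_t}_{(\mathrm I)}+\underbrace{\sum_{s=0}^{t-1}\epsilon_{s+1}\,v_s^\top\Sigma_t^{-1}v_t}_{(\mathrm{II})}\underbrace{-\,\mu A\Sigma_t^{-1}v_t}_{(\mathrm{III})}\underbrace{-\,\epsilon_{t+1}-\xi_{t+1}}_{(\mathrm{IV})}.
\end{align*}
The recurring estimate is $\sum_{s=0}^{t-1}\|\Sigma_t^{-1/2}v_s\|_2^2=\operatorname{tr}\big(\Sigma_t^{-1}(\Sigma_t-\mu I)\big)\le d$; combined with the decomposition $v_t=\sum_s a_sv_s+v$ from~\eqref{e:vt-lc} and Cauchy--Schwarz it yields $v_t^\top\Sigma_t^{-1}v_t\lesssim d(L_a^2+L_v^2/\mu)$ and $\|\Sigma_t^{-1}v_t\|_2\lesssim R_z$. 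These immediately control $(\mathrm{III})$ by $\mu R\|\Sigma_t^{-1}v_t\|_2\lesssim \mu^{1/2}R(L_a^2+L_v^2/\mu)^{1/2}d^{1/2}$, and, after a second Cauchy--Schwarz, $(\mathrm{II})$ by $C_\epsilon\sum_s|v_s^\top\Sigma_t^{-1}v_t|\le C_\epsilon\|\Sigma_t^{-1/2}v_t\|_2\sqrt{td}\lesssim C_\epsilon d^{1/2}\sqrt T(L_a^2+L_v^2/\mu)^{1/2}d^{1/2}$; the term $(\mathrm{IV})$ contributes at most $C_\epsilon+C_\xi$ (using $\|\xi_{t+1}\|_2\le C_\xi$). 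So everything reduces to bounding the martingale-like term $(\mathrm I)$ uniformly over $t$.

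The difficulty in $(\mathrm I)$ is that, with $z_t:=\Sigma_t^{-1}v_t$, the coefficients $b_s:=v_s^\top z_t$ are \emph{not} $\mathcal F_s$-measurable, since $z_t$ depends on $v_t$ and hence on $\xi_t$ (and, in the LDS instantiation, on all of $\xi_1,\dots,\xi_t$). The resolution is that $z_t$ is the \emph{only} non-adapted ingredient and it obeys two a priori constraints: $\|z_t\|_2\lesssim R_z$, and $\sum_{s<t}(v_s^\top z_t)^2=z_t^\top(\Sigma_t-\mu I)z_t\le v_t^\top\Sigma_t^{-1}v_t\lesssim d(L_a^2+L_v^2/\mu)$. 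Fix $t$, take an $\epsilon_{\mathrm{net}}$-net $\mathcal N$ of $B_{R_z}^d$ with $\epsilon_{\mathrm{net}}=1/\sum_{s<T}g(s)$, so $\log|\mathcal N|\le d\log(1+2R_z/\epsilon_{\mathrm{net}})$. For each \emph{fixed} $z\in\mathcal N$ the sum $\sum_{s<t}\xi_{s+1}(v_s^\top z)$ \emph{is} a martingale transform with conditionally mean-zero, $\R^d$-valued, $C_\xi$-subgaussian increments of scale $C_\xi|v_s^\top z|$; a vector-valued Azuma--Hoeffding inequality (the dimensional factor from passing to the Euclidean norm being folded into $S_b$) gives $\Pr[\|\sum_{s<t}\xi_{s+1}(v_s^\top z)\|_2>L]\le\delta/(T|\mathcal N|)$ whenever $\sum_{s<t}(v_s^\top z)^2\le S_b^2/2$, with $L$ as in~\eqref{e:L}. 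Union-bounding over $z\in\mathcal N$ and over $t\in\{0,\dots,T-1\}$ makes all these bounds hold simultaneously with probability $\ge 1-\delta$.

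To conclude, for each $t$ choose $z\in\mathcal N$ with $\|z-z_t\|_2\le\epsilon_{\mathrm{net}}$; then $\sum_{s<t}(v_s^\top z)^2\le 2\sum_{s<t}(v_s^\top z_t)^2+2\epsilon_{\mathrm{net}}^2\sum_{s<t}\|v_s\|_2^2\le 2\sum_{s<t}(v_s^\top z_t)^2+2$ by $\|v_s\|_2\le g(s)$ and the choice of $\epsilon_{\mathrm{net}}$ (this accounts for the ``$+1$'' inside $S_b$), and $\|\sum_{s<t}\xi_{s+1}(v_s^\top z_t)-\sum_{s<t}\xi_{s+1}(v_s^\top z)\|_2\le C_\xi\epsilon_{\mathrm{net}}\sum_{s<t}\|v_s\|_2\le C_\xi$ (absorbed into the additive $2C_\xi$). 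Adding the high-probability bound on $(\mathrm I)$ to the deterministic bounds on $(\mathrm{II})$--$(\mathrm{IV})$ gives the displayed inequality. I expect the $\epsilon$-net step to be the crux: the net must be polynomially fine in $T$ for the discretization errors to stay $O(1)$, yet coarse enough that $\log|\mathcal N|=O(d\log(TR_z))$ keeps the union-bound cost merely logarithmic in $T$, and one must verify that both structural constraints on $z_t$ survive the perturbation to the nearest net point with only $O(1)$ loss.
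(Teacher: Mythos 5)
Your proposal follows essentially the same route as the paper's proof: the same residual decomposition from Lemma~\ref{l:ls-calc}, the same trace bound $\sum_{s<t}v_s^\top\Sigma_t^{-1}v_s\le d$ combined with the $\ell_2$-combination hypothesis~\eqref{e:vt-lc} to control $\|\Sigma_t^{-1/2}v_t\|$ and $\|z_t\|$, the same crude Cauchy--Schwarz for the $\epsilon_{s}$-term and the regularization term, and the same $\epsilon$-net-plus-conditional-Azuma argument (the paper uses Lemma~\ref{l:azuma}) with identical handling of the discretization error via $\epsilon_{\textup{net}}=1/\sum_s g(s)$. The only cosmetic difference is that you bound $\sum_{s<t}(v_s^\top z_t)^2$ directly as $z_t^\top(\Sigma_t-\mu I)z_t\le v_t^\top\Sigma_t^{-1}v_t$ rather than via a second Cauchy--Schwarz as in~\eqref{e:bs2}, which is a slightly cleaner computation of the same quantity.
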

The reason for the choice of values of $\ep_{\textup{net}}$, $R_z$, $S_b$, and $L$ can be seen in~\eqref{e:choose-ep},~\eqref{e:zt},~\eqref{e:choose-Sb}, and~\eqref{e:stoch-bd1}, respectively.

We note several lemmas we will need.
\begin{lem}\llabel{l:vSvd}
Let $v_s\in \R^d$ for $1\le s\le t$. Suppose that $\sumo st v_sv_s^\top \preceq \Si$. Then 
\begin{align}
\sumo s{t} v_s^\top \Si^{-1} v_s &\le d.
\end{align}
\end{lem}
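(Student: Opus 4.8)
The plan is to reduce the claim to the elementary fact that the trace is monotone with respect to the Loewner order. Since $\Si^{-1}$ appears in the statement, we may assume $\Si\succ 0$ (in the application $\Si=\mu I+\sum_s v_sv_s^\top$ with $\mu>0$), so that $\Si^{-1/2}$ is well-defined and symmetric. First I would change variables, setting $u_s:=\Si^{-1/2}v_s$. Conjugating the hypothesis $\sum_{s=1}^t v_sv_s^\top\preceq\Si$ by $\Si^{-1/2}$ on both sides — which preserves the Loewner order since $\Si^{-1/2}$ is symmetric — gives
\begin{align*}
\sum_{s=1}^t u_su_s^\top = \Si^{-1/2}\left(\sum_{s=1}^t v_sv_s^\top\right)\Si^{-1/2} \preceq \Si^{-1/2}\Si\Si^{-1/2}=I_d.
\end{align*}

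Next I would rewrite the quantity of interest as a trace: $v_s^\top\Si^{-1}v_s = u_s^\top u_s = \Tr\!\left(u_su_s^\top\right)$, and hence, by linearity of the trace,
\begin{align*}
\sum_{s=1}^t v_s^\top\Si^{-1}v_s = \Tr\!\left(\sum_{s=1}^t u_su_s^\top\right).
\end{align*}
Finally, invoking monotonicity of the trace under the Loewner order — if $0\preceq M\preceq N$ then $\Tr(M)=\sum_i e_i^\top M e_i \le \sum_i e_i^\top N e_i = \Tr(N)$, summing over an orthonormal basis $\{e_i\}$ — together with the bound $\sum_s u_su_s^\top\preceq I_d$ established above yields $\sum_{s=1}^t v_s^\top\Si^{-1}v_s \le \Tr(I_d)=d$, as claimed.

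There is essentially no obstacle here: the whole argument is a one-line trace computation once the change of variables $u_s=\Si^{-1/2}v_s$ is made. The only points needing a word of care are the implicit assumption $\Si\succ 0$ (so that $\Si^{-1}$ and $\Si^{-1/2}$ make sense) and the observation that conjugation by the symmetric matrix $\Si^{-1/2}$ preserves $\preceq$; both are standard.
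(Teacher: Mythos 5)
Your proof is correct and rests on the same one-line observation as the paper's: the sum equals $\Tr\pa{\Si^{-1}\sum_s v_sv_s^\top}$, which is at most $\Tr(\Si^{-1}\Si)=d$ by the Loewner-order hypothesis. The only cosmetic difference is how the monotonicity step is justified — you conjugate by $\Si^{-1/2}$ and use trace monotonicity, whereas the paper completes $\Si-\sum_s v_sv_s^\top$ into a sum of rank-one terms $v_s'v_s'^{\top}$ and drops their nonnegative contributions; both are standard and equally rigorous.
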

\begin{proof}
We can choose $v_s'\in \R^d$, $1\le s\le t'$ such that $\sumo st v_sv_s^\top + \sumo s{t'} v_s'v_s^{\prime\top}=\Si$. Then
\begin{align}
\sumo s{t} v_s^\top \Si^{-1} v_s
+ 
\sumo s{t'} v_s^{\prime\top} \Si^{-1} v_s'
&= \Tr\pa{
\Si^{-1} \pa{\sumo st v_s v_s^\top + \sumo s{t'} v_s'v_s^{\prime\top}}
}=\Tr(I_d)=d\\
\implies
\sumo s{t} v_s^\top \Si^{-1} v_s &= d-\sumo s{t'} v_s^{\prime\top} \Si^{-1} v_s'\le d.
\end{align}
\end{proof}
\begin{lem}[\citealt{vershynin2010introduction}, Lemma 5.2]
\llabel{l:ep-net}
There is an $\ep$-net of $B_1^d$ of size $\pa{1+\fc{2}{\ep}}^d$.
\end{lem}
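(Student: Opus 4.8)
The plan is to construct the net as a maximal $\ep$-separated subset of $B_1^d$ and to bound its size by a volume-packing comparison, which is the standard route for covering numbers of the Euclidean ball. First I would take $N \subseteq B_1^d$ to be a \emph{maximal} subset with the property that $\ve{x-y}_2 > \ep$ for all distinct $x,y \in N$; such a maximal set exists (and is automatically finite, as the cardinality bound below shows, so one may also build it greedily). By maximality, $N$ is automatically an $\ep$-net of $B_1^d$: given any $x \in B_1^d$, if $\ve{x-y}_2 > \ep$ held for every $y \in N$, then $N \cup \{x\}$ would be a strictly larger $\ep$-separated set, contradicting maximality; hence some $y \in N$ satisfies $\ve{x-y}_2 \le \ep$. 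This reduces the problem to bounding $|N|$.

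The cardinality bound comes from a disjoint-ball argument. Because distinct points of $N$ are more than $\ep$ apart, the balls of radius $\fc\ep2$ centered at the points of $N$ are pairwise disjoint. Moreover, each such ball is contained in the enlarged ball $B_{1+\ep/2}^d$, since its center lies in $B_1^d$ and its radius is $\fc\ep2$. Summing the volumes of these $|N|$ pairwise-disjoint balls (each equal to $\Vol(B_{\ep/2}^d)$ by translation invariance) and comparing with the volume of the containing ball yields $|N|\cdot \Vol(B_{\ep/2}^d) \le \Vol(B_{1+\ep/2}^d)$.

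Finally I would invoke the homogeneity $\Vol(B_r^d) = r^d\,\Vol(B_1^d)$ to cancel the common factor $\Vol(B_1^d)$ and solve for $|N|$:
\begin{align*}
|N| \le \fc{\Vol(B_{1+\ep/2}^d)}{\Vol(B_{\ep/2}^d)} = \pf{1+\ep/2}{\ep/2}^d = \pa{1+\fc2\ep}^d,
\end{align*}
which is exactly the claimed bound. The argument presents no genuine obstacle; the only step deserving a moment's care is the geometric observation that $\fc\ep2$-balls about $\ep$-separated centers are pairwise disjoint and all fit inside the slightly enlarged ball $B_{1+\ep/2}^d$, after which the volume comparison and the $r^d$ scaling finish the proof immediately.
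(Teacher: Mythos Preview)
Your proof is correct and is exactly the standard maximal-separated-set plus volume-packing argument from the cited reference. The paper does not give its own proof of this lemma; it simply cites \cite{vershynin2010introduction}, whose proof is the one you have reproduced.
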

\begin{lem}[{Generalization of Azuma's Inequality~\cite[Lemma 4.2]{simchowitz2018learning}}]
\llabel{l:azuma}
Let $\{\cal F_t\}_{t\ge 0}$ be a filtration, and $\{Z_t\}_{t\ge 1}$ and $\{W_t\}_{t\ge 1}$ be real-valued processes adapted to $\cF_t$ and $\cF_{t+1}$ respectively. Moreover, assume $W_t|\cF_t$ is mean 0 and $\si^2$-sub-Gaussian. Then for any positive real numbers $L$ and $\be$,
\begin{align*}
\Pj\ba{\bc{\sumo tT W_t Z_t\ge L} \cap \bc{\sumo tT Z_t^2 \le \be}} &\le \exp\pa{-\fc{L^2}{2\si^2\be}}
\end{align*}
\end{lem}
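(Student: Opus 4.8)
The plan is to start from the closed form for the one-step prediction error of regularized online least squares. Applying Lemma~\ref{l:ls-calc} to the estimator $A_t$ built from the pairs $(v_s,w_s)$, using $w_s=v_{s+1}$, and writing $\Si_t=\mu I_d+\sumz s{t-1}v_sv_s^\top$ and $z_t:=\Si_t^{-1}v_t$, yields
\[
A_tv_t-v_{t+1}=\sumz s{t-1}\xi_{s+1}\pa{v_s^\top z_t}+\sumz s{t-1}\ep_{s+1}\pa{v_s^\top z_t}-\mu A z_t-\xi_{t+1}-\ep_{t+1}.
\]
The last four terms are bounded deterministically and produce the $(\mu^{\rc 2}R+C_\ep d^{\rc 2}\sqrt T)\pa{L_a^2+\fc{L_v^2}{\mu}}^{\rc 2}d^{\rc 2}+2C_\xi+C_\ep$ part of the statement, while the first (martingale) term carries all of the probabilistic content and will produce $L$.

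The deterministic bounds rest on two norm estimates that exploit the decomposition $v_t=\sumz s{t-1}a_sv_s+v$ with $\sqrt{\sumz s{t-1}a_s^2}\le L_a$, $\ve{v}_2\le L_v$, together with $\Si_t\succeq\mu I_d$ and $\sumz s{t-1}v_sv_s^\top\preceq\Si_t$. Writing $V$ for the matrix with columns $v_s$, the spectral norm of $\Si_t^{-1/2}V$ is at most $1$, which gives $v_t^\top\Si_t^{-1}v_t=O\pa{L_a^2+\fc{L_v^2}{\mu}}$ and hence $\ve{z_t}_2\le R_z$; this bounds the regularization term by $\ve{\mu A z_t}_2\le\mu R\,R_z=\mu^{\rc 2}R\pa{L_a^2+\fc{L_v^2}{\mu}}^{\rc 2}d^{\rc 2}$. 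Using Lemma~\ref{l:vSvd} and the inequality $\Tr(M^2)\le(\Tr M)^2$ for the PSD matrix $M=I_d-\mu\Si_t^{-1}$ gives $\sumz s{t-1}\pa{v_s^\top z_t}^2=O\pa{\pa{L_a^2+\fc{L_v^2}{\mu}}d^2}$, which is the leading part of $S_b^2$; combined with Cauchy--Schwarz, $\ve{\sumz s{t-1}\ep_{s+1}(v_s^\top z_t)}_2\le C_\ep\sqrt T\,\sqrt{\sumz s{t-1}(v_s^\top z_t)^2}$ produces the $C_\ep d^{\rc 2}\sqrt T\pa{\ldots}^{\rc 2}d^{\rc 2}$ contribution, and the two single-step terms contribute at most $2C_\xi+C_\ep$.

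The heart of the argument --- and the main obstacle --- is the martingale term $\sumz s{t-1}\xi_{s+1}(v_s^\top z_t)$. With $\cF_s:=\si(\xi_1,\dots,\xi_s)$, each $v_s$ is $\cF_s$-measurable, but the scalar coefficient $b_s:=v_s^\top z_t$ is only $\cF_t$-measurable, since $z_t=\Si_t^{-1}v_t$ depends on $\xi_1,\dots,\xi_t$; thus Lemma~\ref{l:azuma} cannot be applied to $\sumz s{t-1}b_s\xi_{s+1}$ directly. The key observation is that this non-adaptedness enters only through the single vector $z_t$, which is independent of the summation index $s$ and, by the estimate above, lies in the deterministic ball of radius $R_z$. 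I would therefore discretize in two directions: fix a deterministic $z$ in an $\ep_{\textup{net}}$-net of that ball and a deterministic unit vector $u$ in a constant-accuracy net of the sphere (the latter reducing $\ve{\sumz s{t-1}b_s\xi_{s+1}}_2$ to the scalar sums $\sumz s{t-1}(v_s^\top z)(u^\top\xi_{s+1})$). For fixed $z,u$ the process $Z_s:=v_s^\top z$ is $\cF_s$-measurable and $W_s:=u^\top\xi_{s+1}$ is mean $0$ and $C_\xi$-subgaussian given $\cF_s$, so Lemma~\ref{l:azuma} applies with sub-Gaussian parameter $C_\xi^2$ and $\be=\sumz s{t-1}Z_s^2\le S_b^2$. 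Union bounding over both nets --- whose log-cardinalities are of order $d\ln(1+2R_z/\ep_{\textup{net}})$ by Lemma~\ref{l:ep-net} --- and over the $T$ times $t$ produces the factor $d\ln(1+2R_z/\ep_{\textup{net}})+\ln(T/\de)$ inside $L$.

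It then remains to transfer the net bound back to the true random $z_t$. Taking $\wh z$ to be the nearest net point, so $\ve{z_t-\wh z}_2\le\ep_{\textup{net}}$, the coefficient error $v_s^\top(z_t-\wh z)$ satisfies
\[
\sumz s{t-1}\pa{v_s^\top(z_t-\wh z)}^2\le\ep_{\textup{net}}^2\sumz s{t-1}\ve{v_s}_2^2\le\ep_{\textup{net}}^2\pa{\sumz s{T-1}g(s)}^2=1
\]
by the choice $\ep_{\textup{net}}=\fc{1}{\sumz s{T-1}g(s)}$; this is exactly the ``$+1$'' inside $S_b^2$, so the variation bound $\be\le S_b^2$ holds at the net point as well and the martingale term is at most $L$ on the good event. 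Adding this to the deterministic bounds of the second paragraph gives the claimed bound on $\max_{0\le t\le T-1}\ve{A_tv_t-v_{t+1}}_2$ with probability at least $1-\de$. The single genuinely delicate point is the non-adaptedness of $b_s$; once it is dealt with by the $\ep_{\textup{net}}$-net over $z_t$, everything else is careful bookkeeping of the $\ell^2$ norms of the decomposition coefficients through $\Si_t$.
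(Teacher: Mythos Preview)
You have proved the wrong statement. Lemma~\ref{l:azuma} is a self-contained concentration inequality: for real-valued adapted processes $Z_t$ and $W_t$ with $W_t\mid\cF_t$ mean zero and $\sigma^2$-sub-Gaussian, it bounds $\Pj\bigl[\{\sum_t W_tZ_t\ge L\}\cap\{\sum_t Z_t^2\le\beta\}\bigr]$. The paper does not prove this lemma at all; it is quoted from \cite[Lemma~4.2]{simchowitz2018learning}. The standard proof is a Chernoff/supermartingale argument: for any $\lambda>0$, the tower property and the sub-Gaussian moment bound give $\E\bigl[\exp\bigl(\lambda\sum_t W_tZ_t-\tfrac{\lambda^2\sigma^2}{2}\sum_t Z_t^2\bigr)\bigr]\le 1$; applying Markov's inequality on the intersection with $\{\sum_t Z_t^2\le\beta\}$ and optimizing $\lambda=L/(\sigma^2\beta)$ yields the stated bound.

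What you have written is instead a proof sketch of Lemma~\ref{l:stoch}, the lemma that \emph{uses} Lemma~\ref{l:azuma} (together with an $\ep$-net over the random vector $z_t=\Si_t^{-1}v_t$) to control $\max_t\ve{A_tv_t-v_{t+1}}_2$. As a sketch of Lemma~\ref{l:stoch} your argument is essentially the paper's: the same decomposition via Lemma~\ref{l:ls-calc}, the same deterministic bounds on $\ve{z_t}$ and $\sum_s(v_s^\top z_t)^2$ via the coefficients $(a_s,v)$ and Lemma~\ref{l:vSvd}, the same $\ep$-net trick to freeze $z_t$ so that Lemma~\ref{l:azuma} applies, and the same union bound over the net and over $t$. (Your additional net over unit directions $u$ to reduce the vector-valued martingale to scalars is a reasonable embellishment that the paper glosses over.) But as a proof of Lemma~\ref{l:azuma} it is circular: you invoke the very inequality you were asked to establish.
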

\begin{proof}[Proof of Lemma~\ref{l:stoch}]
Let 
\begin{align}
\Si_t &= \mu I_d + \sumz s{t-1} v_sv_s^\top .
\end{align}
By Lemma~\ref{l:ls-calc},
\begin{align}
A_tv_t - w_t 
&= \sumz s{t-1} (\ep_{s+1}+\xi_{s+1})v_s^\top \Si_t^{-1} v_t
- \mu \Si_t^{-1} v_t - \ep_{t+1} - \xi_{t+1}.
\llabel{e:Av-w}
\end{align}

\paragraph{Bounding $\sumz s{t-1} \xi_{s+1}v_s^\top \Si_t^{-1} v_t$.}

Let $b_s = v_s^\top \Si_t^{-1} v_t$. 

To use Azuma's inequality, we need to bound the following. Using Cauchy-Schwarz ($\an{v,w}^2\le \ve{v}^2\ve{w}^2$),
\begin{align}
\sumz s{t-1} b_s^2 &\le \sumz s{t-1} \ve{v_s^\top \Si_t^{-\rc2}}_2^2 \ve{\Si_t^{-\rc 2} v_t}_2^2\\
&=\pa{\sumz s{t-1} v_s^\top \Si_t^{-1} v_s}\ve{\Si_t^{-\rc 2} v_t}_2^2
\llabel{e:bs2}
\end{align}
Choosing $a_s$ and $v$ as in~\eqref{e:vt-lc}, 
\begin{align}
\ve{\Si_t^{-\rc 2} v_t}_2^2&=
\ve{\Si_t^{-\rc 2} \pa{\sumz s{t-1}a_sv_s + v}}_2^2=\ve{\sumz s{t-1} a_s \Si_t^{-\rc 2} v_s + \Si_t^{-\rc 2} v}_2^2\\
&\le \pa{\sumz s{t-1} a_s^2 + \fc{\ve{v}^2}{\mu}} \pa{\sumz s{t-1} \ve{\Si_t^{-\rc 2}v_s}_2^2 +\fc{\mu}{\ve{v}^2} \ve{\Si_t^{-\rc 2}v}_2^2} &\text{by Cauchy-Schwarz}\\
&\le \pa{L_a^2 + \fc{L_v^2}{\mu}} \pa{\sumz s{t-1} v_s^\top \Si_t^{-1} v_s +{\mu} \fc{v^\top }{\ve{v}}\Si_t^{-1} \fc{v}{\ve{v}}}.
\llabel{e:Sv}
\end{align}
Now $v_sv_s^\top + \mu \fc{vv^\top}{\ve{v}^2}\preceq \Si_t$ satisfies the hypothesis of Lemma~\ref{l:vSvd}, so
\begin{align}
\sumz s{t-1} v_s^\top \Si_t^{-1} v_s +{\mu} \fc{v^\top }{\ve{v}}\Si_t^{-1} \fc{v}{\ve{v}} &\le 
d 
\llabel{e:vsSvs}
\end{align}
and so~\eqref{e:Sv} gives
\begin{align}
\ve{\Si_t^{-\rc 2}v_t}^2 &\le \pa{L_a^2 + \fc{L_v^2}{\mu}}d.
\llabel{e:Sit-vt}
\end{align}
From~\eqref{e:bs2},~\eqref{e:vsSvs}, and~\eqref{e:Sit-vt}, we get
\begin{align}\llabel{e:bs2-2}
\sumz s{t-1} b_s^2 &\le \pa{L_a^2 + \fc{L_v^2}{\mu}} d^2.
\end{align}•

Let $z_t = \Si_t^{-1} v_t$. 
By~\eqref{e:Sv} and~\eqref{e:Sit-vt},
\begin{align}\llabel{e:zt}
\ve{z_t} &= \ve{\Si_t^{-\rc 2}}_2 \ve{\Si_t^{-\rc2} v_t}_2\le 
\mu^{-\rc 2}\pa{L_a^2 + \fc{L_v^2}{\mu}}^{\rc 2}d^{\rc 2}=R_z
\end{align}
so $z_t\in B_{R_z}^d$. 
By Lemma~\ref{l:ep-net}, there is an $\ep_{\textup{net}}$-net $\cal N$ of $B_{R_z}^d$ of size $\pa{1+\fc{2R_z}{\ep_{\textup{net}}}}^d$. Since $z_t\in B_{R_z}^d$, there exists $\ve{\De z}\le \ep_{\textup{net}}$ such that $z_t+\De z\in \cal N$.
Note that
\begin{align}
\sumz s{t-1} |v_s^\top (z_t+\De z)|^2 &\le 
2\pa{\sumz s{t-1} |v_s^\top z_t|^2 + |v_s^\top \De z|^2}
\le 2\pa{\pa{\sumz s{t-1} b_s^2 }d^2 + \sumz s{t-1} g(s)^2 \ep_{\textup{net}}^2}\\
&\le 2\pa{\pa{L_a^2 + \fc{L_v^2}{\mu}}d^2 + 1}=:S_b^2.
\llabel{e:choose-Sb}
\end{align}
For a fixed $z$, consider the event $E_z = \{ 
\sumz s{t-1} |y_s^\top z|^2 > S_b^2 \text{ or }
\sumz s{t-1} \xi_{s+1} y_s^\top z \le L \}$. Then by the generalization of Azuma's inequality~\ref{l:azuma},
\begin{align}
\Pj(E_z^c) &\le \Pj\pa{
\sumz s{t-1} |y_s^\top z|^2 \le S_b^2 \text{ and }
\sumz s{t-1} \xi_{s+1} y_s^\top z > L
}\\
&\le \exp\pa{-\fc{L^2}{2C_\xi^2 S_b^2}}.
\end{align}
Now we use the triangle inequality to bound the sum by the maximum value on the $\ep_{\textup{net}}$-net.
\begin{align}
\ve{\sumz s{t-1} \xi_{s+1} v_s^\top z} &\le 
\ve{\sumz s{t-1} \xi_{s+1} v_s^\top (z+\De z)}+ 
\ve{\sumz s{t-1} \xi_{s+1} v_s^\top (\De z)}\\
&\le \maxr{z'\in \cal N}{\sumz s{t-1} |v_s^\top z'|^2\le S_b^2} \ab{\sumz s{t-1} \xi_{s+1} v_s^\top z'}
 + C_\xi \sumz s{t-1} g(s)\ep_{\textup{net}}
 \llabel{e:choose-ep}\\
 &=  \maxr{z'\in \cal N}{\sumz s{t-1} |v_s^\top z'|^2\le S_b^2} \ab{\sumz s{t-1} \xi_{s+1} v_s^\top z'}
 + C_\xi
\end{align}
Under the event $\bigcap_{z'\in \cal N}E_{z'}$, we have that the maximum above is $\le L$. Thus
\begin{align}
\Pj\pa{
\ve{\sumz s{t-1} \xi_{s+1} v_s^\top z}
>C_\xi+L
}&\le \Pj\pa{\bigcup_{z'\in \cal N} E_{z'}^c}\\
&\le \pa{1+\fc{2R_z}{\ep_{\textup{net}}}}^d \exp\pa{-\fc{L^2}{2C_\xi^2 S_b^2}}\le \fc{\ep}T.
\llabel{e:stoch-bd1}
\end{align}
by the choice of $L$.

\paragraph{Bounding $\sumz s{t-1} \ep_{s+1}v_s^\top \Si_t^{-1} v_t$.}
A crude bound suffices here. We have by~\eqref{e:bs2-2} that
\begin{align}
\sumz s{t-1} \ep_{s+1}v_s^\top \Si_t^{-1} v_t
&\le C_\ep \sumz s{t-1} |v_s^\top (\Si_t)^{-1} v_t|\\
&\le C_\ep \sqrt{t} \sqrt{\sumz s{t-1} |v_s^\top (\Si_t)^{-1} v_t|^2}\\
&\le C_\ep \sqrt{T} \pa{L_a^2 + \fc{L_v^2}{\mu}}^{\rc 2}d.
\llabel{e:stoch-bd1b}
\end{align}

\paragraph{Bounding $\mu A\Si_t^{-1} v_t$.} We have by~\eqref{e:zt} that 
\begin{align}\llabel{e:stoch-bd2}
\ve{\mu A\Si_t^{-1}v_t} \le \mu AR_z = \mu^{\rc 2}R\pa{L_a^2 + \fc{L_v^2}{\mu}}^{\rc 2} d^{\rc 2}
\end{align}

From~\eqref{e:Av-w},~\eqref{e:stoch-bd1},~\eqref{e:stoch-bd1b}, and~\eqref{e:stoch-bd2}, and noting that $\ve{\ep_{t+1}+\xi_{t+1}}\le C_\xi+C_\ep$, we get that 
\begin{align}
\Pj\pa{\ve{A_tv_t-w_t}> L + (\mu^{\rc 2}R + C_\ep d^{\rc 2}\sqrt{T})\pa{L_a^2 + \fc{L_v^2}{\mu}}^{\rc 2} d^{\rc 2} + 2C_\xi + C_\ep}&\le \fc{\ep}{T}.
\end{align}
Union-bounding over $0\le t\le T-1$ finishes the proof.
\end{proof}


\begin{proof}[Proof of Theorem~\ref{t:main-stoch}]
We apply Lemma~\ref{l:stoch} to the system~\eqref{e:ABOO}. 
Let $k'= \kdm$ and $C=C_A(C_BC_u+C_\xi)$.  
By~\eqref{e:lvp} with $p=1$, we can write $x_t=\sumz st a_sx_s+v$ with $\sumz s{t-1} |a_s|\le L_a$ and $\ve{v}_2\le L_v$, where
\begin{align}
L_a &= \Ladm\\
L_v &=  (k'+1)^r \pa{\Ladm(C+1)+C_0+2} + C_u\pa{\Ladm+2}.
\end{align}
using~\eqref{l:sum-fA'K} in the last step.

Then Lemma~\ref{l:stoch} is satisfied with these values of $L_a$, $L_v$, and (by Lemma~\ref{l:Mbd}), $g(t) = (t+1)^{r-1} C_AC_0 + t^r C_A(C_BC_u+C_\xi)$. 
Defining $\ep_{\textup{net}}$, $R_z$, $S_b$, $L$ as in Lemma~\ref{l:stoch}, we get that
\begin{align}
\Pj\pa{
\max_{0\le t\le T-1} \ve{A_tv_t-v_{t+1}}_2 \le L + 
\mu^{\rc 2}\pa{L_a^2 + \fc{L_v^2}{\mu}}^{\rc 2} d +
2C_\xi
}&\ge 1-\ep.
\end{align}
Then with probability $1-\ep$, we have by Theorem~\ref{t:orr} that
\begin{align}
R_T(A,B) &\le 
\mu R^2 + 
\pa{L + 
\mu^{\rc 2}\pa{L_a^2 + \fc{L_v^2}{\mu}}^{\rc 2} d +
2C_\xi}^2 (d+m) \ln \pa{1+\fc{TM^2}{d+m}}
\end{align}
where $L_a,L_v,L,M$ have the desired parameter dependences; take $\mu=1$.
\end{proof}

\section{Proof: Partially observable system, stochastic setting}
\llabel{s:proof-hidden}

\subsection{Learning the steady-state Kalman filter}
\label{subsec:kalman}

In the prediction problem, at each time step $t$, we have observed $y_0,\ldots, y_t$ and $u_0, \ldots, u_t$, and are asked to predict $y_{t+1}$. Note that this does not immediately fit in the framework for online least squares, 
because $y_{t+1}$ is a linear function of the unobserved $x_t$ (the latent state) plus noise. However, as we will see, we can still place it in this framework if we use an linear autoregressive estimator.

The Kalman filter~\citep{kalman1960new,kamen1999introduction} gives the optimal 
linear estimator in the case that the parameters of the LDS are known, the initial state is drawn from a known Gaussian distribution $x_0\sim N(x_0^-, \Si_0)$, and the noises are independent mean-zero with known covariances. 
We can compute matrices $A_{\KF}^{(t)}$, $B_{\KF}^{(t)}$, and $C_{\KF}^{(t)}=C$ such that the optimal linear estimate of the latent state $\wh h_t$ and the observation $\wh y_t$ are given by a time-varying LDS (taking the $y_t$ as feedback) with those matrices: 
\begin{align}
\label{e:KF}
x^-_t &=A_{\KF}^{(t)}x^-_{t-1}  +  B_{\KF}^{(t)} \coltwo{u_{t-1}}{y_{t-1}}\\
\wh y_t &= C_{\KF}^{(t)}x^-_t.
\label{e:CKF}
\end{align}
This is known as the predictor form of the system~\cite{qin2006overview}.
We will denote $B_{KF}^{(t)} = (B_{KF,u}^{(t)}\;B_{KF,y}^{(t)})$, where $B_{KF,u}^{(t)}$ and $B_{KF,y}^{(t)}$ are the submatrices acting on $u_{t-1}$ and $y_{t-1}$, respectively.
In the case that the noises are iid Gaussian, i.e. $\xi_t\sim N(0,\Si_x)$ and $\eta_t\sim N(0,\Si_y)$, $x^-_t$ and $\wh y_t$ are the maximum a posteriori (MAP) estimators, and the actual hidden state $x_t$ and the observation $y_t$ are Gaussians when conditioned on $\cal F_{t-1}=\si(y_0,\ldots, y_{t-1})$ (the observations up to time $t-1$): $x_t|\cal F_{t-1} \sim N(x^-_{t-1}, \Si_{\KF,x}^{(t)})$ and $y_t|\cal F_{t-1} \sim N(\wh y_t,\Si_{\KF,y}^{(t)})$ for some covariance matrices $\Si_{\KF,x}^{(t)}$, $\Si_{\KF,y}^{(t)}$. Our goal is to predict as well as the Kalman filter without knowing the parameters of the original LDS. 

If the original system satisfies Assumption~\ref{asm:obs} (most notably, it is observable and the noise is i.i.d.), taking $t\to \iy$, 
the matrices $A_{\KF}^{(t)}$, and $B_{\KF}^{(t)}$ approach certain fixed matrices $A_{\KF}$ and $B_{\KF}$, 
and the covariance matrices $\Si_{\KF,x}^{(t)}$ and $\Si_{\KF,y}^{(t)}$ approach fixed matrices $\Si_{\KF,x}$ and $\Si_{\KF,y}$ \citep{harrison1997convergence,anderson2012optimal}. 
In the Gaussian case, at steady-state, the actual hidden state $x_t$ and observation $y_t$ will be distributed as $x_t|\cal F_{t-1} \sim N(x^-_t, \Si_x)$ and $y_t|\cal F_{t-1} \sim N(\wh y_t,\Si_y)$. To simplify the problem, we will assume that the LDS starts with the steady-state covariance\footnote{Note that steady-state refers to the covariance of the $x_t|\cal F_{t-1}$ and $y_t|\cal F_{t-1}$ being constant, rather than the distribution of $x_t|\cal F_{t-1}$ and $y_t|\cal F_{t-1}$ being constant. If the system is not strictly stable, it does not have a steady state in the open-loop setting, because $x_t$ will diverge.}, so that the steady-state Kalman filter is the optimal filter for all time.




As before, our task is to predict $\wh y_{t+1}$ at time step $t$. The regret is now defined by
\begin{align}\label{e:RTABC}
    R_T(A,B,C) &= \E \ba{\sumz t{T-1} \ve{\wh y_{t+1} - y_{t+1}}_2^2 - \sumz t{T-1}\ve{\wh y_{t+1,\KF} - y_{t+1}}_2^2}
\end{align}
where $\wh y_{t+1,\KF}$ is the prediction given by~\eqref{e:CKF}. The challenge to competing with the Kalman filter prediction is that the Kalman filter has memory: its prediction depends on a state estimate $x_t^-$ kept in memory. We can remove this dependence by ``unrolling" the Kalman filter and then truncating. Then we find that $\wh y_{t+1,\KF}$ is approximately a linear function of $u_{t-\ell+1},\ldots, u_t$ and $y_{t-\ell+1},\ldots, y_t$ for large enough $\ell$: letting $C_{\KF}=C$,
\begin{align*}
    \wh y_{t+1,\KF} &= Fu_{t:t-\ell+1} + Gy_{t:t-\ell+1}\\
    \text{where } F&= (C_{\KF}B_{\KF,u}, C_{\KF}A_{\KF}B_{\KF,u},\ldots, C_{\KF}A_{\KF}^{\ell-1}B_{\KF,u})\\
    \text{and }G&=(C_{\KF}B_{\KF,y}, C_{\KF}A_{\KF}B_{\KF,y},\ldots, C_{\KF}A_{\KF}^{\ell-1}B_{\KF,y}).
\end{align*}
In other words, we can approximate the Kalman filter with an autoregressive filter of length $\ell$.

The framework of online least-squares (Algorithm~\ref{a:orr}) now applies with 
$x_t\mapsfrom \scoltwo{u_{t:t-\ell+1}}{y_{t:t-\ell+1}}$, $y_t\mapsfrom y_{t+1}$, $A\mapsfrom (F,\;G)$, $n\mapsfrom n$, and $m\mapsfrom \ell(d+m)$, giving Algorithm~\ref{a:ols-ar}. We let $u_s=0$ and $y_s=0$ for $s<0$.

\subsection{Norms and sufficient length}

First we define the \emph{sufficient length} of a system. Roughly speaking, the sufficient length $R(\ep)$ is the length at which we can truncate a finite impulse response (FIR) filter, so that when inputs are bounded by 1, we incur at most $\ep$ prediction error at any time step. This notion was introduced by~\cite{tu2017non} in the one-dimensional setting. 

We first recall some concepts from control theory. In particular, the definition of sufficient length depends on the $\cal H_\iy$ norm.

\begin{df}
Let $F$ be a stable, linear time-invariant (LTI) system, represented as the transfer function $F(z) = \sumz j{\iy} F_j z^{-j}\in \R^{n\times m}[[z^{-1}]]$. (This is a matrix-valued Laurent series whose coefficients $(F_0,F_1,\ldots)$ form the impulse response function, that is, the response to input $v\in \R^{m}$ is $(F_0v, F_1v,\ldots)$.)
Define the $\cal H_\iy$ norm of $F$ to be 
$\ve{F}_{\cal H_\iy}:=\max_{|z|=1}\ve{F(z)}_2$. 
\end{df}


\begin{df}[{Sufficient length condition, \cite[Definition 1]{tu2017non}}]
\label{d:suff-len}
We say that a Laurent series (or LTI system) $F$ has stability radius $\rh\in (0,1)$ if $F$ converges for $\set{x\in \C}{|x|>\rh}$. 
Let $F$ be stable with stability radius $\rh\in (0,1)$. Fix $\ep>0$. Define the sufficient length
\begin{align}
R(\ep) &=\ce{
\inf_{\rh<\ga<1}\rc{1-\ga}\ln \pf{\ve{F(\ga z)}_\iy}{\ep(1-\ga)}
}.
\end{align}
\end{df}
Note that having a dependence on sufficient length is analogous to having a $\frac{1}{1-\rh(A)}$ dependence on the spectral radius of $A$, for learning a LDS. Roughly, if we ignore factors depending on condition numbers, for a LDS with dynamics given by $A$, the sufficient length $R(\ep)$ is on the order of $O\pa{\frac{1}{1-\rh(A)}\cdot \ln \prc{\ep}}$.

The following lemma says that if we are content with an error of $\ep$, we can safely truncate the impulse response function at length $R(\ep)$. Note that~\cite{tu2017non} give the proof for the one-dimensional case, but the same proof works in the multi-dimensional setting.
\begin{lem}[{\citealt[Lemma 4.1]{tu2017non}}]\llabel{l:suff-l}
Suppose $F$ is stable with stability radius $\rh\in (0,1)$. Then 
$\ve{F_{\ge L}}_1 := \sum_{k\ge L} \ve{F(k)} \le \max_{\rh<\ga<1} \fc{\ve{F(\ga z)}_\iy\ga^L}{1-\ga}.$
Hence, if $L\ge R(\ep)$, then $
\ve{F_{\ge L}}_1 \le \ep$. 
\end{lem}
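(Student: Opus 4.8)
The plan is to recover each impulse-response coefficient $F_k$ from the transfer function by a contour integral on the circle $|z|=\gamma$, bound $\|F_k\|$ by $\gamma^k\|F(\gamma z)\|_{\mathcal H_\infty}$, and then sum a geometric series; the second claim then follows by inserting the definition of $R(\epsilon)$ from Definition~\ref{d:suff-len} together with the elementary inequality $-\ln\gamma\ge 1-\gamma$. This is exactly the one-dimensional argument of \cite{tu2017non} with the scalar absolute value replaced by the spectral norm, so the only thing to check is that the relevant inequalities survive that replacement.

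Concretely, fix $\gamma\in(\rho,1)$. Since $\gamma>\rho$ and $F$ has stability radius $\rho$, the Laurent series $F(z)=\sum_{j\ge0}F_jz^{-j}$ converges absolutely and uniformly on $|z|=\gamma$ (choose $r$ with $\rho<r<\gamma$; then $\|F_j\|r^{-j}\to0$, so $\|F_j\|\gamma^{-j}$ decays geometrically). Writing $z=\gamma e^{i\theta}$, the map $\theta\mapsto F(\gamma e^{i\theta})$ is a matrix-valued Fourier series with coefficients $F_j\gamma^{-j}$, so
\[
F_k\gamma^{-k}=\frac{1}{2\pi}\int_0^{2\pi}F(\gamma e^{i\theta})\,e^{ik\theta}\,d\theta ,
\]
and since the spectral norm of an average is at most the average of the spectral norms, $\|F_k\|\le\gamma^k\cdot\frac1{2\pi}\int_0^{2\pi}\|F(\gamma e^{i\theta})\|_2\,d\theta\le\gamma^k\max_{|z|=1}\|F(\gamma z)\|_2=\gamma^k\ve{F(\gamma z)}_{\mathcal H_\infty}$. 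Summing over $k\ge L$ gives $\ve{F_{\ge L}}_1\le\ve{F(\gamma z)}_{\mathcal H_\infty}\sum_{k\ge L}\gamma^k=\ve{F(\gamma z)}_{\mathcal H_\infty}\gamma^L/(1-\gamma)$, and since $\gamma\in(\rho,1)$ was arbitrary this yields the first inequality (a fortiori with the $\max$, indeed even with the $\inf$, over $\gamma$).

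For the ``hence'' part, suppose $L\ge R(\epsilon)$ and pick $\gamma\in(\rho,1)$ with $\tfrac1{1-\gamma}\ln\!\big(\ve{F(\gamma z)}_{\mathcal H_\infty}/(\epsilon(1-\gamma))\big)\le R(\epsilon)\le L$; such a $\gamma$ exists by the definition of $R(\epsilon)$ as the ceiling of the infimum over $\gamma$ (in the borderline case where the infimum is not attained a one-line limiting argument gives the same conclusion). Using $1-\gamma\le-\ln\gamma$ (apply $\ln x\le x-1$ at $x=\gamma$) and $L\ge0$, we get $\ln\!\big(\ve{F(\gamma z)}_{\mathcal H_\infty}/(\epsilon(1-\gamma))\big)\le L(1-\gamma)\le L(-\ln\gamma)=\ln(\gamma^{-L})$, hence $\ve{F(\gamma z)}_{\mathcal H_\infty}\gamma^L/(1-\gamma)\le\epsilon$; combining with the first part gives $\ve{F_{\ge L}}_1\le\epsilon$. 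There is no substantive obstacle here: the two mild points — justifying term-by-term integration (uniform convergence on $|z|=\gamma$, noted above) and the passage through the ceiling/infimum in the definition of $R(\epsilon)$ — are routine, and the multidimensional case differs from the scalar case only in invoking that $\|\cdot\|_2$ of a matrix integral is dominated by the integral of $\|\cdot\|_2$.
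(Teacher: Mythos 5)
Your proof is correct and is essentially the argument the paper relies on: the paper offers no proof of its own here, simply citing Lemma 4.1 of \cite{tu2017non} and remarking that the scalar proof carries over to the matrix case, and your Cauchy-integral/Fourier-coefficient recovery of $F_k$ on the circle $|z|=\gamma$ followed by the geometric-series sum is exactly that standard argument, with the one genuinely new point (the spectral norm of a matrix-valued average is at most the average of the norms) correctly identified and handled. The only cosmetic remark is that you in fact prove the sharper bound with $\inf_{\rho<\gamma<1}$ in place of the $\max$ appearing in the statement, which of course implies it.
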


If the LDS is not stable, then we cannot truncate the dependence on past inputs. The key observation 
is that even if the original LDS is not stable, the LDS defined by the Kalman filter is stable \citep{anderson2012optimal}. Hence, there is some sufficient length at which we can truncate the unrolled Kalman filter. 


\begin{thm}[{\cite[\S4.4]{anderson2012optimal}}] 
When the LDS~\eqref{e:lds1}--\eqref{e:lds2} 
satisfies Assumption~\ref{asm:obs}, the associated Kalman filter (as defined in Section~\ref{s:prelims-hidden}) is strictly stable: $\rh(A_{KF})<1$.
\end{thm}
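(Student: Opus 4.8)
The plan is to make the steady-state predictor matrix explicit and then run a Lyapunov--eigenvector argument. Write $P = \lim_{t\to\iy}\Si_{\KF,x}^{(t)}\succeq 0$ for the limiting one-step prediction error covariance; that this limit exists, together with the convergence $A_{\KF}^{(t)}\to A_{\KF}$ and $B_{\KF}^{(t)}\to B_{\KF}$, is the classical fact I take for granted (it follows from observability of $(A,C)$ and the convergence results of \cite{harrison1997convergence,anderson2012optimal} already cited). Passing to the limit in the Kalman covariance recursion, $P$ solves the discrete algebraic Riccati equation $P = APA^\top + \Si_x - APC^\top(CPC^\top+\Si_y)^{-1}CPA^\top$; the steady-state predictor gain is $K = APC^\top(CPC^\top+\Si_y)^{-1}$; and the predictor dynamics matrix of \eqref{e:KF} is $A_{\KF} = A - KC$. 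Substituting $K$ and completing the square turns the Riccati equation into its closed-loop form:
\begin{equation}\llabel{e:kf-lyap}
P = A_{\KF}\,P\,A_{\KF}^\top + \Si_x + K\,\Si_y\,K^\top.
\end{equation}

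Next I would test an arbitrary eigenpair of $A_{\KF}$. Suppose $A_{\KF}^\top v = \la v$ with $v\neq 0$ (complex allowed). Left-multiplying \eqref{e:kf-lyap} by $v^*$ and right-multiplying by $v$ gives $(1-|\la|^2)\,v^*Pv = v^*\Si_x v + v^*K\Si_y K^\top v$. If $|\la|\ge 1$ the left side is $\le 0$ while the right side is a sum of two nonnegative terms, so both vanish. From $v^*K\Si_y K^\top v = 0$ and $\Si_y\succ 0$ (Assumption~\ref{asm:obs}) we obtain $K^\top v = 0$, hence $A_{\KF}^\top v = (A-KC)^\top v = A^\top v = \la v$, so $v$ is a left eigenvector of $A$. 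From $v^*\Si_x v = \ve{\Si_x^{1/2}v}^2 = 0$ we obtain $v^*\Si_x^{1/2} = 0$. Thus $v$ is a nonzero left eigenvector of $A$ annihilating $\Si_x^{1/2}$, contradicting controllability of $(A,\Si_x^{1/2})$ by the Popov--Belevitch--Hautus criterion. Hence every eigenvalue of $A_{\KF}$ has modulus $<1$, i.e. $\rh(A_{\KF})<1$.

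The eigenvalue computation is routine; the care is all in the setup. The one point to watch is that $P$ is only positive semidefinite in general, so the argument must not rely on $v^*Pv>0$ — and it does not, since for $|\la|\ge 1$ the term $v^*Pv$ drops out and the contradiction is driven purely by $\Si_y\succ 0$ and controllability of $(A,\Si_x^{1/2})$. Note that each hypothesis of Assumption~\ref{asm:obs} is used exactly once: observability of $(A,C)$ (via the cited convergence theorems) to know that $P$ exists and solves the Riccati equation, $\Si_y\succ 0$ to force $K^\top v = 0$, and controllability of $(A,\Si_x^{1/2})$ to close the PBH contradiction. A more probabilistic route — realizing $A_{\KF}$ as the dynamics of a minimum-phase innovations model — reaches the same conclusion but needs more spectral-factorization machinery, so the Riccati/Lyapunov argument is the most self-contained.
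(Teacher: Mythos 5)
Your argument is correct. Note that the paper does not prove this statement at all --- it is imported wholesale from Anderson--Moore \S4.4 --- so there is no internal proof to compare against; your derivation is essentially the standard one found in that reference. The details check out: the completion of the square $A_{\KF}PA_{\KF}^\top = APA^\top - APC^\top(CPC^\top+\Si_y)^{-1}CPA^\top - K\Si_y K^\top$ combined with the DARE does yield the closed-loop Lyapunov identity $P = A_{\KF}PA_{\KF}^\top + \Si_x + K\Si_y K^\top$, and the eigenpair test correctly avoids needing $v^*Pv>0$: for $|\la|\ge 1$ both sides of $(1-|\la|^2)v^*Pv = v^*\Si_x v + \ve{\Si_y^{1/2}K^\top v}^2$ must vanish, $\Si_y\succ 0$ forces $K^\top v=0$ so that $v$ becomes a left eigenvector of $A$ annihilating $\Si_x^{1/2}$, and PBH controllability of $(A,\Si_x^{1/2})$ gives the contradiction. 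The only external input is the existence of the stabilizing steady-state solution $P$ of the DARE (and the convergence $A_{\KF}^{(t)}\to A_{\KF}$), which you correctly flag as the classical fact the paper also takes from \cite{harrison1997convergence,anderson2012optimal}; observability of $(A,C)$ enters only there, exactly as you say.
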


\subsection{Proof of Theorem~\ref{t:kalman}}

\begin{proof}[Proof of Theorem~\ref{t:kalman}]
We first condition on all the noise terms $\ve{\xi_t}\le \ve{\Si_x}O\pa{\sqrt{d\ln \pf{T}{\de}}}=:C_\xi$, $\ve{\eta_t}\le \ve{\Si_y}O\pa{\sqrt{n\ln \pf{T}{\de}}}=:C_\eta$. Choosing the constants large enough, we can ensure this happens with probability $\ge 1-\fc{\de}3$. 

Let 
$v_t = (h_t;\ldots; h_{t-\ell+1};u_t;\ldots;u_{t-\ell+1})$
For convenience set $h_t=0$, $u_t=0$ for $t<0$. Then 
$(v_t')_{t=0}^{T}$ satisfies the following:
\begin{align}
v_t' &= A'v_{t-1}' + \xi_t'\\
A'&=\pa{\begin{array}{c|c}
\begin{array}{ccc|c}
A &  & & O\\
\hline I &  & & O \\
 & \ddots & &\vdots \\
 &  & I & O
\end{array} & \begin{array}{cccc}
B & O & \cdots & O \\
O & \ddots & & \vdots \\
\vdots &  & & \\
O & \cdots & & O\\
\end{array}\\
\hline  & \begin{array}{ccc|c}
O & \cdots & O & O \\
\hline I &  && O\\
 & \ddots && \vdots \\
 &  & I& O
\end{array}
\end{array}}\\
v_0' &=\pa{
\begin{array}{c}
h_0\\
\mathbf 0\\
\hline
u_0\\
\mathbf 0
\end{array}
}\in K_0':=B_{C_0}^d\opl \{0\}^{(\ell-1)d} \opl B_{C_u}^m\opl \{0\}^{(\ell-1)m}
\\
\xi_t'&=\pa{
\begin{array}{c}
\xi_t\\
\mathbf 0\\
\hline
u_t\\
\mathbf 0
\end{array}
}\in K':=B_{C_\xi}^d\opl \{0\}^{(\ell-1)d} \opl B_{C_u}^m\opl \{0\}^{(\ell-1)m}.
\end{align}
If $h_0=h$, $u_0=u$, and $u_t=0$ for $t\ge 1$, and there is no noise, then $h_t=A^th+A^{t-1}Bu$ for $t\ge 1$. Hence
\begin{align}
\llabel{e:A'k'}
\ve{(A')^{k'} \ctzz{h}{u}}_2 &\le \sqrt{\ell}
\ba{\max_{0\le k\le k'} f_A(k)\ve{h} +\pa{ \max_{0\le k\le k'-1} f_A(k) C_B+1}\ve{u}}.
\end{align}
Note also that
\begin{align}
\llabel{e:A'k'2}
    \max_{v\in B_r^{\ell(d+m)}}\ve{ (A')^{k'} v}
    &= \max_{(h;u)\in B_r^{d+m}} \ve{(A')^{k'} \ctzz hu}
\end{align}
because we can check that
\begin{align*}
    \ve{(A')^{k'} \pa{
\begin{array}{c}
h_{\ell-1}\\
\vdots\\
h_0\\
\hline
u_{\ell-1}\\
\vdots\\
u_0
\end{array}
}}
&\le 
\ve{(A')^{k'}
\ctzz{\fc{\ve{h_{\ell-1:0}}}{\ve{h_{\ell-1}}}h_{\ell-1}}
{\fc{\ve{u_{\ell-1:0}}}{\ve{u_{\ell-1}}}u_{\ell-1}}}
\end{align*}
We will apply Lemma~\ref{l:xt-l1-comb}.
Let $k'=\kldm$ and $C'=C_AC_0+C_AC_BC_u+C_u$.
We bound~\eqref{e:Lvg}. By~\eqref{e:A'k'},~\eqref{e:A'k'2}, and Lemma~\ref{l:J-power},
\begin{align*}
f_{A',K_0'\cup B_{\Laldm}^{\ell(d+m)}}(k)
&= \sqrt{\ell} \Big[\max_{0\le k\le k'}
f_A(k) \pa{C_0+\Laldm}\\
&\quad 
+ \pa{\max_{0\le k\le k'-1}f_{A}(k)+1}\pa{C_u + \Laldm}
\Big]\\
&\le 
\sqrt{\ell}\Big[
C_A(k'+1)^{r-1}\pa{C_0+\Laldm} \\
&\quad + (C_AC_B(k')^{r-1} + 1) \pa{C_u + \Laldm}
\Big]\\
&\le \sqrt{\ell}\ba{C'(k'+1)^{r-1} + (C_A(C_B+1)+1)\Laldm}\\
\sumz k{k'-1} f_{A',K'}(k)
&\le \sqrt{\ell}\ba{
\sumz k{k'-1}\max_{0\le j\le k}  f_A(k) C_\xi + 
\pa{\sumz k{k'-2}\max_{0\le j\le k}  f_A(k)C_B+1}C_u
}\\
&\le \sqrt{\ell}[(k')^r C_\xi + ((k'-1)^rC_B+1)C_u].
\end{align*}
By Lemma~\ref{l:xt-l1-comb}, for $0\le t\le T$, 
there exist $a_s\in \R$ and $v\in \R^{\ell(d+m)}$ such that 
$v_t=\sumz s{t-1}a_s v_s + v$ with 
\begin{align*}
\sumz t{T-1}|a_t|&\le \Laldm=:L_a'\\
\ve{v} &\le 
\max_{0\le k\le k'} f_{A',K_0'\cup B_{\fc{2}{\ln 2}\ell (d+m)}^{\ell(d+m)}}(k) +  \pa{\sumz k{k'-1}f_{A',K'}(k)} \pa{\Laldm+1}\\
&=\sqrt{\ell}\Big[C'(k'+1)^{r-1} + (C_A(C_B+1)+1+(k')^r C_\xi \\
&\quad + ((k'-1)^rC_B+1)C_u)\pa{\Laldm+1}\Big]
=:L_v'
\end{align*}
Now let 
$v_t=(y_t;\cdots; y_{t-\ell+1};u_t;\cdots ;u_{t-\ell+1})$.
Then
\begin{align*}
v_t &=C'v_t' + \coltwo{\eta_{t:t-\ell+1}}{\mathbf0}\\
\text{where }
C'&=\pa{\begin{array}{c|c}
I_\ell\ot C & O\\
\hline
O &I_{\ell m}
\end{array}}.
\end{align*}
Now if $v_t'=\sumz s{t-1}a_sv_s'+v$ with $\sumz s{t-1} a_s\le L_a$ and $\ve{v}\le L_v$, then 
\begin{align*}
C'v_t' &= \sumz s{t-1} a_sC'v_s' + C'v\\
\implies
v_t - \coltwo{\eta_{t:t-\ell+1}}{\mathbf 0}
&= \sumz s{t-1} a_s \pa{
v_s - \coltwo{\eta_{s:s-\ell+1}}{\mathbf 0}
}+C'v\\
\implies
v_t &= \sumz s{t-1} a_sv_s + \coltwo{\eta_{s:s-\ell+1}}{\mathbf 0} - \sumz s{t-1}a_s\coltwo{\eta_{s:s-\ell+1}}{\mathbf0}+C'v
\end{align*}
so it can be written as a linear combination of previous $v_s$'s with 
\begin{align*}
L_a&=L_a'\\
L_v&=C_C L_v' + (L_a+1) \sqrt \ell C_\eta 
\end{align*}
If $x_0\sim N(0,\Si_{\KF,x})$, then $x_0^-=0$, the steady-state Kalman filter applies, and unfolding the Kalman filter recurrence gives
\begin{align*}
y_{t+1}&= \sumz st F_s u_{t-s} + \sumz st G_s y_{t-s} +\cancel{C_{\KF}A_{\KF}^{t+1} x_0^-} +\ze_{t+1}
\end{align*}
where $\ze_{t+1}|\cal F_t\sim N(0,\Si_{\KF,y})$ is $\ve{\Si_{\KF,y}}^2$-subgaussian.
Then $(y_t)$ satisfies Lemma~\ref{l:stoch} with $L_a$, $L_v$, and with $w_t=y_{t+1}$ where
\begin{align*}
y_{t+1} &= (F,\; G) \coltwo{u_{t:t-\ell+1}}{y_{t:t-\ell+1}}+\ep_{t+1}+\ze_{t+1}\\
\ep_{t+1} &= \sum_{s=\ell}^{t}(F_s u_{t-s} + G_sy_{t-s}).
\end{align*}
By Lemma~\ref{l:Mbd}, 
\begin{align*}
    \max_{0\le s\le t-\ell} \max\{\ve{u_{t}},\ve{y_t}\}
    &\le 
    T^r C_C C_A (C_BC_u + C_\xi)=:K.
\end{align*}
By choice of $\ell=R(\ep')$, where $\ep'=\fc{\ep}{K}$, 
we get 
\begin{align*}
\ve{\ep_{t+1}}&=
\ve{\sum_{s=\ell}^{t} (F_su_{t-s} + G_sy_{t-s})}\le \ep' \max_{0\le s\le t-\ell} \max\{\ve{u_{t}},\ve{y_t}\}\le \ep.
\end{align*}
We also know $\xi_{t+1}|\cF_t\sim N(0,\Si_{\KF,y})$. Apply Lemma~\ref{l:stoch} to get a polynomial bound on\\ $\max_{0\le t\le T-1} \ve{F_tu_{t:t-\ell+1}+G_ty_{t:t-\ell+1} - y_{t+1}}$ and Theorem~\ref{t:orr} to finish.

\end{proof}

\section{OLS regret bound}
\label{a:ols-regret}
We show Theorem~\ref{t:orr}.

We note that~\cite{cesa2006prediction} show the case where $A\in \R^{1\times m}$; the case for $A\in \R^{n\times m}$ essentially follows the same proof.
Note that the objective function for $A$ decomposes as a sum of objective functions for each row $A_i$:
\begin{align*}
\mu \ve{A}_F^2 + \sumo s{t-1} \ve{Ax_s-y_s}^2
&= 
\sumo in \pa{\mu \ve{A_i}^2 + \sumo s{t-1} \ve{A_i x_s-(y_s)_i}^2}.
\end{align*}
Hence, running Algorithm~\ref{a:orr} is equivalent to running the algorithm on each coordinate of $y_t\in \R^n$ separately.

Note, however, that if we use \cite[Thm. 11.7]{cesa2006prediction} as a black box, and apply it to every row of $A_t$, we get the bound
\begin{align*}
    R_T(A) 
    &\le \mu\ve{A}_F^2 + 
    \sumo in \max_{1\le t\le T} \pa{(y_t)_i - (A_{t})_ix_t}_2^2 m \ln \pa{1+\fc{TM^2}{m}}\\
    &\le 
    \mu\ve{A}_F^2 + 
     \max_{1\le t\le T} \ve{y_t - A_tx_t}_2^2 mn \ln \pa{1+\fc{TM^2}{m}}
\end{align*}
where the second inequality follows from using the naive bound $\pa{(y_t)_i - (A_{t})_ix_t}_2^2 \le \ve{y_t - A_tx_t}_2^2$ and has an extra factor of $m$.

We refer to~\cite{cesa2006prediction} for the notation we will use. 

\begin{proof}[Proof of Theorem~\ref{t:orr}]
Define the objective function $\Phi_t^*:\R^{n\times m}\to \R$ by
\begin{align*}
    \Phi_t^*(A) &= \mu\ve{A}_F^2 + \sumo st \ve{Ax_s-y_s}^2
\end{align*}
and the component objective functions $\Phi_{t,i}^*:\R^{1\times m}\to \R$ (for $1\le i\le n$) by
\begin{align*}
    \Phi_{t,i}^*(a) &= \mu \ve{a}^2 + \sumo st \ve{ax_s-(y_s)_i}^2
\end{align*}
so that 
\begin{align*}
    \Phi_t^*(A) &= \sumo in \Phi_{t,i}^*(A_i).
\end{align*}
Hence the Bregman divergence also decomposes.
Let $\Si_t = \mu I + \sumo st x_sx_s^\top$.
Using the calculation in \cite[pg. 319]{cesa2006prediction}
\begin{align*}
    D_{\Phi_t^*}(A_{t-1},A_t)
    &=\sumo in D_{\Phi_{t,i}^*}((A_{t-1})_i, (A_t)_i)\\
    &=\sumo in ((A_{t-1})_i x_t - (y_t)_i)^2 x_t^\top \Si_t^{-1}x_t\\
    &=\ve{A_{t-1}x_t-y_t}^2 x_t^\top \Si_t^{-1}x_t.
\end{align*}
The rest of the proof then follows the proof of Theorem 11.7, together with the remark following the proof.
The only difference is that $\ve{w_{t-1}^\top x_t-y_t}^2$ has been replaced by $\ve{A_{t-1}x_t-y_t}^2$.
Note also that they take $\mu=1$, but the calculations go through with arbitrary $\mu>0$.

\end{proof}

\section{Alternate approach to proving anomaly-freeness}
\label{s:php}
\label{a:php}

In this section we give an alternate proof of anomaly-freeness, which we discovered after the first draft of the paper. This approach does not require the number-theoretic lemma, Lemma~\ref{l:prime-prod}. However, the polynomial bounds are worse than the proof via the volume doubling argument.

They key fact is that the characteristic polynomial has a multiple of not-too-large degree with small coefficients. Then we can apply the same approach as Lemma~\ref{l:ch}. The following lemma is an adaptation of \cite[Lemma 5.4]{chen2016fourier}.

\begin{lem} 
\label{l:poly-small-multiple}
For any $k\ge 1$ and any $z_1,\ldots, z_k\in \C$ of absolute value at most 1, there exists a degree $n=O(k^2\ln k)$ polynomial $P(z) = \sumz jn c_j z^j$ with the following properties:
\begin{align*}
    \prodo ik (z-z_i) &\mid 
    P(z) \\
    c_n &= 1\\
    |c_j|&\le 11,\quad \forall j\in \{1,\ldots, n\}.
\end{align*}
\end{lem}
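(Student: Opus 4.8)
The plan is to follow the strategy behind \cite{chen2016fourier}, splitting the roots according to how close they are to the unit circle, handling the two regimes by different mechanisms, and then gluing the resulting polynomials together so that their monomials do not interfere. First reduce the claim to $P(z_i)=0$ for all $i$ with the correct multiplicities (equivalent to $\prod_i(z-z_i)\mid P$); repeated roots can be absorbed by a routine limiting argument, so assume the $z_i$ distinct. Split $\{z_i\}$ into the \emph{deep} roots with $|z_i|\le 1-1/k$ and the \emph{shallow} roots with $|z_i|>1-1/k$.

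For the deep roots, set $m=\lceil k\ln(2k)\rceil$ and take $P_{\mathrm{deep}}(z)=\prod_{i\text{ deep}}(z^m-z_i^m)$, which is monic, divisible by each $z-z_i$, and of degree at most $km=O(k^2\ln k)$. Its nonzero coefficients are, up to sign, the elementary symmetric functions $e_j(\dots,z_i^m,\dots)$ of the $z_i^m$; since $|z_i^m|\le(1-1/k)^m\le e^{-m/k}\le 1/(2k)$, one gets $|e_j|\le\binom kj(2k)^{-j}\le 1/(2^jj!)$, so every coefficient of $P_{\mathrm{deep}}$ is bounded by $\sum_{j\ge 0}2^{-j}/j!=e^{1/2}<2$. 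The second ingredient is a polynomial $P_{\mathrm{shallow}}$ for the shallow roots that is monic, divisible by $\prod_{i\text{ shallow}}(z-z_i)$, of degree \emph{strictly less than} $m$, and with coefficients bounded by a fixed constant (say $\le 6$). Here proximity to the unit circle must be exploited directly: powering produces no decay, so instead round each shallow $z_i$ to a nearby root of unity $\omega^{a_i}$ of a common order $N=O(k\ln k)$, use the cyclotomic fact that $z^N-1$ (coefficients in $\{-1,0,1\}$) is divisible by $\prod_i(z-\omega^{a_i})$, and then correct for the rounding errors $z_i-\omega^{a_i}$, whose size is $O(1/N)$, via a telescoping/geometric-series perturbation argument that keeps the accumulated change to the coefficients below an absolute constant.

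Finally combine the two pieces by $P(z)=P_{\mathrm{deep}}(z)\,P_{\mathrm{shallow}}(z)$. Because $P_{\mathrm{deep}}$ is a polynomial in $z^m$ (only monomials at multiples of $m$) while $\deg P_{\mathrm{shallow}}<m$, each monomial $z^N$ of the product comes from the unique pair $(\lfloor N/m\rfloor,\,N\bmod m)$, so the monomial supports are disjoint and $\|P\|_\infty=\|P_{\mathrm{deep}}\|_\infty\cdot\|P_{\mathrm{shallow}}\|_\infty\le e^{1/2}\cdot 6<11$; the top coefficient is $1\cdot 1=1$, the degree is below $(k+1)m=O(k^2\ln k)$, and $P$ is divisible by $\prod_i(z-z_i)$ since each factor captures one group of roots. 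The main obstacle is the shallow-root construction: producing a low-degree ($<m$), genuinely bounded-coefficient multiple of a product of up to $k$ factors whose roots sit in the thin annulus just inside the unit circle, and doing the degree/coefficient bookkeeping so that the final constant is actually $\le 11$, is where all the real work lies; the deep case and the gluing step are essentially mechanical once that is in hand.
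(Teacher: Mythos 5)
You take a genuinely different route from the paper, which does not construct $P$ explicitly at all: it invokes the pigeonhole Lemma~\ref{l:php} (Claim A.4 of \cite{chen2016fourier}) to produce, for $m=\Theta(k^2\ln k)$, a polynomial $P^*$ with coefficients bounded by $10$ whose residue mod $Q(z)=\prod_{i=1}^k(z-z_i)$ has all coefficients at most $2^{-m/k}\le \tfrac1{12}$, and then sets $P$ equal to $P^*-(P^*\bmod Q)$ divided by its leading coefficient, giving the bound $(10+\tfrac1{12})/(1-\tfrac1{12})\le 11$. Your deep-root factor and the disjoint-support gluing are sound as far as they go, but the proposal has a genuine gap, and I do not believe it can be closed in the form you set up.

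The gap is the shallow piece, which you acknowledge carries ``all the real work'' and which you only sketch. First, the sketch is quantitatively wrong: the distance from a shallow root to the nearest $N$-th root of unity is $O(1/k)$, not $O(1/N)$ (the radial defect $1-|z_i|$ can be as large as $1/k\gg 1/N$), and even a purely angular error of $O(1/N)$ turns into a $\Theta(1)$ discrepancy after raising to the $N$-th power, so $|z_i^N-1|$ is of order one and there is no small perturbation to telescope; one also needs the rounded roots of unity $\omega^{a_i}$ to be distinct, which fails when shallow roots collide or nearly collide. Second, and fatally, the degree budget you impose on $P_{\mathrm{shallow}}$ is unattainable. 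If all $k$ roots sit at (or cluster arbitrarily near) $z=1$ --- precisely the marginally stable regime this lemma exists to handle --- then any monic polynomial with coefficients bounded by an absolute constant and divisible by $(z-1)^k$ has degree $\Omega(k^2)$: by the Borwein--Erd\'elyi--K\'os theorem on Littlewood-type polynomials, a polynomial $\sum_{j=0}^n a_jz^j$ with $|a_n|=1$ and $|a_j|\le C$ has a zero of multiplicity only $O(\sqrt{n})$ at $z=1$ (apply it to the reversed polynomial $z^nP(1/z)$). So no $P_{\mathrm{shallow}}$ of degree less than $m=O(k\ln k)$ exists; enlarging $m$ to make room pushes $\deg P_{\mathrm{deep}}\le km$ past the $O(k^2\ln k)$ budget; and abandoning disjoint supports makes the coefficients of the product grow by a factor of the shorter factor's length, destroying the constant $11$. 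The decomposition itself is the obstruction, not just the missing details; I would switch to the pigeonhole argument.
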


\begin{lem}[Claim A.4, \cite{chen2016fourier}]
\label{l:php}
Let $z_1,\ldots, z_k\in \C$ have absolute value at most 1, and let $Q(z) = \prodo ik (z-z_i)$. 
For any $m=\Om(k^2\ln k)$
and $P^*(z)= \sumz im \al_i z^i$ with coefficients $|\al_i|\le 10$ for any $i\in \{0,1,\ldots, m\}$, such that every coefficient of $P^*(z)\bmod Q(z)$ is bounded by $2^{-m/k}$.
\end{lem}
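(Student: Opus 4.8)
The statement is a counting (pigeonhole) argument in the spirit of its source~\citep{chen2016fourier}, and I would prove it as follows. Fix $m=\Om(k^2\ln k)$ and write $Q(z)=\prod_{i=1}^k(z-z_i)$, a monic degree-$k$ polynomial whose coefficients, being elementary symmetric functions of the $z_i$, are bounded by $2^k$ in absolute value. Consider the finite family
\[
\mathcal F:=\set{\textstyle\sum_{i=0}^m c_iz^i}{c_i\in\{0,1,\dots,5\}},\qquad |\mathcal F|=6^{m+1},
\]
together with the map $\Phi:\mathcal F\to\C[z]/(Q)$ sending $P$ to its remainder $P\bmod Q$; this remainder has degree $<k$, so $\Phi(P)$ is a vector of $k$ complex numbers, i.e.\ a point of $\R^{2k}$. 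If each real coordinate of $\Phi(P)$ is bounded in absolute value by some $\Lambda$ uniformly over $P\in\mathcal F$, then I partition $[-\Lambda,\Lambda]^{2k}$ into subcubes of side $\delta:=2^{-m/k}$, of which there are at most $(2\Lambda/\delta+1)^{2k}$. As soon as $|\mathcal F|=6^{m+1}$ exceeds this number, two distinct $P_1,P_2\in\mathcal F$ land in the same subcube, so that $P^*:=P_1-P_2$ --- which is not identically zero, has degree $\le m$, and has integer coefficients lying in $\{-5,\dots,5\}\subseteq[-10,10]$ --- satisfies $P^*\bmod Q=\Phi(P_1)-\Phi(P_2)$, whence every coefficient of $P^*\bmod Q$ has absolute value at most $\delta=2^{-m/k}$. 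This is the desired conclusion.

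The quantitative content is twofold. First, the box bound: writing $z^j\equiv\sum_{l=0}^{k-1}a_{j,l}z^l\pmod Q$, the row $(a_{j,l})_l$ is a fixed row of the $j$-th power of the companion matrix of $Q$, so setting $\Gamma:=\max_{0\le j\le m}\max_l|a_{j,l}|$ we may take $\Lambda=5(m+1)\Gamma$, since each $P\in\mathcal F$ is an $\le m$-term combination of monomials with coefficients $\le 5$. Because $\max_i|z_i|\le 1$, this companion matrix has spectral radius $\le 1$, so $\Gamma$ is finite; for the $z_i$ relevant to the application (eigenvalues of a bounded matrix, so that $Q$ divides a characteristic polynomial with bounded, essentially integral coefficients) a root-separation bound gives $\Gamma\le 2^{O(k\log k)}$. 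Second, the count: taking base-$2$ logarithms, the pigeonhole condition $6^{m+1}>(2\Lambda/\delta+1)^{2k}$ reduces to $(m+1)\log_2 6-2m>2k\log_2\!\pa{2\Lambda+2^{-m/k}}$, whose left side is $\Theta(m)$ and whose right side is $O\pa{k\log m+k\log\Gamma}=O(k^2\log k)$; this is precisely why the hypothesis $m=\Om(k^2\ln k)$ appears, and any sufficiently large hidden constant makes the inequality hold.

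The step I expect to be the genuine obstacle is the box bound $\Gamma$. For arbitrary $z_i$ in the closed unit disc the coefficients of $z^j\bmod Q$ can grow like the reciprocal of the minimal root separation of $Q$ --- already visible at $k=2$ with roots $1$ and $1-\epsilon$, where $z^j\bmod Q$ has coefficients of size $\asymp 1/\epsilon$ for $j\gg 1/\epsilon$ --- so the power-basis form of the statement tacitly requires the $z_i$ to carry a separation guarantee, and this is the source of the ``worse polynomial bounds'' mentioned in the surrounding text. (If one wishes to avoid any such input, taking instead $\Phi(P)=(P(z_1),\dots,P(z_k))$, which lies unconditionally in $[-5(m+1),5(m+1)]^{2k}$ because $|z_i|\le 1$, yields the same conclusion read in the Chinese-remainder basis of $\C[z]/(Q)$, i.e.\ $|P^*(z_i)|\le 2^{-m/k}$ for every $i$; repeated roots are handled by replacing evaluations with Hermite data or by a small perturbation.) Everything else --- the companion-matrix estimate, the grid count, and matching the threshold to $m=\Om(k^2\ln k)$ --- is routine once $\Gamma$ is controlled.
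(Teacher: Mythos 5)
A preliminary remark: the paper does not actually prove this lemma --- it is quoted from \cite{chen2016fourier} with only the comment that the argument is a pigeonhole and extends verbatim from the unit circle to the closed unit disc. Judged on its own terms, your framework is the right one and matches the cited source: pigeonhole over the $6^{m+1}$ polynomials with coefficients in $\{0,\dots,5\}$, a grid of side $2^{-m/k}$ on the $2k$ real coordinates of the residue $P\bmod Q$, and the difference $P_1-P_2$ producing a nonzero \emph{integer} polynomial with coefficients in $[-10,10]$ (the integrality is what Lemma~\ref{l:poly-small-multiple} later needs to keep $|\al_n-\ga_n|$ bounded below). Your counting is also set up correctly: $(m+1)\log_2 6-2m\approx 0.585\,m$ must dominate $2k\log_2(2\Lambda)$, which is exactly where $m=\Om(k^2\ln k)$ enters.

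The gap is the box bound $\Lambda$, which you correctly identify as the crux but then do not close; moreover both of your proposed resolutions fail. First, the lemma is stated for arbitrary $z_1,\dots,z_k$ in the closed unit disc (repeated roots included), so no separation hypothesis may be smuggled in; and your own $k=2$ example does not show what you claim. For roots $1$ and $1-\epsilon$ the linear coefficient of $z^j\bmod Q$ is $\fc{1-(1-\epsilon)^j}{\epsilon}\le\min\{j,1/\epsilon\}\le m$ for every exponent $j\le m$ that actually occurs --- the $1/\epsilon$ blow-up happens only for $j\gg 1/\epsilon$, which is outside the relevant range once $\epsilon<1/m$. Second, your fallback of pigeonholing on the evaluation vector $(P(z_1),\dots,P(z_k))$ proves a strictly weaker statement: $|P^*(z_i)|\le 2^{-m/k}$ does \emph{not} control the coefficients of $P^*\bmod Q$ when roots cluster (interpolating the data $+\delta,-\delta$ at $1$ and $1-\epsilon$ gives a linear coefficient $2\delta/\epsilon$), and it is the coefficient bound, not the evaluation bound, that Lemma~\ref{l:poly-small-multiple} consumes. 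The missing ingredient is an \emph{unconditional} estimate in the power basis, which follows from the Newton/divided-difference form of the remainder: $z^j\bmod Q=\sum_{l=0}^{k-1}h_{j-l}(z_1,\dots,z_{l+1})\prod_{i=1}^{l}(z-z_i)$, where $h_d$ denotes the complete homogeneous symmetric polynomial and $h_d=0$ for $d<0$. Since $|z_i|\le1$ we get $|h_{j-l}(z_1,\dots,z_{l+1})|\le\binom{j}{l}$, and each $\prod_{i=1}^{l}(z-z_i)$ has coefficients bounded by $2^l$, so every coefficient of $z^j\bmod Q$ is at most $\sum_{l=0}^{k-1}\binom{j}{l}2^l\le k(2m)^{k-1}$ for all $j\le m$, regardless of root separation. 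Hence $\Lambda\le 5(m+1)\,k(2m)^{k-1}$ and $\log_2\Lambda=O(k\log m)$, after which your pigeonhole count closes for $m=\Om(k^2\ln k)$ and the proof is complete.
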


Note this is a corrected statement of Claim A.4. (The original statement erroneously had $2^{-m}$ instead of $2^{-m/k}$.) The proof is an elegant and delightful application of the pigeonhole principle. Note that \cite{chen2016fourier} state the theorem for $z_1,\ldots, z_k$ on the unit circle, but the exact same proof goes through if they are allowed to be inside the unit circle. \\

\begin{proof}[Proof of Lemma~\ref{l:poly-small-multiple}]
Take $m=\Te(k^2\ln k)$, so that $2^{-m/k}\le \ep:=\rc{12}$, and so that Lemma~\ref{l:php} applies. 
Let $P^*(z) =\sumz in \al_i z^i$ be as in Lemma~\ref{l:php}, with $n\le m$ and nonzero leading coefficient $\al_n\ne 0$, and let $r(z) =  P^*(z)\bmod Q(z) = \sumz i{k-1}\ga_iz^i$. Set $\ga_i=0$ for $i\ge k$. 
Because $r(z)$ is the residue, $P^*(z)-r(z)$ is divisible by $\prodo ik (z-z_i)$. 

Let 
$$P(z) = \fc{P^*(z)-r(z)}{\al_n-\ga_n}.$$
Note that $P(z)$ is monic, and every coefficient is bounded in absolute value by $\fc{10+\ep}{1-\ep}\le 11$. This is the desired polynomial.
\end{proof}

Using the multiple of the characteristic polynomial given by Lemma~\ref{l:poly-small-multiple} instead of the characteristic polynomial in Lemma~\ref{l:ch} gives the following result on anomaly-freeness.

\begin{lem}
Given Assumptions~\ref{asm:1} with $B=O$, suppose that $A\in \R^{d\times d}$ is diagonalizable as $A=VDV^{-1}$ where $\ve{V}\ve{V^{-1}}\le C_A$.
Suppose that $|w^\top x_t|=\Om(d^2(\ln d)C_AC_\xi)$ and $t=\Om(d^2\ln d)$. 
Then for any unit vector $w\in\R^d$, 
there exist $O\pa{\min\bc{\fc{|w^\top x_t|}{d^4(\ln d)^2 C_AC_\xi}, \fc{t}{d^4(\ln d)^2}}}$ values of $s$, $0\le s\le t-1$, such that 
\begin{align*}
|w^\top x_s| &\ge \Om\pf{|w^\top x_t|}{d^2\ln d}.
\end{align*}
\end{lem}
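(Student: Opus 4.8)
The plan is to reprise the proof of Lemma~\ref{l:ch} essentially line by line, replacing the characteristic polynomial of $A$ (whose coefficients can be as large as $2^d$) by the monic, small-coefficient multiple supplied by Lemma~\ref{l:poly-small-multiple}. Since $\rho(A)\le 1$, the eigenvalues $z_1,\dots,z_d$ of $A$ all lie in the closed unit disk, so Lemma~\ref{l:poly-small-multiple} (with $k=d$) produces a polynomial $P(z)=\sum_{j=0}^{n}c_jz^j$ of degree $n=O(d^2\ln d)$ with $c_n=1$, $|c_j|\le 11$ for all $j$, and $\prod_{i=1}^d(z-z_i)\mid P(z)$. As $\prod_i(z-z_i)$ is the characteristic polynomial of $A$, Cayley--Hamilton gives $P(A)=0$, hence $A^{n}x_{t-n}=-\sum_{j=0}^{n-1}c_jA^jx_{t-n}$; unrolling the recurrence exactly as in~\eqref{e:ch-recur} expresses
\[
x_t = -\sum_{j=0}^{n-1} c_j\, x_{t-n+j} + v^{(1)},\qquad v^{(1)} := \sum_{j=0}^{n-1} c_j\sum_{\tau=1}^{j} A^{j-\tau}\xi_{t-n+\tau} + \sum_{\tau=1}^{n} A^{n-\tau}\xi_{t-n+\tau}.
\]
Using $\ve{A^{\ell}\xi}\le\ve{V}\ve{V^{-1}}\ve{\xi}\le C_AC_\xi$ (as $\rho(A)\le1$ forces $\ve{D^{\ell}}_2\le1$) and $\sum_j|c_j|\le 11(n+1)$, the residual obeys $\ve{v^{(1)}}\le O(n^2C_AC_\xi)=O(d^4(\ln d)^2C_AC_\xi)$.

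Next I would run the single-index step: provided $|w^\top x_t|$ exceeds a suitable $\poly(d,\ln d)\cdot C_AC_\xi$ (so that $|w^\top x_t|\ge 2\ve{v^{(1)}}$ — this is exactly the lemma's hypothesis on $|w^\top x_t|$, up to the precise power), Lemma~\ref{c:exists-large} applied to $w^\top x_t=-\sum_{j=0}^{n-1}c_j\,w^\top x_{t-n+j}+w^\top v^{(1)}$ with $L_a=\sum_j|c_j|=O(n)$ and $L_v=\ve{v^{(1)}}$ yields an index $0\le s\le t-1$ with $|w^\top x_s|\ge (|w^\top x_t|-\ve{v^{(1)}})/\sum_j|c_j|=\Omega(|w^\top x_t|/n)=\Omega(|w^\top x_t|/(d^2\ln d))$, which is the claimed projection size. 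To turn this into \emph{many} indices, I would apply the same device to the $k$-step systems $x_t=A^kx_{t-k}+\xi_t^{(k)}$ with $\xi_t^{(k)}=\sum_{j=1}^k A^{k-j}\xi_{t-k+j}$, $\ve{\xi_t^{(k)}}\le kC_AC_\xi$, for $k=1,2,\dots$. The eigenvalues $z_i^k$ of $A^k$ still lie in the unit disk, so Lemma~\ref{l:poly-small-multiple} again supplies a monic degree-$n_k$ multiple of the characteristic polynomial of $A^k$, $n_k=O(d^2\ln d)$, with coefficients $\le 11$; unrolling gives a recurrence of the same shape $x_t=-\sum_{j=0}^{n_k-1} c_j^{(k)}x_{t-k(n_k-j)}+v^{(k)}$ with $\ve{v^{(k)}}\le O(k\,d^4(\ln d)^2C_AC_\xi)$, the extra factor of $k$ coming from the $k$-step noise accumulation. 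Letting $k$ range over $1\le k\le K:=\min\{\lfloor c\,|w^\top x_t|/(d^4(\ln d)^2C_AC_\xi)\rfloor,\ \lfloor t/(d^2\ln d)\rfloor\}$ for a small absolute constant $c$ keeps $|w^\top x_t|\ge 2\ve{v^{(k)}}$ and the index window inside $[0,t]$, so each $k$ contributes one index $s(k)$ with $|w^\top x_{s(k)}|\ge\Omega(|w^\top x_t|/(d^2\ln d))$.

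Finally, as in Lemma~\ref{l:ch}, any fixed index can appear in at most $O(d^2\ln d)$ of these sequences — if $t-s=kj$ with $j\le n_k$ then $k$ is a divisor of $t-s$ with quotient $\le n_k$, and there are $\le n_k$ such $k$ — so at least $\lfloor K/(d^2\ln d)\rfloor=\Omega\!\big(\min\{|w^\top x_t|/(d^6(\ln d)^3C_AC_\xi),\ t/(d^4(\ln d)^2)\}\big)$ of the collected indices are distinct, which is the asserted $(c,c_1,c_2,1)$-anomaly-freeness up to the exact powers of $d$ and $\ln d$. The routine parts (telescoping identity, residual-norm bookkeeping, divisor/overcounting count) are identical in form to Lemma~\ref{l:ch}. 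The one genuinely new ingredient — and the place to be careful — is the invocation of Lemma~\ref{l:poly-small-multiple}: one must check that its underlying pigeonhole argument (Lemma~\ref{l:php}, from \cite{chen2016fourier}) goes through when the roots are allowed strictly inside the unit disk rather than only on the circle (it does, verbatim), and track how the degree $n=O(d^2\ln d)$ feeds quadratically into $\ve{v^{(k)}}$ and linearly into the overcounting divisor. That propagation is exactly why this route avoids any number-theoretic input (no Lemma~\ref{l:prime-prod}) but yields worse polynomial factors than the volume-doubling argument. Plugging the resulting anomaly-free bound into Theorem~\ref{thm:ols-regret} then gives a $\poly(d)$-regret analogue of Theorem~\ref{t:main}.
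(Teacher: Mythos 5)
Your proposal follows essentially the same route as the paper's own proof of this lemma: substitute the monic, small-coefficient multiple of the characteristic polynomial supplied by Lemma~\ref{l:poly-small-multiple} into the unrolling argument of Lemma~\ref{l:ch}, apply Lemma~\ref{c:exists-large} to each $k$-step system $x_t = A^k x_{t-k} + \xi_t^{(k)}$, and control overcounting by divisor counting (so no primorial bound is needed). One substantive remark in your favor: your residual bound $\ve{v^{(k)}} = O(k\,n^2\,C_A C_\xi)$ with $n = O(d^2\ln d)$ is what the triangle inequality actually yields (since $\sum_i |c_i|\cdot i = O(n^2)$), whereas the paper's proof records $\ve{v^{(k)}}\le 12\,k\,n\,C_AC_\xi$, apparently dropping a factor of $n$; as a result your count of distinct indices, $\Om\pa{\min\{|w^\top x_t|/(d^6(\ln d)^3 C_AC_\xi),\ t/(d^4(\ln d)^2)\}}$, is weaker in its first slot than the lemma's stated $d^4(\ln d)^2$, but it is the bound the computation actually supports, and either version gives the $(c,c_1,c_2,1)$-anomaly-freeness with $\poly(d)$ constants needed for Theorem~\ref{thm:ols-regret}.
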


\begin{proof}
Because $\rh(A)\le 1$, all zeros of the characteristic polynomial of $A$ have absolute value $\le 1$. By Lemma~\ref{l:poly-small-multiple}, there exists a multiple $p(x) = \sumz im a_i^{(1)}x^i$ of the characteristic polynomial of $A$ such that:
\begin{itemize}
    \item $p$ has degree $n=O(k\ln k)$. 
    \item $p$ is monic ($a_n^{(1)}=1$).
    \item All coefficients are bounded: $|a_i^{(1)}|\le 11$. 
\end{itemize}
Then by unfolding the recurrence as in~\eqref{e:ch-recur}, we obtain
\begin{align*}
    x_t &=\sum_{i=0}^{n-1}-a_i^{(1)} x_{t-n+i} \ub{+ \sum_{i=0}^{n-1}a_i^{(1)}\sum_{\tau=1}^{i} A^{i-\tau} \xi_{t-n+\tau} + \sum_{\tau=1}^n A^{d-\tau}\xi_{t-n+\tau}}{=:v^{(1)}}
\end{align*}
Note that $\ve{A^i\xi}\le \ve{VD^iV^{-1}}\ve{\xi}\le C_A \ve{\xi} \leq C_AC_\xi$. 
We write $x_t = -\sum_{i=1}^{n-1}a_i^{(1)} x_{t-n+i} + v^{(1)}$ where
\begin{align*}
    \|v^{(1)}\|
    &= \left\| \sum_{i=0}^{n-1}a_i^{(1)}\sum_{\tau=1}^{i} A^{i-\tau} \xi_{t-n+\tau} + \sum_{\tau=1}^n A^{n-\tau}\xi_{t-n+\tau} \right\|
    \\ 
    &\leq \sum_{i=0}^{n-1}|a_i^{(1)}| \sum_{\tau=1}^{i} 
    C_A
    \|\xi_{t-n+\tau}\|
    + \sum_{\tau=1}^{n} 
    C_A
    \|\xi_{t-n+\tau}\|
    \\
    &\leq 
    11(n-1)C_AC_\xi + nC_AC_\xi\le 12nC_AC_\xi
\end{align*}
If $w\in \R^d$ is a unit vector such that $|w^\top x_t|\ge 24nC_AC_\xi$, we have $|w^\top x_t| \ge 2\ve{v^{(1)}}$. Noting that 
$w^\top x_t = \sumz i{n-1} - a_i^{(1)} x_{t-n+i} + v^{(1)}$, by Lemma~\ref{c:exists-large} there exists an index $0\le i\le n-1$ such that 
\begin{align*}
    |w^\top x_{t-n+i}| \geq \frac{|w^\top x_{t}| - \|v^{(1)}\|
    }{\sum_{i=0}^{n-1} |a_i^{(1)}|} \ge 
    \fc{|w^\top x_t|}{2\sumz i{n-1} |a_i^{(1)}|}
    \ge \fc{|w^\top x_t|}{22n}
\end{align*}
In order to obtain many large past $x$'s, we apply the same argument on sequences $x_t, x_{t-k}, \ldots, x_{t-kd}$ by considering the recurrence
\begin{align*}
    x_t &= A' x_{t-k} + \xi_t^{(k)}\\
    A'&=A^k\\
    \xi_t^{(k)} &= \sumo jk A^{k-j} \xi_{t-k+j}.
\end{align*}
Let $p_k(x)=\sumz in a_{i}^{(k)} x^i$ be a multiple of the characteristic polynomial of $A^k$ given by Lemma~\ref{l:poly-small-multiple}, for some $n=O(d^2\ln d)$. 
We obtain $x_t=-\sum_{i=1}^{n-1}a_i^{(k)} x_{t-k(n-i)}+v^{(k)}$ with 
\begin{align}
    \|v^{(k)}\|
    &= \left\| \sum_{i=0}^{n-1}a_i^{(k)}\sum_{\tau=1}^{i} A^{\prime i-\tau} \xi_{t-nk+\tau k}^{(k)} + \sum_{\tau=1}^n A^{\prime  n-\tau}\xi_{t-nk+\tau k}^{(k)} \right\|
    \\ 
    &\leq 
    11k(n-1)C_AC_\xi + knC_AC_\xi\le  12knC_AC_\xi
\end{align}
and $\sumz i{n-1} |a_i^{(k)}|\leq 11n$.
We then pick $k=1, 2, \ldots, O\pa{\min\bc{\fc{|w^\top x_t|}{d^2(\ln d) C_AC_\xi},\fc{t}{d^2\ln d} }}$. For each choice of $k$, we know by design that $|w^\top x_t|\geq 2\|v^{(k)}\|_2$, and therefore there must exist an $x$ in the sequence $x_{t-k}, \ldots, x_{t-kd}$ such that $|w^\top x| = \Om\pf{|w^\top x_t|}{d^2\ln d}$. In this way, we are able to collect in total $L=O\pa{\min\bc{\fc{|w^\top x_t|}{d^2(\ln d) C_AC_\xi},\fc{t}{d^2\ln d} }}$ many such $x$'s. 
To finish the argument, we note that out of the $L$ collected $x$'s, there are at least $O\pf{L}{d^2\ln d}$ distinct ones, since one $x$ can appear in at most $O(d^2\ln d)$ different sequences.
\end{proof}

\section{Open Questions}\label{a:future}
Several fundamental questions come to mind:
\begin{enumerate}
    \item \emph{Is the $T^{\frac{2r+1}{2r+2}}$ rate optimal?} Even in the diagonalizable ($r=1$) case, this is unresolved.
    \item \emph{What is the rate for partially observed LDS when the noise is not Gaussian?}
    We stated Theorem~\ref{t:kalman} for Gaussian noise only, but a similar result will hold as long as at steady state, $\E[y_t|\cal F_{t-1}]$ is given by a linear function of $y_{t-1:0}$, $u_{t-1:0}$, and the estimated state $x_0^-$. This is required in order for the random variable $y_t|\cal F_{t-1}$ to be a linear function of past observations and inputs, plus a random variable $\ze_t$ \emph{with zero mean}. In general, if the noise $\xi_t$ is not Gaussian, then $\ze_t$ is not zero-mean (even if $\xi_t$ is zero-mean). We use the same machinery as in the proof of Theorem~\ref{t:main-stoch} to conclude Theorem~\ref{t:kalman}, so our proof strategy cannot handle arbitrary zero-mean noise $\xi_t$. 
    
For non-Gaussian zero-mean noise $\xi_t$, we can instead treat the $\ze_t$ as adversarial noise, and use the machinery behind Theorem~\ref{t:main} to obtain a $T^{\fc{2r+1}{2r+2}}$ regret bound. It is an interesting question whether we can obtain polylogarithmic regret with respect to the best linear filter in this case.
\item \emph{Can we obtain bounds depending on system order $d$ rather than rollout length $\ell$?} Theorem~\ref{t:kalman} depends polynomially on the sufficient rollout length $\ell$, rather than the intrinsic dimensionality of the problem given by $d$ amd $m$. This seems to be a limitation of using the improper autoregressive approach; can we do better using techniques from system identification?
\end{enumerate}
We leave these for future work.


\end{document}